\documentclass[sigconf]{acmart}
\usepackage{amsmath}
\usepackage{amsthm}
\usepackage{graphicx}
\usepackage{wrapfig}
\usepackage{subfig}
\usepackage{mathtools}
\usepackage{relsize}
\usepackage{pifont}
\theoremstyle{plain}
\newtheorem{theorem}{Theorem}[section]
\newtheorem{proposition}[theorem]{Proposition}
\newtheorem{lemma}[theorem]{Lemma}

\newtheorem{definition}[theorem]{Definition}
\newtheorem{assumption}[theorem]{Assumption}
\newtheorem{remark}[theorem]{Remark}
\AtBeginDocument{%
  }
\newcommand{\cmark}{\ding{51}}
\setcopyright{acmlicensed}
\copyrightyear{2026}
\acmYear{2026}
\setcopyright{cc}
\setcctype{by}
\acmConference[KDD 2026] {Proceedings of the 32nd ACM SIGKDD Conference on Knowledge Discovery and Data Mining V.2}{August 9--13, 2026}{Jeju Island, Republic of Korea.}
\acmBooktitle{Proceedings of the 32nd ACM SIGKDD Conference on Knowledge Discovery and Data Mining V.2 (KDD 2026), August 9--13, 2026, Jeju Island, Republic of Korea}
\acmISBN{XXX-X-XXXX-XXXX-X/2026/08}
\acmDOI{XXXXXXX.XXXXXXX}
\begin{document}

\title{Treatment Effect Estimation with Differentiated Networked Effect on Graph Data}

\author{Xiaofeng Lin}
\authornote{The corresponding author.}
\affiliation{%
\institution{Kyoto University}
  \city{Kyoto} 
  \country{Japan}
}
\email{lxf@ml.ist.i.kyoto-u.ac.jp }
\orcid{https://orcid.org/0000-0002-4147-3037}

\author{Han Bao}
\affiliation{%
\institution{The Institute of Statistical Mathematics}
  \city{Tokyo} 
  \country{Japan}
}
\affiliation{%
\institution{Tohoku University}
  \city{Sendai} 
  \country{Japan}
}
\affiliation{%
\institution{RIKEN AIP}
 \city{Tokyo} 
 \country{Japan}
}
\email{bao.han@ism.ac.jp}
\orcid{https://orcid.org/0000-0002-4473-2604}

\author{Hisashi Kashima}
\affiliation{%
\institution{Kyoto University}
  \city{Kyoto}
  \country{Japan}
}
\email{kashima@i.kyoto-u.ac.jp}
\orcid{https://orcid.org/0000-0002-2770-0184}

\renewcommand{\shortauthors}{Lin et al.}

\begin{abstract}
 Estimating individual treatment effect (ITE) from observational graph data is crucial for decision-making in the fields such as commerce and medicine.
This task is challenging due to \textit{interference}, where individual outcomes can be influenced by the treatments and covariates of their neighbors.
Existing methods attempt to model such interference for accurate ITE estimation. 
However, a critical issue is often overlooked: \textit{differentiated networked effect} (DNE), an effect caused by local networks consisting of neighbors with varying importance and scales. 
Capturing DNE is vital; otherwise, we will end up with imprecise ITE estimation due to an erroneous characterization of interference, which can result in misguided decisions.
To address this challenge, we propose a novel interference modeling mechanism that incorporates two partial attention mechanisms and a message amplifier.
The partial attention mechanisms automatically estimate the importance of different neighbors in contributing to interference, while the message amplifier adjusts the results of the interference modeling mechanism based on the scale of neighbors, 
all of which enables the model to capture DNE. 
Experiments on three real-world graphs demonstrate that our methods outperform existing approaches for ITE estimation from graph data, which corroborates the importance of explicitly capturing DNE.
\end{abstract}

\begin{CCSXML}
<ccs2012>
   <concept>
       <concept_id>10002950.10003648.10003649.10003655</concept_id>
       <concept_desc>Mathematics of computing~Causal networks</concept_desc>
       <concept_significance>500</concept_significance>
       </concept>
   <concept>
       <concept_id>10002950.10003624.10003633.10010917</concept_id>
       <concept_desc>Mathematics of computing~Graph algorithms</concept_desc>
       <concept_significance>500</concept_significance>
       </concept>
   <concept>
       <concept_id>10002951.10003260.10003272</concept_id>
       <concept_desc>Information systems~Online advertising</concept_desc>
       <concept_significance>500</concept_significance>
       </concept>
   <concept>
       <concept_id>10002951.10003260.10003282.10003292</concept_id>
       <concept_desc>Information systems~Social networks</concept_desc>
       <concept_significance>500</concept_significance>
       </concept>
 </ccs2012>
\end{CCSXML}

\ccsdesc[500]{Mathematics of computing~Causal networks}
\ccsdesc[500]{Mathematics of computing~Graph algorithms}
\ccsdesc[500]{Information systems~Social networks}

\keywords{Causal Inference; Treatment Effect Estimation;  Interference}

\maketitle

\section{Introduction}\label{sec:int}
Treatment effect estimation from graph data has been applied to decision-making in various areas, such as medicine~\cite{chang2023estimating,ma2022assessing,schnitzer2022estimands} and commerce~\cite{nabi2022causal}.
For example, it enables business owners to assess whether an advertisement stimulates a customer to purchase.
This supports making reasonable decisions on promotional strategies. A crucial task in this context is estimating the individual treatment effect (ITE),\footnote{Sometimes known as conditional average treatment effect (CATE)~\cite{ma2022learning, pmlr-v70-shalit17a}.} which quantifies the differences in the outcome of an individual with and without treatment~\cite{pmlr-v70-shalit17a}.

We aim to estimate ITE from observational graph data, which typically include covariates,   treatments, outcomes of individuals, and a network structure among individuals.
In this case, the outcomes of individuals can receive influence from treatments and covariates of their neighbors~\cite{pmlr-v130-ma21c,rakesh2018linked}, a phenomenon known as \emph{interference}~\cite{rakesh2018linked}.
Such interference can propagate among individuals and their multi-hop neighbors, 
which is referred to as \emph{networked interference}~\cite{pmlr-v130-ma21c}.
The multi-hop neighbors of an individual, along with connections among them, form a local network that contributes to networked interference received by the individual. 
Properly modeling networked interference is critical; otherwise, we end up with unreasonable decisions due to inaccurate ITE estimation.

\begin{figure}
	\centering
\includegraphics[width=1.0\columnwidth]{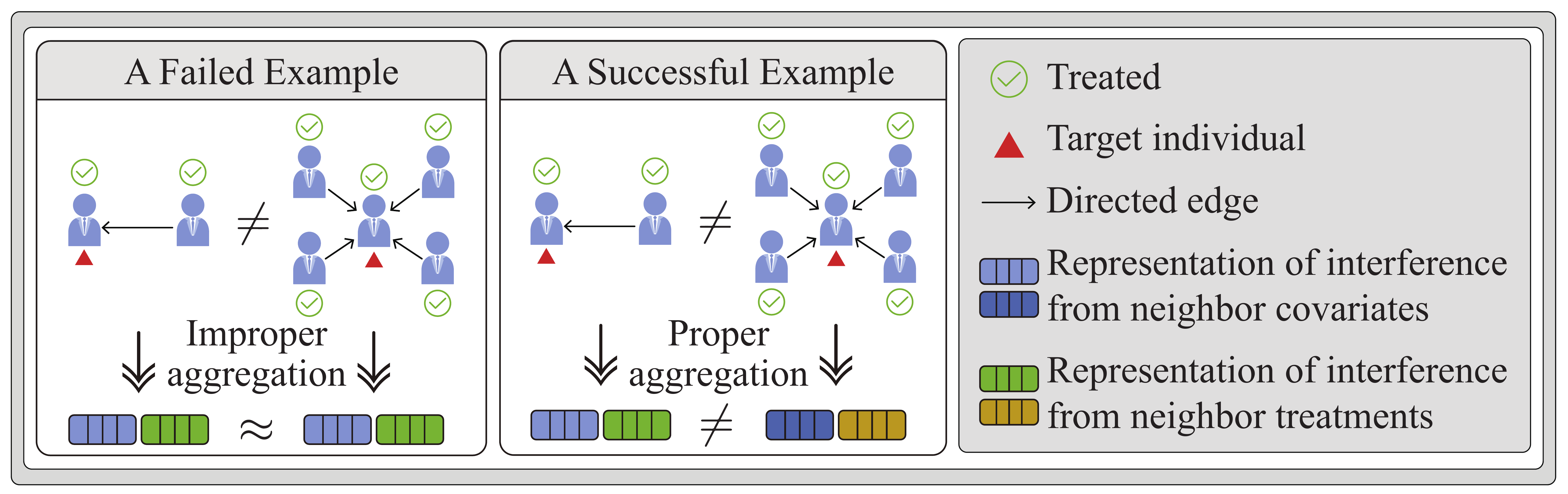} 
  \caption{
    Comparison of representations generated by improper (left) and proper (right) interference modeling mechanisms. 
    Individuals with similar clothes have similar covariates.
    Each example shows two similar target individuals exposed to local networks with different scales of neighbors, which leads to different levels of interference. Proper  modeling mechanisms capture this by generating distinct representations. In contrast, improper mechanisms, such as mean aggregation, cannot generate distinct representations.
  }  \label{exp:fails}
  \end{figure}

\begin{figure*}
	\centering
\includegraphics[width=1.5\columnwidth]{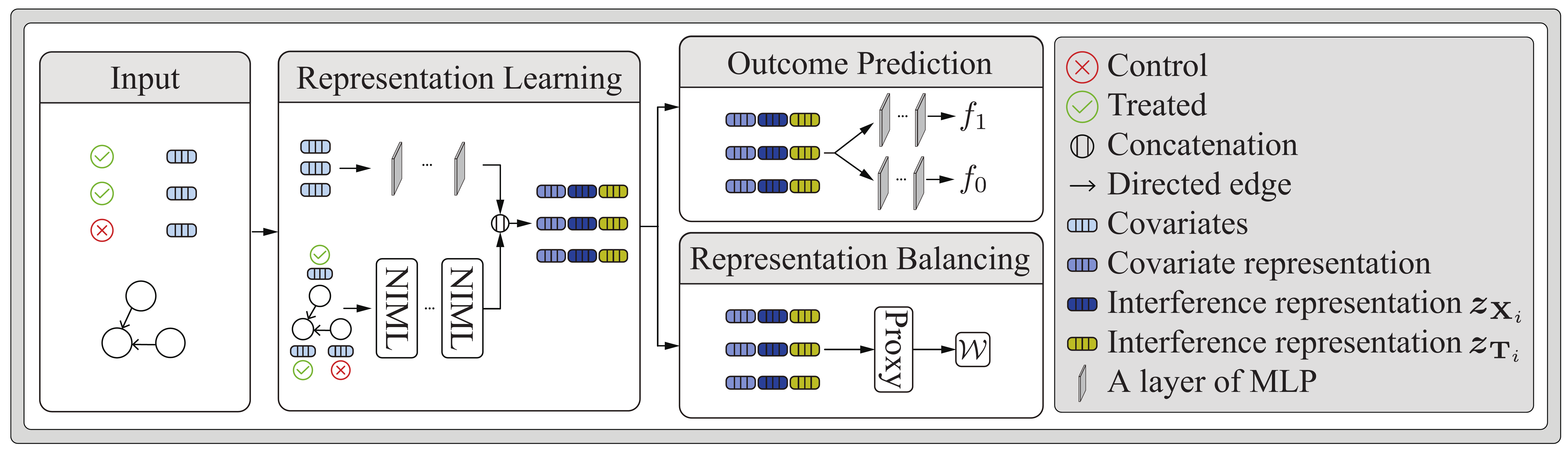}
\caption{Architecture of GITE. Here, we show an example of three individuals. NIML represents an NIM layer. %
    }
 \label{fig:arGITE}
\end{figure*}

Despite the significant contributions of previous studies in demonstrating the effectiveness of modeling interference for ITE estimation from observational graph data, their modeling mechanisms still have limitations in properly capturing networked interference, which can result in imprecise ITE estimation.
Several methods model interference (detailed in Section~\ref{sec:relatedwork}) by applying a mean aggregation or graph convolutional network (GCN)~\cite{kipf2016semi}, such as  \citet{pmlr-v130-ma21c}, \citet{jiang2022estimating}, \citet{chen2024doubly}, and \citet{LIN2024SITE}.
However, such aggregation mechanisms cannot fully capture networked interference, since a critical issue is overlooked: \textit{differentiated networked effect} (DNE),
an effect caused by local networks consisting of neighbors with varying  importance and scales.
Specifically, DNE consists of two key sub-issues.
(I) The importance of different neighbors in contributing to interference varies~\cite{huang2023modeling,lin2023estimating, ma2022learning,adhikari2025inferring}.\footnote{In literature~\cite{huang2023modeling,lin2023estimating, ma2022learning,adhikari2025inferring}, it is called  heterogeneous interference or influence. However, heterogeneous interference or influence do not consider the sub-issue (II) of DNE. Therefore, DNE can be considered as a  refinement issue of them.} 
For instance,  purchase behaviors of customers are usually more significantly influenced by their closer friends than by others.
(II) The scale of neighbors varies, leading to different levels of  interference (see Figure~\ref{exp:fails}). An individual with many neighbors may experience more severe interference than one with few neighbors.
For example, purchase behaviors of customers may be more significantly influenced by advertisements shared by many friends than by those shared by only a few friends.  
Methods based on mean aggregation or GCN do not address both issues (I) and (II), as they lack explicit mechanisms to automatically estimate the importance of interference from each neighbor, and  may fail to generate distinct representations for interference received by individuals from their local networks with different scales of neighbors.
Although a line of work takes issue (I) into account by applying a graph attention mechanism (GAT)~\cite{2017gat} to estimate the importance of different neighbors~\cite {huang2023modeling,lin2023estimating,ma2022learning,zhao2024learning}, their methods also do not fully address issue (II), as they may degenerate to a mean aggregation when individuals are similar (see Figure~\ref{exp:fails}).
A detailed proof in Appendix~\ref{proof:attfail} shows that capturing DNE remains a challenge for most existing interference modeling methods.
If DNE is not properly captured by jointly addressing issues (I) and (II), ITE estimation deteriorates and the subsequent decision making is misguided.

To overcome the challenge  of capturing DNE, we propose graph-based individual treatment effect estimation (GITE),
which models the propagation of networked interference while capturing DNE.
A novel networked interference modeling (NIM) layer forms the core of our approach. It is designed to capture DNE through two partial attention mechanisms and a message amplifier.
Specifically, we design two partial attention mechanisms: individual partial attention (IPAtt) and structure partial attention (SPAtt), which are intended to adaptively capture the varying contributions of neighbors to interference based on two key factors that influence their relative importance.
IPAtt estimates the individual partial importance of interference between two individuals based on their interference-related information, which assists in addressing issue (I).
SPAtt estimates the structure partial importance of interference between two individuals based on the structures of their local networks, which 
assists in addressing both issues (I) and (II). 
Subsequently,  the NIM layer conducts aggregations using the estimated partial importance and integrates the aggregated results adaptively through a learnable summary function. 
Each partial attention mechanism can be implemented using either GAT~\cite{2017gat} or the attention mechanism of Transformer~\cite{vaswani2017attention,NEURIPS2021_f1c15925}.  To precisely capture structural information on the local network of every individual for applying the SPAtt mechanism, we apply a graph isomorphism network (GIN)~\cite{xu2019powerful}. 
To address issue (II), we design a message amplifier to vary integrated results based on the degree of individuals, which is inspired by~\citet{corso2020principal}.
Details of the NIM layer are described in Section~\ref{sec:RL}. 
 Furthermore, we propose a representation balancing strategy for ITE estimation from observational graph data, as detailed in Section~\ref{sec:RB}. 
 We theoretically analyze the error bound of ITE estimation based on this strategy in Appendix~\ref{app:bound}.

 We summarize three contributions of this study, as follows: \vspace{-0pt}
 \begin{itemize}
     \item We propose the NIM layer to address the challenging issue of DNE, and further introduce a representation balancing strategy for ITE estimation from observational graph data.
     \item We discuss that capturing DNE remains a challenge for most existing methods (see Appendix~\ref{proof:attfail}) and theoretically analyze the error bound of ITE estimation based on the proposed balancing strategy (see Appendix~\ref{app:bound}).
     \item Results of extensive experiments reveal that the proposed method outperforms existing methods in ITE estimation with networked interference, which suggests the importance of capturing DNE.
 \end{itemize}
\section{Related work}\label{sec:relatedwork}
\textbf{ITE estimation from observational data with interference.} 
Although many studies model interference by assuming a \textit{neighbor interference}, where interference exists among close neighbors only~\cite{cai2023generalization,chen2024doubly,huang2023modeling,jiang2022estimating,rakesh2018linked,RePEc:arx:papers:1906.10258,wucausal}, real-world data often involves networked interference, where interference propagates widely among individuals and their multi-hop yet influential  neighbors~\cite{lin2023estimating,LIN2024SITE,ma2022learning,pmlr-v130-ma21c,sui2024invariant}. 
Specifically, existing methods model neighbor interference by applying a mean aggregation~\cite{cai2023generalization,chen2024doubly,doi:10.1080/01621459.2020.1768100,forastiere2022estimating,jiang2022estimating}, 
GCN~\cite{cai2023generalization,chen2024doubly,huang2023modeling,jiang2022estimating,wucausal}, or GAT~\cite{huang2023modeling,zhao2024learning}.
Several studies model the propagation of  networked interference through a mean aggregation~\cite{pmlr-v130-ma21c} and GCN~\cite{pmlr-v130-ma21c, adhikari2025inferring}. 
To accelerate the training of GNN-based estimators, \citet{LIN2024SITE} aggregates interference-related information before training. 
To estimate ITE from more convoluted graphs,
\citet{ma2022learning}  and \citet{lin2023estimating} propose methods to estimate ITE from hypergraphs and heterogeneous graphs, respectively.
 They use GAT~\cite{2017gat} to model interference when they are applied to an ordinary graph. Despite the valuable contributions that previous studies have made to ITE estimation with interference, their methods still face challenges in capturing DNE, as detailed in Appendix~\ref{proof:attfail}. A comparison table and studies on treatment effect estimation in other settings are detailed in Appendix~\ref{app:relatedwork}.

\vspace{1em}
\noindent\textbf{Graph machine learning methods.} 
Beyond GNN-based methods for modeling interference in ITE estimation,  several methods have been proposed in the graph machine learning (GML) community. Although these methods contribute significantly to GML tasks, they are unable to estimate ITE from observational graph data with DNE.
Sum aggregation~\cite{liu2024empowering,xu2019powerful} cannot capture DNE, as they do not consider the different importance of neighbors in contributing to interference. 
Pooling-based methods have also been explored~\cite{liu2022graph}, where the max-pooling method~\cite{hamilton2017inductive} is one of the most widely used methods.
The max-pooling operation does not capture DNE, as proved in Appendix~\ref{proof:attfail}. 
Although several studies proposed methods to enhance the expressive power of representations for addressing some complex GML tasks~\cite{corso2020principal,ma2023graph},  ITE estimation from observational graph data differs from standard GML tasks due to confounding and interference biases (see Section~\ref{sec:problemsetting}), which are absent in GML~\cite{jiang2022estimating}.  
Thus, GML methods lack mechanisms to address these biases, which can result in biased ITE estimation. 
For a broader overview of GML, refer to recent literatures~\cite{liu2022graph,liu2024empowering}.

\section{Problem setting}
\label{sec:problemsetting}
In this study, we aim to estimate ITE from observational graph data with networked interference. 
We use $\boldsymbol{x}_i \in \mathbb{R}^{c}$ to denote the covariates of the individual $i$, $t_i \in \{0,1\}$ to denote the treatment assigned to the individual $i$, $y_{i} \in \mathbb{R}$ to denote the factual or observed outcome with assigned $t_i$, and $N$ to denote the number of individuals.
Let $\boldsymbol{X}\in \mathbb{R}^{N\times c}$ be the covariates of all individuals, $\boldsymbol{T}=[t_1,...,t_N]$ be all treatment assignments, and $\boldsymbol{Y}=[y_1,...,y_N]$ be all factual outcomes.
Moreover, we use uppercase letters (e.g., $Y$) to denote random variables.  We show a notation table in Appendix~\ref{app:notationtable}.

\vspace{1em}
\noindent\textbf{Observational graph data.}
 $\mathcal{D} = (\boldsymbol{X},\boldsymbol{T},\boldsymbol{Y},\boldsymbol{A})$ denotes an observational graph data,  
  where $\boldsymbol{A} \in \{0,1\}^{N \times N}$ denotes the adjacency matrix of a directed graph.
If there is an edge from an individual $k$ to an individual $i$, $A_{ik} = 1$;
otherwise, $A_{ik} = 0$. 
Let $\bold{N}_{i}$ denote the set of neighbors of the individual $i$,  $\bold{G}_i$ denote the set of related individuals who can reach the individual $i$ in the graph $\boldsymbol{A}$~\cite{cheng2012k},
$\boldsymbol{x}_{\bold{G}_i}=\{\boldsymbol{x}_k\mid k\in \bold{G}_i\}$ denote the set of covariates in $\bold{G}_i$, and $\boldsymbol{t}_{\bold{G}_i}=\{t_k\mid k\in \bold{G}_i\}$ denote the set of treatments in   $\bold{G}_i$. Importantly, only individuals in $\bold{G}_i$ can interfere with the individual $i$. 

  \vspace{1em}
\noindent\textbf{Challenges.}
ITE estimation from observational graph data suffers from four challenges:
\vspace{-5pt}
\begin{itemize}
    \item \textbf{Networked interference.} 
In a graph, the outcome of an individual $i$ can receive influence from covariates $\boldsymbol{x}_{\bold{G}_i}$ and treatments $\boldsymbol{t}_{\bold{G}_i}$.
 This phenomenon is referred to as networked interference~\cite{pmlr-v130-ma21c}.
We consider there exists an issue of DNE, see Section~\ref{sec:int}. 
\item \textbf{Counterfactual outcome.}
Counterfactual outcome is the outcome with an alternative treatment $1-t$ and unobserved from observational data but needed for ITE estimation~\cite{yao2021survey}.
\item \textbf{Confounding bias.}
Confounders of an individual are a part of covariates $\boldsymbol{x}_i$ that affect the treatment assignment and outcome jointly~\cite{yao2021survey},
which can introduce confounding bias in ITE estimation.
For example, consider a scenario where customers are treated with advertisements.
Younger customers may be more likely to receive advertisements and also go shopping than elderly customers.
In this case, age acts as a confounder.
Due to the existence of networked interference among individuals,
many studies for ITE estimation from a graph suggest that treatment assignment of an individual can be affected by confounders of his/her related individuals $\boldsymbol{x}_{\bold{G}_i}$~\cite{chu2021graph,guo2020learning,guo2021ignite,ma2021deconfounding}.
In this study, we call confounders of an individual from $\boldsymbol{x}_i$ as \textit{individual confounders} and the confounders from $\boldsymbol{x}_{\bold{G}}$ as \textit{networked confounders}.
In observational graph data, confounding bias results in $p(t\mid \boldsymbol{x},\boldsymbol{x}_{\bold{G}}) \neq p(1-t\mid\boldsymbol{x},\boldsymbol{x}_{\bold{G}} )$~\cite{jiang2022estimating}.
\item \textbf{Interference bias.} A bias issue might also exist in networked interference~\cite{jiang2023cf},
resulting in $p(\boldsymbol{t}_{\bold{G}}\mid \boldsymbol{x},\boldsymbol{x}_{\bold{G}},t)\neq p(\boldsymbol{t}_{\bold{G}}\mid \boldsymbol{x},\boldsymbol{x}_{\bold{G}},1-t)$, which is called \textit{interference bias}.
This can result in additional bias in ITE estimation.
For instance, younger customers are not only more likely to be treated but also tend to have younger friends, who in turn have higher exposure to advertisements,
whereas elderly customers have more elderly friends.
\end{itemize}

\begin{figure}
	\centering
 \includegraphics[width=0.35\textwidth]{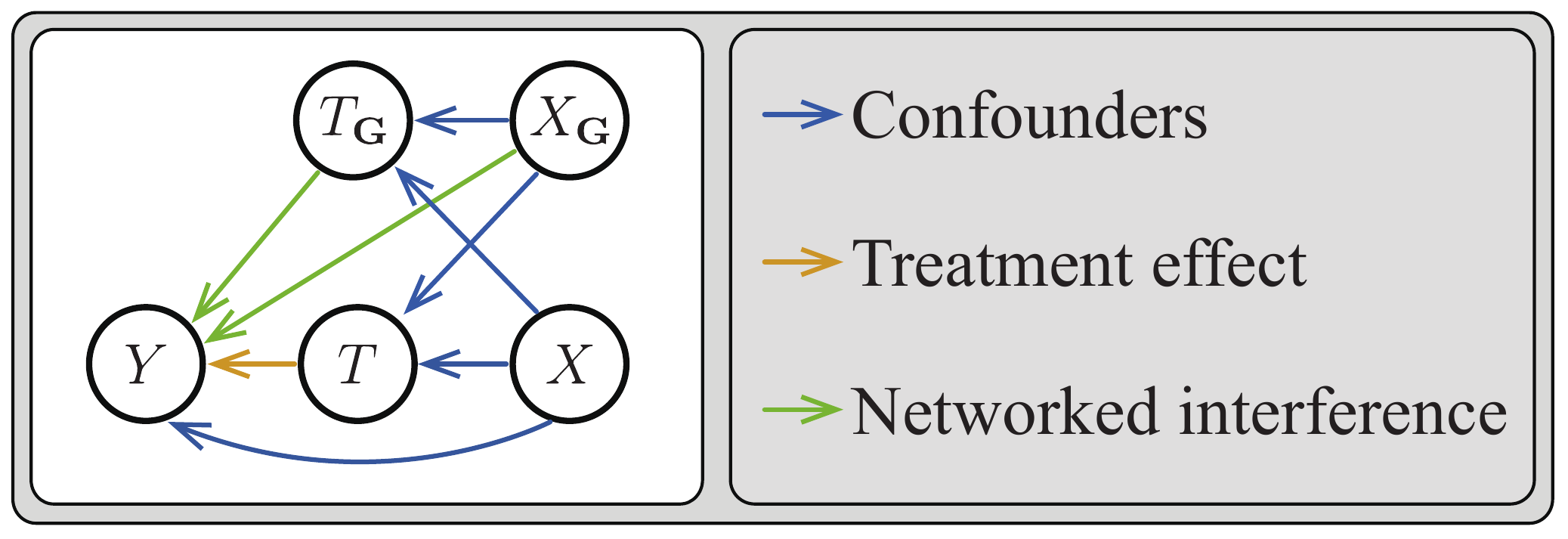}
	\caption{An example of a causal graph for an individual in graph data. 
    In the causal graph, 
     {\color[cmyk]{0.88,0.65,0,0}{blue}} arrows represent the effect caused by confounders, {\color[cmyk]{0.54,0.13,0.96,0}{green}} arrows represent the effect caused by networked interference, and  {\color[cmyk]{0.15,0.40,0.94,0}{yellow}} represent the effect caused by treatments assigned to the individual. 
     Here, $T$ and $T_{\bold{G}}$ are associated~\cite{zhao2024learning}, but there is no causal edge between them.
    }
 \label{exp:causalgraph}
\end{figure}

\vspace{1em}
\noindent \textbf{ITE estimation from observational graph data.}
In observational graph data $\mathcal{D}=(\boldsymbol{X},\boldsymbol{T},\boldsymbol{Y},\boldsymbol{A})$, we assume the existence of both confounders and networked interference with DNE. A causal graph is shown in Figure~\ref{exp:causalgraph}.
The potential outcomes of the individual $i$ with the individual treatment $t=1$ and $t=0$,
along with treatments of related individuals $\boldsymbol{t}_{\bold{G}_i}$, are denoted by $y_i(1,\boldsymbol{t}_{\bold{G}_i})$ and $y_i(0,\boldsymbol{t}_{\bold{G}_i})$, 
 respectively.
Then, ITE with confounders and networked interference can be defined as:
\begin{equation}
\tau_i\coloneqq \mathbb{E}\left[Y(1,\boldsymbol{t}_{\bold{G}_i})- Y(0,\boldsymbol{t}_{\bold{G}_i})\mid \boldsymbol{x}_i,\boldsymbol{x}_{\bold{G}_i}\right].    
\end{equation}
This definition is extended from that of ITE with neighbor interference in \citet{jiang2022estimating}.

\vspace{1em}
\noindent \textbf{Identifiability of ITE.}
 We now discuss that ITE is identifiable from observational graph data with a set of assumptions. 
First, we extend consistency assumption to networked interference~\cite{doi:10.1080/01621459.2020.1768100}, as follows:
 \begin{assumption}\label{ass2}
 $y_i = y_i(t_i, \boldsymbol{t}_{\bold{G}_i})$ for the individual $i$ with $t_i$ and $\boldsymbol{t}_{\bold{G}_i}$.
\end{assumption}
This assumption means that the potential outcome is equal to the observed outcomes with given $t_i$ and  $\boldsymbol{t}_{\bold{G}_i}$. 
Then, we extend unconfoundedness assumption for confounders of neighbors~\cite{jiang2022estimating} to networked confounders:
\begin{assumption}\label{ass3} 
For any individual $i$, given the covariates of the individual and individual's related individuals,  treatments of the individual and individual's related individuals are independent of potential outcomes, i.e., $T_i,T_{\bold{G}_i} \perp   Y(1,\boldsymbol{t}_{\bold{G}_i}),Y(0,\boldsymbol{t}_{\bold{G}_i}) \mid \boldsymbol{x}_i,\boldsymbol{x}_{\bold{G}_i}$.
\end{assumption}
This assumption says that, confounders that describe the difference between the treated and the control groups are observed in individual covariates and covariates of related individuals.
Lastly, we extend the overlap assumption for neighbor interference~\cite{chen2024doubly} to networked interference, as follows:
\begin{assumption}\label{ass4} 
    Given the covariates of any individual and individual's related individuals,  
     the treatment pair $(t_i,\boldsymbol{t}_{\bold{G}_i})$ has a non-zero probability, i.e., $0<p(t_i,\boldsymbol{t}_{\bold{G}_i}\mid \boldsymbol{x}_i,\boldsymbol{x}_{\bold{G}_i})<1, \forall \boldsymbol{x}_i, \forall  \boldsymbol{x}_{\bold{G}_i}$.
\end{assumption}
This assumption means that the treatment assignment is nondeterministic~\cite{chen2024doubly}.

\begin{theorem} \label{Theorem:identi}   
With assumptions  \ref{ass2}, \ref{ass3}, and \ref{ass4}, ITE is identifiable from observational graph data.
\end{theorem}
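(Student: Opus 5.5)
The plan is to express $\tau_i$ entirely in terms of the observed conditional distribution of $Y$ given $(T,T_{\bold{G}},X,X_{\bold{G}})$. This is the standard chain of equalities used for identification under interference (as in \citet{jiang2022estimating} for neighbor interference), adapted to the set $\bold{G}_i$ of related individuals.

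By linearity of expectation, I will first write
\[
\tau_i=\mathbb{E}\left[Y(1,\boldsymbol{t}_{\bold{G}_i})\mid \boldsymbol{x}_i,\boldsymbol{x}_{\bold{G}_i}\right]-\mathbb{E}\left[Y(0,\boldsymbol{t}_{\bold{G}_i})\mid \boldsymbol{x}_i,\boldsymbol{x}_{\bold{G}_i}\right].
\]
It then suffices to identify $\mu_t:=\mathbb{E}[Y(t,\boldsymbol{t}_{\bold{G}_i})\mid \boldsymbol{x}_i,\boldsymbol{x}_{\bold{G}_i}]$ for each $t\in\{0,1\}$ and each fixed configuration $\boldsymbol{t}_{\bold{G}_i}$.

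The identification of $\mu_t$ proceeds in three steps.
\begin{enumerate}
\item Assumption~\ref{ass3} states that $(T_i,T_{\bold{G}_i})$ is independent of the potential outcomes given $(\boldsymbol{x}_i,\boldsymbol{x}_{\bold{G}_i})$. This lets me add the conditioning event $\{T_i=t,T_{\bold{G}_i}=\boldsymbol{t}_{\bold{G}_i}\}$:
\[
\mu_t=\mathbb{E}\left[Y(t,\boldsymbol{t}_{\bold{G}_i})\mid \boldsymbol{x}_i,\boldsymbol{x}_{\bold{G}_i},T_i=t,T_{\bold{G}_i}=\boldsymbol{t}_{\bold{G}_i}\right].
\]
\item Assumption~\ref{ass4} guarantees $p(t,\boldsymbol{t}_{\bold{G}_i}\mid \boldsymbol{x}_i,\boldsymbol{x}_{\bold{G}_i})>0$. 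Hence this conditional expectation is well defined, and it can be estimated from data, for both $t=1$ and $t=0$ under the same neighbor configuration.
\item Assumption~\ref{ass2} states that on this event the potential outcome equals the observed one, $Y=Y(t,\boldsymbol{t}_{\bold{G}_i})$. This gives
\[
\mu_t=\mathbb{E}\left[Y\mid \boldsymbol{x}_i,\boldsymbol{x}_{\bold{G}_i},t,\boldsymbol{t}_{\bold{G}_i}\right],
\]
which is a functional of the observational distribution.
\end{enumerate}
Subtracting the two cases yields
\[
\tau_i=\mathbb{E}[Y\mid \boldsymbol{x}_i,\boldsymbol{x}_{\bold{G}_i},1,\boldsymbol{t}_{\bold{G}_i}]-\mathbb{E}[Y\mid \boldsymbol{x}_i,\boldsymbol{x}_{\bold{G}_i},0,\boldsymbol{t}_{\bold{G}_i}],
\]
which establishes identifiability.

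The main obstacle is the first step. Assumption~\ref{ass3} is stated as a joint independence, so I need to make sure it implies the conditional statement required. Joint independence gives $Y(t,\boldsymbol{t}_{\bold{G}_i})\perp (T_i,T_{\bold{G}_i})\mid \boldsymbol{x}_i,\boldsymbol{x}_{\bold{G}_i}$ for the specific potential outcome in question. That is exactly what is needed to insert the conditioning event. The argument also relies on Assumption~\ref{ass4} so that the conditioning event does not have probability zero.

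A secondary subtlety is notational: $\boldsymbol{t}_{\bold{G}_i}$ in the definition of $\tau_i$ is treated as a fixed (observed) configuration. The same configuration must be used in both terms, and Assumption~\ref{ass4} ensures that both pairs $(1,\boldsymbol{t}_{\bold{G}_i})$ and $(0,\boldsymbol{t}_{\bold{G}_i})$ have positive probability.
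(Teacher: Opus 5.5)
Your proposal is correct and follows the paper's proof essentially step for step: linearity of expectation, then Assumption~\ref{ass3} to insert the conditioning event $\{T_i=t,\,T_{\bold{G}_i}=\boldsymbol{t}_{\bold{G}_i}\}$, then Assumptions~\ref{ass2} and~\ref{ass4} to replace $Y(t,\boldsymbol{t}_{\bold{G}_i})$ by the observed $Y$. The paper adds one step you omit, arguing that individuals outside $\bold{G}_i$ cannot interfere with $i$, so that $\mathbb{E}[Y\mid \boldsymbol{x}_i,\boldsymbol{x}_{\bold{G}_i},t,\boldsymbol{t}_{\bold{G}_i}]=\mathbb{E}[Y\mid \boldsymbol{x}_i,t,\boldsymbol{X},\boldsymbol{T},\boldsymbol{A}]$; this ties the estimand to the full graph data but is not needed for identifiability, since your endpoint is already a functional of the observational distribution.
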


We prove Theorem~\ref{Theorem:identi} as follows:
\begin{proof} 

\begin{equation}
\begin{aligned}
 ~&\quad\; \mathbb{E}\left[Y(1,\boldsymbol{t}_{\bold{G}_i})-Y(0,\boldsymbol{t}_{\bold{G}_i})\mid \boldsymbol{x}_i,\boldsymbol{x}_{\bold{G}_i}\right] \\
 &=\mathbb{E}\left[Y(1,\boldsymbol{t}_{\bold{G}_i})\mid \boldsymbol{x},\boldsymbol{x}_{\bold{G}_i}\right]-\mathbb{E}\left[Y(0,\boldsymbol{t}_{\bold{G}_i})\mid \boldsymbol{x}_i,\boldsymbol{x}_{\bold{G}_i}\right]\\
 & =  \mathbb{E}\left[Y(1,\boldsymbol{t}_{\bold{G}_i})\mid \boldsymbol{x}_i,\boldsymbol{x}_{\bold{G}_i},1,\boldsymbol{t}_{\bold{G}_i}\right]-\mathbb{E}\left[Y(0,\boldsymbol{t}_{\bold{G}_i})\mid \boldsymbol{x}_i,\boldsymbol{x}_{\bold{G}_i},0,\boldsymbol{t}_{\bold{G}_i}\right]\\ & =  \mathbb{E}\left[Y\mid \boldsymbol{x}_i,\boldsymbol{x}_{\bold{G}_i},1,\boldsymbol{t}_{\bold{G}_i}]-\mathbb{E}[Y\mid \boldsymbol{x}_i,\boldsymbol{x}_{\bold{G}_i},0,\boldsymbol{t}_{\bold{G}_i}\right], \nonumber
\end{aligned}
\end{equation}
where the second equality is based on the assumption~\ref{ass3}  and the third equality is based on the assumptions \ref{ass2} and \ref{ass4}. 
Let $\bold{C}_{i}$ denote individuals that cannot reach the individual $i$ in the graph $\boldsymbol{A}$, $\boldsymbol{x}_{\bold{C}_{i}}$ denote covariates of individuals  $\bold{C}_{i}$, and $\boldsymbol{t}_{\bold{C}_{i}}$ denote  treatments of individuals $\bold{C}_{i}$. In this case, the covariates $\boldsymbol{x}_{\bold{C}_{i}}$ and treatments $\boldsymbol{t}_{\bold{C}_{i}}$ cannot interfere with the outcome of the individual $i$, which means that the outcome of individual $i$ can only receive interference  from  $\boldsymbol{x}_{\bold{G}_{i}}$ and $\boldsymbol{t}_{\bold{G}_{i}}$ in the graph $\boldsymbol{A}$. 
This tells that,  given $\boldsymbol{x}_i$, $t_i$, $\boldsymbol{x}_{\bold{G}_i}$, and $\boldsymbol{t}_{\bold{G}_i}$, the outcome of the individual $i$ does not receive interference from  $\boldsymbol{x}_{\bold{C}_{i}},\boldsymbol{t}_{\bold{C}_{i}}$ and $\boldsymbol{A}$. 
Then, we have: 
\begin{equation}
\begin{aligned}
 &\;\mathbb{E}\left[Y\mid \boldsymbol{x}_i,\boldsymbol{x}_{\bold{G}_i},t,\boldsymbol{t}_{\bold{G}_i}\right]\\=  &\;\mathbb{E}\left[Y\mid \boldsymbol{x}_i,t,\boldsymbol{x}_{\bold{G}_i},\boldsymbol{t}_{\bold{G}_i},\boldsymbol{x}_{\bold{C}_i},\boldsymbol{t}_{\bold{C}_i},\boldsymbol{A}\right]\\=&\;\mathbb{E}\left[Y\mid \boldsymbol{x}_i,t,\boldsymbol{X},\boldsymbol{T},\boldsymbol{A}\right], 
\end{aligned}
\end{equation} where $\boldsymbol{x}_{\bold{G}}$ and $\boldsymbol{x}_{\bold{C}}$ constitute covariates  $\boldsymbol{X}$, while $\boldsymbol{t}_{\bold{G}}$ and $\boldsymbol{t}_{\bold{C}}$ constitute treatments $\boldsymbol{T}$ for all individuals.
\end{proof}
This shows that we can recover the ITE from observational graph data. However, as the propagation mechanism of networked interference among individual $i$ and its related individuals $\bold{G}_i$ is unknown and complex, it is hard to model networked interference received by individual $i$ from $\boldsymbol{x}_{\bold{G}_i}$ and  $\boldsymbol{t}_{\bold{G}_i}$ on the data $(\boldsymbol{X},\boldsymbol{T},\boldsymbol{A})$.
A common choice for addressing this issue is to generate representations of $\boldsymbol{x}_{\bold{G}_i}$ and  $\boldsymbol{t}_{\bold{G}_i}$ by applying some aggregation function~\cite{LIN2024SITE,ma2022learning,pmlr-v130-ma21c}.
 In this case, it is crucial to design a proper aggregation function; otherwise, we will erroneously characterize networked interference with DNE received by individuals from $\boldsymbol{x}_{\bold{G}_i}$ and $\boldsymbol{t}_{\bold{G}_i}$, which can result in imprecise ITE estimation.

\section{Proposed method}\label{Sec3:proposed}
Given observational graph data $\mathcal{D} = (\boldsymbol{X},\boldsymbol{T},\boldsymbol{Y},\boldsymbol{A})$,
we aim to estimate ITE with confounders and networked interference,
which incorporates DNE.
To this end, we propose GITE, as illustrated in Figure~\ref{fig:arGITE}.\footnote{The code is released in \hyperlink{https://github.com/LINXF208/GITE/tree/main}{https://github.com/LINXF208/GITE/tree/main}.}
GITE contains three key modules:
representation learning, representation balancing, and outcome prediction modules.
The representation learning module generates covariate representations of $\boldsymbol{x}_i$, as well as representations of networked interference derived from $\boldsymbol{x}_{\bold{G}_i}$ and $\boldsymbol{t}_{\bold{G}_i}$,
which are called interference representations.
 
\subsection{Representation learning module}
\label{sec:RL}
In this section, we aim to generate covariate and interference representations.
We generate covariate representations by an MLP, which are expected to capture individual confounders.
To capture networked interference, it is important to model the propagation of interference among individuals.
We model it by leveraging some layer-by-layer aggregation function to aggregate interference-related information of individuals within the same local network for every individual.
While modeling interference representation, we need to carefully design the aggregation function due to the existence of DNE, which consists of two sub-issues.
(I) The importance of different neighbors in contributing to interference varies~\cite{huang2023modeling,lin2023estimating,ma2022learning}. 
(II) The scale of neighbors varies, leading to different levels of  interference (see Figure~\ref{exp:fails}). 
Despite several powerful aggregation functions proposed for modeling interference, these methods still have limitations in capturing DNE, as discussed in Appendix~\ref{proof:attfail}.
For example, the mean aggregation~\cite{pmlr-v130-ma21c} and GCN~\cite{kipf2016semi} cannot capture DNE, as they do not address both issues (I) and (II). 
Although GAT-based methods~\cite{huang2023modeling,lin2023estimating,ma2022learning} estimate importance based on individual information to take issue (I) into account, they can not fully address issue (II), as they may degenerate to a mean aggregation when individuals are similar.
 Sum aggregation~\cite{xu2019powerful} can address issue (II), but
does not address issue (I) and may suffer from
a numerical explosion issue in some graphs with many connections, as shown in the results of ablation experiments (see results of GITE$_{\rm{NA}}$ in Table~\ref{table4:ablation}). 
Therefore,  a proper aggregation function needs to address both issues (I) and (II), while avoiding the risk of numerical explosion.
\begin{figure*}
	\centering
\includegraphics[width=1.6\columnwidth]{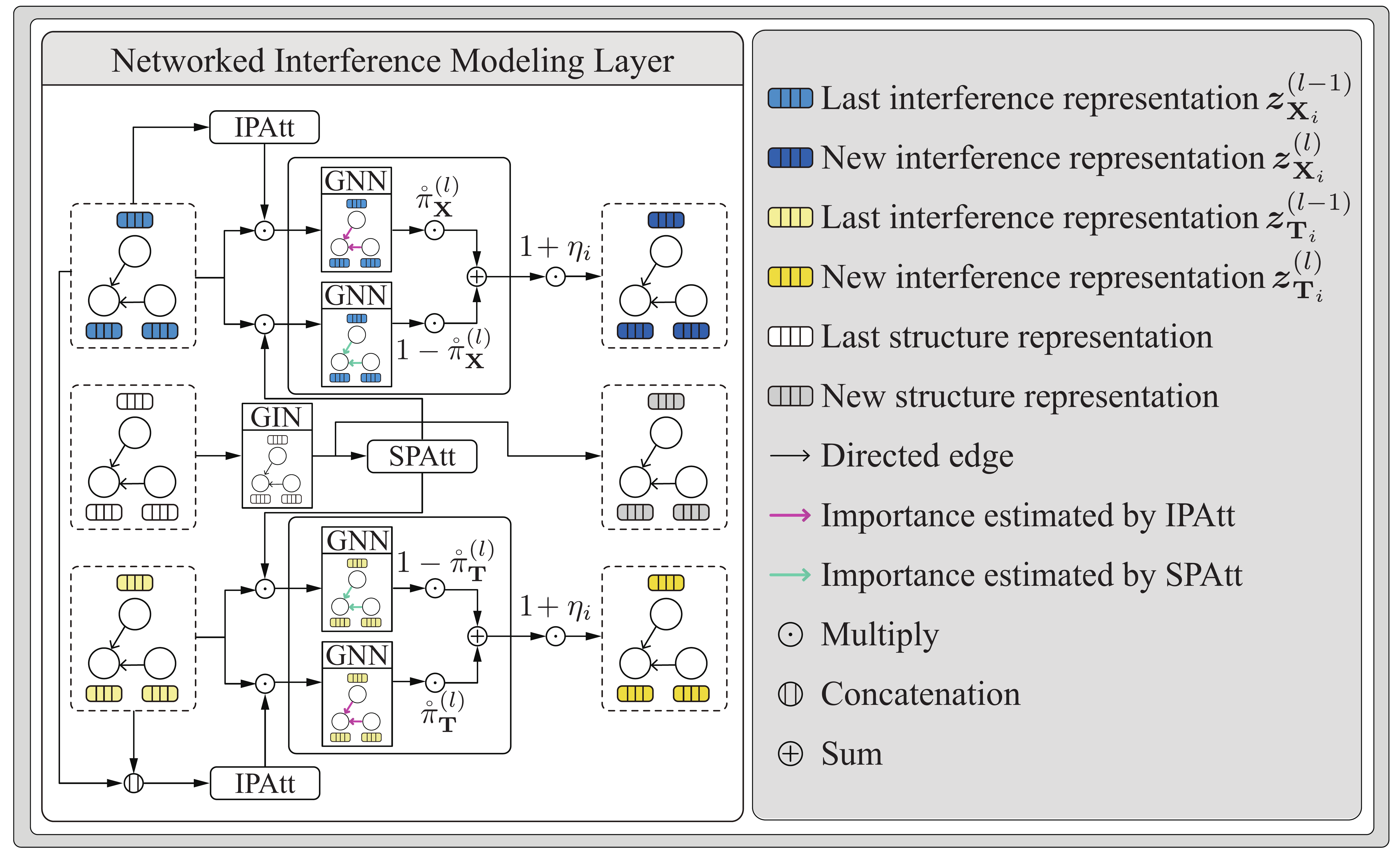}
  \caption{Architecture of an NIM layer. We show an example where the underlying graph consists of three individuals. 
  }
  \label{fig:arNIML}
\end{figure*}

To address these nontrivial issues, 
we design an NIM layer, as illustrated in Figure~\ref{fig:arNIML}. 
To address issue (I), an NIM layer contains two partial attention mechanisms: IPAtt and SPAtt mechanisms, which are intended to adaptively capture the varying contributions of neighbors to interference based on two key factors that influence the importance of interference. 
 Specifically, IPAtt mechanism estimates the importance of interference among individuals based on their interference representations, 
 whereas 
 SPAtt mechanism estimates the importance of interference based on the structures of their local networks, which can also contribute to addressing issue (II).
 The two estimated partial importance guides separate aggregations, the results of which are adaptively integrated by a learnable summary function for each layer.
To tackle issue (II), the NIM layer applies a message amplifier to vary integrated results with the degrees of individuals and update interference representations. 
By aggregating interference-related information with NIM layers,  we can jointly address issues (I) and (II), which enables the model to capture DNE.

Specifically, given covariates $\boldsymbol{X}$, treatments $\boldsymbol{T}$, and network $\boldsymbol{A}$, we aim to generate covariate representation $\boldsymbol{z}_i$, interference representation 
$\boldsymbol{z}_{\bold{X}_i}$ of $\boldsymbol{x}_{\bold{G}_i}$, and interference representation $\boldsymbol{z}_{\bold{T}_i}$ of $\boldsymbol{t}_{\bold{G}_i}$. 
Here, information of network $\boldsymbol{A}$ is encoded into $\boldsymbol{x}_{{\bold{G}_i}}$ and $\boldsymbol{t}_{{\bold{G}_i}}$.
We use two aggregation functions to generate $\boldsymbol{z}_{\bold{X}_i}$ and $\boldsymbol{z}_{\bold{T}_i}$ separately due to the different dimensions and scales among covariates and treatments.
Let $\boldsymbol{z}_{\bold{S}_i}$ denote the representation of the structure of the local network of the individual $i$. We use $\boldsymbol{Z}$ with a subscript to denote corresponding representations of all individuals, such as $\boldsymbol{Z}_{\bold{X}}$ and $\boldsymbol{Z}_{\bold{S}}$, and  
use a superscript $(l)$ to denote the representation generated by the $l$-layer, such as $\boldsymbol{z}_{\bold{S}_i}^{(l)}$. Let $\sigma$ be an activation function,  $\boldsymbol{W}^{(l)}$ with any subscript be a learnable parameter matrix,  and $\boldsymbol{w}$ with any subscript be a learnable parameter vector. 

We generate $\boldsymbol{z}_i$ independently by an MLP, i.e., $\textrm{MLP}(\boldsymbol{x}_i) = \boldsymbol{z}_i$, 
 as covariates are typically important to treatment assignment and outcome prediction of an individual.  

Now, we describe the architecture of an NIM layer that consists of IPAtt, SPAtt, an encoder $\phi_{\bold{S}}$ for generating $\boldsymbol{z}_{\bold{S}}$,
a covariate aggregation function $\phi_{\bold{X}}$ for generating $\boldsymbol{z}_{\bold{X}}$,
and a treatment aggregation function $\phi_{\bold{T}}$ for generating $\boldsymbol{z}_{\bold{T}}$.
Let $\pi_{\bold{S}}^{(l)}$ be a learnable one-dimensional parameter.
In each NIM layer, we first generate
$\boldsymbol{z}_{\bold{S}_i}^{(l)}$ by applying $\phi_{\bold{S}}\biggr(\boldsymbol{Z}_{\bold{S}}^{(l-1)},\bold{N}_i\biggr)$ with a GIN layer~\cite{xu2019powerful}, chosen for its strong ability to capture the structural information, as follows: 
\begin{equation}
    \boldsymbol{z}_{\bold{S}_i}^{(l)} = \sigma\Biggl(\boldsymbol{W}_{\bold{S}}^{(l)}\biggl((1+\pi_{\bold{S}}^{(l)})\cdot\boldsymbol{z}_{\bold{S}_i}^{(l-1)}+\sum_{k \in \bold{N}_i}\boldsymbol{z}_{\bold{S}_k}^{(l-1)}\biggr)\Biggr).
\end{equation}
Next, we estimate individual partial importance  $\alpha_{ik}^{\rm{in}}$ and structure partial importance  $\alpha_{ik}^{\rm{st}}$ by IPAtt and SPAtt mechanisms, respectively.
Let $\boldsymbol{p}_i^{\rm{in}}=\boldsymbol{z}_{\bold{X}_i}^{(l-1)}$ for the individual partial importance of $\phi_{\bold{X}}$,   $\boldsymbol{p}_i^{\rm{in}}=[\boldsymbol{z}_{\bold{X}_i}^{(l-1)}\Vert\boldsymbol{z}_{\bold{T}_i}^{(l-1)}]$ for that of $\phi_{\bold{T}}$ due to the consideration that covariates can influence $\boldsymbol{t}_{\bold{G}_i}$ (see Figure~\ref{exp:causalgraph}), and $\boldsymbol{p}_i^{\rm{st}}=\boldsymbol{z}_{\bold{S}_i}^{(l-1)}$ for structure partial importance estimation.  
Here, $\Vert$ denotes the concatenation operation.  
$\alpha_{ik}^{\rm{in}}$ and   $\alpha_{ik}^{\rm{st}}$ are estimated as follows:
\begin{align}
\alpha_{ik}^{\rm{in}}=\mathrm{Norm}\biggl(a\left(\boldsymbol{p}_i^{\rm{in}},\boldsymbol{p}_k^{\rm{in}}\right)\biggr), \; \alpha_{ik}^{\rm{st}}=\mathrm{Norm}\biggl(a\left(\boldsymbol{p}_i^{\rm{st}},\boldsymbol{p}_k^{\rm{st}}\right)\biggr), 
\label{eq:importance}
\end{align}
where $a$ estimates the importance between two individuals based on its inputs, which can be implemented by several attention mechanisms, such as GAT~\cite{2017gat} and the attention mechanism of Transformer~\cite{vaswani2017attention,NEURIPS2021_f1c15925}, as detailed in Appendix~\ref{app:detailsforatt}. $\rm{Norm}$ is a normalization operation, which adjusts estimated importance based on the sum of estimated importance of all neighbors $ \tilde{\bold{N}}_{i}=\bold{N}_{i}\cup{\{i\}}$ of the individual $i$ to prevent the risk of numerical explosion.

Subsequently, we update interference representations $\boldsymbol{z}_{\bold{X}_i}^{(l-1)}$  and 
$\boldsymbol{z}_{\bold{T}_i}^{(l-1)}$ by using two  functions  $\phi_{\bold{X}}\left(\boldsymbol{Z}_{\bold{X}}^{(l-1)},\tilde{\bold{N}}_i,\{\alpha^{\rm{in}}_{ik},\alpha^{\rm{st}}_{ik}\}_{k \in \tilde{\bold{N}}_i}\right)$ and $\phi_{\bold{T}}\left(\boldsymbol{Z}_{\bold{T}}^{(l-1)},\tilde{\bold{N}}_i,\{\alpha^{\rm{in}}_{ik},\alpha^{\rm{st}}_{ik}\}_{k \in \tilde{\bold{N}}_i}\right)$, respectively, 
 as follows:
 \begin{equation}
\begin{aligned}
\mathring{\boldsymbol{z}}_{\bold{X}_i}^{(l)} =  & \; \mathring{\pi}_{\bold{X}}^{(l)}\cdot \sigma\Biggl(\sum_{k\in\tilde{\bold{N}}_i} \alpha_{ik}^{\rm{in}}\cdot\boldsymbol{W}_{\bold{X}_{\rm{in}}}^{(l)}\boldsymbol{z}_{{\bold{X}_k}}^{(l-1)}\Biggr) + \\ &  \left(1-\mathring{\pi}_{\bold{X}}^{(l)}\right)\cdot \sigma\Biggl(\sum_{k\in\tilde{\bold{N}}_i} \alpha_{ik}^{\rm{st}}\cdot\boldsymbol{W}_{\bold{X}_{\rm{st}}}^{(l)}\boldsymbol{z}_{{\bold{X}_k}}^{(l-1)}\Biggr),   \\
\mathring{\boldsymbol{z}}_{\bold{T}_i}^{(l)} = & \; \mathring{\pi}_{\bold{T}}^{(l)}\cdot \sigma\Biggl(\sum_{k\in\tilde{\bold{N}}_i} \alpha_{ik}^{\rm{in}}\cdot\boldsymbol{W}_{\bold{T}_{\rm{in}}}^{(l)}\boldsymbol{z}_{\bold{T}_k}^{(l-1)}\Biggr)  +\\ & \left(1-\mathring{\pi}_{\bold{T}}^{(l)}\right)\cdot \sigma\Biggl(\sum_{k\in\tilde{\bold{N}}_i} \alpha_{ik}^{\rm{st}}\cdot\boldsymbol{W}_{\bold{T}_{\rm{st}}}^{(l)}\boldsymbol{z}_{\bold{T}_k}^{(l-1)}\Biggr).
\end{aligned}
 \end{equation}
Here $\mathring{\pi}_{\bullet}^{(l)}$ for $\bullet\in\{\bold{X},\bold{T}\}$ is a learnable one-dimensional parameter with the range of $[0,1]$, which is used in the summary function to adaptively integrate aggregated results with different partial importance.
To prevent the normalization operation in IPAtt and SPAtt from exacerbating the issue~(II) of DNE (as observed in Appendix~\ref{app:motivation:degree}),
we design a message amplifier $\eta_i$ to vary interference representations with the degrees of individuals, inspired by ~\citet{corso2020principal}:
\begin{equation}
\begin{aligned}
\boldsymbol{z}_{{\bold{X}_i}}^{(l)} = &\;(1 + \eta_i) \cdot \mathring{\boldsymbol{z}}_{\bold{X}_i}^{(l)}, \;
\boldsymbol{z}_{\bold{T}_i}^{(l)} = (1 + \eta_i) \cdot \mathring{\boldsymbol{z}}_{\bold{T}_i}^{(l)}, \\ &
\eta_i=\pi_{\eta}\cdot\left({\log(\tilde{d}_i)}/{\sum_{i=1}^{n_{\rm{tr}}}{\log(\tilde{d}_i)}}\right),\label{eq:MAP}
\end{aligned}
\end{equation}
where    
$\pi_{\eta}$ is a learnable parameter or hyperparameter, which adjusts the amplification level, 
$\tilde{d}_i$ represents the degree of $i$ including the self-loop, and $n_{\rm{tr}}$ represents the size of training set. 
Implementation and hyperparameter details are described in  Appendix~\ref{app:Implementation}.
By applying the NIM layer with the message amplifier to generate interference representation, the model can address the issue (II) of DNE, as stated in Proposition~\ref{sec:pro3} and proved in the Appendix~\ref{app:motivation:degree}.
\begin{proposition}\label{sec:pro3}
Interference representation generated by NIM layers after applying the message amplifier can address the issue (II) of DNE, even in local networks where all individuals have similar interference-related information.
\end{proposition}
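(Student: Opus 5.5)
We formalize ``addressing issue (II)'' as a separation property. Take two individuals $i$ and $j$ whose local networks carry identical interference-related information: every individual in $\tilde{\bold{N}}_i\cup\tilde{\bold{N}}_j$ has the same previous-layer representations, $\boldsymbol{z}_{\bold{X}_k}^{(l-1)}=\boldsymbol{z}_{\bold{X}}$ and $\boldsymbol{z}_{\bold{T}_k}^{(l-1)}=\boldsymbol{z}_{\bold{T}}$. The only difference is scale, $\tilde{d}_i\neq\tilde{d}_j$. We then show that the layer output satisfies $\boldsymbol{z}_{\bold{X}_i}^{(l)}\neq\boldsymbol{z}_{\bold{X}_j}^{(l)}$ and $\boldsymbol{z}_{\bold{T}_i}^{(l)}\neq\boldsymbol{z}_{\bold{T}_j}^{(l)}$ whenever the pre-amplifier output is nonzero and $\pi_\eta\neq0$. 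Without the amplifier, the same construction yields equal outputs, which is the failure mode discussed for mean and GAT aggregation in Appendix~\ref{proof:attfail}. We treat this contrast as the content of the proposition.

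\textbf{Step 1: the pre-amplifier output is blind to degree.} We first compute $\mathring{\boldsymbol{z}}^{(l)}$ under the homogeneity assumption. Because all inputs $\boldsymbol{p}_k^{\rm{in}}$ coincide, the scores $a(\boldsymbol{p}_i^{\rm{in}},\boldsymbol{p}_k^{\rm{in}})$ are constant in $k$. After $\mathrm{Norm}$ over $\tilde{\bold{N}}_i$ they therefore equal $1/\tilde{d}_i$, since $\mathrm{Norm}$ divides by the sum over neighbors and covers both GAT and Transformer variants. Hence
\begin{equation*}
\sum_{k\in\tilde{\bold{N}}_i}\alpha^{\rm{in}}_{ik}\boldsymbol{W}^{(l)}_{\bold{X}_{\rm{in}}}\boldsymbol{z}_{\bold{X}}=\boldsymbol{W}^{(l)}_{\bold{X}_{\rm{in}}}\boldsymbol{z}_{\bold{X}},
\end{equation*}
which is independent of $\tilde{d}_i$.

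For the structural branch, the weights need not be uniform, because the GIN encoder $\phi_{\bold{S}}$ may give nodes of different degree different $\boldsymbol{z}_{\bold{S}}$. Still, the $\alpha^{\rm{st}}_{ik}$ are nonnegative and sum to one. Since every aggregated vector equals $\boldsymbol{W}^{(l)}_{\bold{X}_{\rm{st}}}\boldsymbol{z}_{\bold{X}}$, this convex combination also collapses to a degree-independent value.

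Consequently $\mathring{\boldsymbol{z}}_{\bold{X}_i}^{(l)}=\mathring{\boldsymbol{z}}_{\bold{X}_j}^{(l)}=:\mathring{\boldsymbol{z}}$, and likewise for $\bold{T}$. This formalizes the claim that normalization exacerbates issue (II): similar local networks of different scale become indistinguishable.

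\textbf{Step 2: the amplifier separates them.} Applying \eqref{eq:MAP} gives
\begin{equation*}
\boldsymbol{z}_{\bold{X}_i}^{(l)}-\boldsymbol{z}_{\bold{X}_j}^{(l)}=(\eta_i-\eta_j)\,\mathring{\boldsymbol{z}}=\frac{\pi_\eta\bigl(\log\tilde{d}_i-\log\tilde{d}_j\bigr)}{\sum_{m}\log\tilde{d}_m}\,\mathring{\boldsymbol{z}}.
\end{equation*}
The normalizing sum is a common positive constant whenever some training node has $\tilde{d}_m>1$. Since $\log$ is strictly increasing, the difference is nonzero exactly when $\tilde{d}_i\neq\tilde{d}_j$, $\pi_\eta\neq0$, and $\mathring{\boldsymbol{z}}\neq0$. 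Moreover, with $\pi_\eta>0$ the norm $\|\boldsymbol{z}^{(l)}_{\bold{X}_i}\|=(1+\eta_i)\|\mathring{\boldsymbol{z}}\|$ increases monotonically in $\tilde{d}_i$. This matches the intended semantics that more neighbors produce stronger interference, not merely a different representation.

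\textbf{Step 3: propagate across layers.} We extend to stacked NIM layers by induction. After the first layer the homogeneity assumption breaks, because neighbors with different degrees now have different scalings. We therefore argue only that degree information, once injected, is not erased: the final representation is a nontrivial function of $\tilde{d}_i$. We establish this by exhibiting the single-layer separation at the last layer, or at the first layer, and noting that subsequent layers compose injectively for generic parameters.

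\textbf{Main obstacle.} The hardest part is making the nondegeneracy conditions precise. We need $\mathring{\boldsymbol{z}}\neq0$, which requires $\sigma(\boldsymbol{W}\boldsymbol{z})\neq0$ and $\mathring{\pi}$ not cancelling the two branches. We also need generic injectivity in the multi-layer case. The first thing we would try is to state the proposition for generic parameters, meaning outside a measure-zero set. In that setting these conditions hold, and the single-layer computation above carries the argument.
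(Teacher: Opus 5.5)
Your Steps 1--2 are essentially the paper's proof. With identical representations throughout $\tilde{\mathbf{N}}_i$ and $\tilde{\mathbf{N}}_j$, the normalized IPAtt and SPAtt weights sum to one, so $\mathring{\boldsymbol{z}}^{(l)}_{\mathbf{X}_i}=\mathring{\boldsymbol{z}}^{(l)}_{\mathbf{X}_j}$ (likewise for $\mathbf{T}$), and the factors $1+\eta_i\neq 1+\eta_j$ then separate them. Your explicit conditions ($\pi_\eta\neq 0$, a positive normalizer, $\mathring{\boldsymbol{z}}\neq\boldsymbol{0}$) usefully sharpen what the paper leaves implicit.

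The paper stops at this single-layer computation, so Step 3 is unnecessary, and as written it is not sound. ReLU is not injective, and for fixed $\boldsymbol{z}\neq\boldsymbol{0}$ the weights with $\sigma(\boldsymbol{W}\boldsymbol{z})=\boldsymbol{0}$ form a set of positive measure, so the measure-zero genericity argument already fails for $\mathring{\boldsymbol{z}}\neq\boldsymbol{0}$. If you want a multi-layer version, take regular local networks with identical initial features and $\pi_\eta>0$; positive homogeneity of ReLU then gives $\boldsymbol{z}^{(L)}_{\mathbf{X}_j}=\bigl(\tfrac{1+\eta_j}{1+\eta_i}\bigr)^{L}\boldsymbol{z}^{(L)}_{\mathbf{X}_i}$.
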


\subsection{Representation balancing module}\label{sec:RB}
In this section, we aim to address issues of both confounding bias $p(t\mid \boldsymbol{x},\boldsymbol{x}_{\bold{G}}) \neq p(1-t\mid\boldsymbol{x},\boldsymbol{x}_{\bold{G}} )$
and interference bias $p(\boldsymbol{t}_{\bold{G}}\mid \boldsymbol{x},\boldsymbol{x}_{\bold{G}},t)\neq p(\boldsymbol{t}_{\bold{G}}\mid \boldsymbol{x},\boldsymbol{x}_{\bold{G}},1-t)$.  
Most ITE estimators for graph data mitigate these bias issues by a strategy of separately balancing representations $\boldsymbol{z}_i$, $\boldsymbol{z}_{\bold{X}_i}$, and $\boldsymbol{z}_{\bold{T}_i}$ between treated and control groups,
such as~\citet{pmlr-v130-ma21c}, \citet{ma2022learning}, \citet{jiang2022estimating}, and \citet{LIN2024SITE}. 
If we use their strategies for representation balancing, we need to achieve two goals:
$p(\boldsymbol{z},\boldsymbol{z}_{\bold{X}}\mid t=1) \approx p(\boldsymbol{z},\boldsymbol{z}_{\bold{X}}\mid t=0)$
and $p(\boldsymbol{z}_{\bold{T}}\mid \boldsymbol{z},\boldsymbol{z}_{\bold{X}},t=1)\approx p(\boldsymbol{z}_{\bold{T}}\mid \boldsymbol{z},\boldsymbol{z}_{\bold{X}},t=0)$
for mitigating confounding and interference biases, respectively.
This requires multiple hyperparameters to trade off each loss term for different representation balancing targets,
leading to costly hyperparameter selection.
Furthermore, 
there might exist bias caused by some unobserved variables~\cite{wang2023optimal}, 
which cannot be addressed by the strategy of separate balancing.  
To address these limitations, we propose a new representation balancing strategy that
can jointly balance representations $\boldsymbol{z}_i$, $\boldsymbol{z}_{\bold{X}_i}$, and $\boldsymbol{z}_{\bold{T}_i}$ between treated and control groups
while mitigating the bias issue caused by unobserved variables through a proximal factual outcome regularizer (PFOR)~\cite{courty2017joint,wang2023optimal}. 
Our key insight stems from the observation $p(\boldsymbol{z},\boldsymbol{z}_{\bold{X}}\mid t)p(\boldsymbol{z}_{\bold{T}}\mid \boldsymbol{z},\boldsymbol{z}_{\bold{X}},t)=p(\boldsymbol{z},\boldsymbol{z}_{\bold{X}},\boldsymbol{z}_{\bold{T}}\mid t)$.
This implies that we can balance $\boldsymbol{z}_i$, $\boldsymbol{z}_{\bold{X}_i}$, and $\boldsymbol{z}_{\bold{T}_i}$ between treated and control groups by balancing their joint distribution.
In this case, the model can adaptively prioritizes which parts of representations $\boldsymbol{z}_i$, $\boldsymbol{z}_{\bold{X}_i}$, and $\boldsymbol{z}_{\bold{T}_i}$ need to be balanced at each training iteration. 
Importantly, the proposed strategy has the potential to mitigate bias caused by some unobserved variables through a Wasserstein discrepancy~\cite{pmlr-v70-shalit17a} (see Definition~\ref{def:wass}) with a PFOR.

Specifically, let $\mathcal{W}(p_1,p_2)$ denote the Wasserstein discrepancy between two distributions $p_1$ and $p_2$, and $\Phi$ denote a map function that is achieved by combining the MLP and NIM layers (see Section~\ref{sec:RL}).
Let $\boldsymbol{u}_i=(\boldsymbol{x}_i,\boldsymbol{x}_{{\bold{G}_i}},\boldsymbol{t}_{{\bold{G}_i}})$ and 
$\boldsymbol{r}_i=\Phi(\boldsymbol{u}_i)=(\boldsymbol{z}_i,\boldsymbol{z}_{{\bold{X}_i}},\boldsymbol{z}_{\bold{T}_i})$ for simplicity. 

We minimize Wasserstein discrepancy with PFOR~\cite{courty2017joint, wang2023optimal} to balance representations $\boldsymbol{r}_i$ between different treatment groups. The definition for Wasserstein discrepancy is detailed in  Definition~\ref{def:wass}. 
As the scales of values of representations $\boldsymbol{z}$, $\boldsymbol{z}_{\bold{X}}$, and $\boldsymbol{z}_{\bold{T}}$ may be uneven, we propose a proxy module that contains a normalization layer followed by a projection function without nonlinear activation function.
Then, we balance representations $\boldsymbol{z}$, $\boldsymbol{z}_{\bold{X}}$, and $\boldsymbol{z}_{\bold{T}}$ jointly by balancing the output of the proxy module. 
There might be unobserved variables that introduce an additional bias issue~\cite{wang2023optimal}.
By applying the proposed joint balancing strategy,
we can use the PFOR~\cite{courty2017joint, wang2023optimal} to mitigate the bias from unobserved variables. 
Let $\boldsymbol{r}'_i$ be the output of the proxy module. 
 In this case, if we calculate unit-wise distance $D_{ij}=\Vert \boldsymbol{r}'_i-\boldsymbol{r}'_j\Vert^2$ for Wasserstein discrepancy, we can only mitigate the bias introduced by observed data.  Let $\boldsymbol{v}$ be the unobserved variables.
 To take $\boldsymbol{v}$ into account, the individual-wise distance needs to be modified as $D_{ij}=\Vert \boldsymbol{r}'_i-\boldsymbol{r}'_j\Vert^2 + \Vert \boldsymbol{v}_i-\boldsymbol{v}_j\Vert^2$. However, we do not have information about $\boldsymbol{v}$. 
 Inspired by \citet{wang2023optimal}, who designed a PFOR for the no-interference setting, we design a PFOR for scenarios involving  networked interference.
 Specifically, when we have balanced $\boldsymbol{r}$  (or $\boldsymbol{r}'$), and identical $t$, the only variable reflecting the variation of $\boldsymbol{v}$ is the outcome. Therefore, we can use information of outcomes to 
  replace $\boldsymbol{v}$ in the modified individual-wise distance by the  PFOR~\cite{courty2017joint,wang2023optimal}. Then, the individual-wise distance can be modified as:
  \begin{equation}
 \begin{aligned}
     D_{ij}^{\lambda_{D}} = \Vert \boldsymbol{r}'_i-\boldsymbol{r}'_j\Vert^2+
     \lambda_{D}\cdot&\biggl(\Vert y_i(1,\boldsymbol{t}_{\bold{G}_i})-y_j(1,\boldsymbol{t}_{\bold{G}_i})\Vert^2+\\&\;\;\Vert y_i(0,\boldsymbol{t}_{\bold{G}_i})-y_j(0,\boldsymbol{t}_{\bold{G}_i})\Vert^2\biggr), 
\end{aligned}
  \end{equation}
 where $\lambda_{D}$ is a hyperparameter. 
As only a part of potential outcomes is observed, we use factual and predicted outcomes to replace potential outcomes for $D_{ij}^{\lambda_{D}}$, then we have:
\begin{align}
     D_{ij}^{\lambda_{D}} = \Vert \boldsymbol{r}'_i-\boldsymbol{r}'_j\Vert^2+
     \lambda_{D}\cdot\biggl(\Vert y_i-\hat{y}_j\Vert^2+\Vert \hat{y}_i-y_j\Vert^2\biggr), 
 \end{align}
 where $i$ and $j$ represent two individuals in different treatment groups, and $\hat{y}_i$ represents the predicted outcome.
 \begin{remark}
Importantly, it cannot address bias introduced by unobserved variables through PFOR if $\boldsymbol{z}$, $\boldsymbol{z}_{\bold{X}}$,  and $\boldsymbol{z}_{\bold{T}}$ are balanced separately,  since  
 the outcome is not the solo variable that reflects the variation of unobserved variables $\boldsymbol{v}$ with the strategy of separate balancing.
 \end{remark}

\noindent\textbf{Variant.}
We propose a variant named GITE$_{\rm{v}}$ that uses an MLP with a nonlinear activation function for the proxy module, since the expressive ability of projection function without nonlinear activation function is often limited.
Let $\boldsymbol{r}_i'$  
be the output of the proxy module.
We add a term into loss: $\mathcal{L}_P = \Vert\boldsymbol{r}_i-\boldsymbol{r}_i' \Vert^2$ for GITE$_{\rm{v}}$ to ensure that $\boldsymbol{r}_i$ is close to $\boldsymbol{r}_i'$ when balancing the representations. 

\begin{table*}
\centering
\caption{Results (mean and standard errors) on the test sets. Results are averaged over ten executions.   
Results with \textbf{boldface} represent the lowest mean error, whereas results with \underline{underlines} represent the second and third lowest mean error. Here, the AMZ-N dataset is a sparse graph as its size of nodes $\approx$ that of edges, whereas the Flickr and Blog datasets are far more dense than the AMZ-N dataset.
}
\vspace{-0.17cm}
\begin{tabular}{l|ccccccccccc}
\hline  
 &  \multicolumn{2}{c}{AMZ-N}       & \multicolumn{2}{c}{Flickr}& \multicolumn{2}{c}{Blog}\\ 
{Method} & $\sqrt{\epsilon_{\textrm{MSE}}}$ & $\sqrt{\epsilon_{\textrm{PEHE}}}$    & 
$\sqrt{\epsilon_{\textrm{MSE}}}$ & $\sqrt{\epsilon_{\textrm{PEHE}}}$    & 
$\sqrt{\epsilon_{\textrm{MSE}}}$ & $\sqrt{\epsilon_{\textrm{PEHE}}}$    & 
\\ [0.05cm]\hline
\hline
TARNet~\cite{pmlr-v70-shalit17a}            & 0.97 ± 0.02                & 1.43 ± 0.01 
& 4.86 ± 0.12 & 4.44 ± 0.12 
& 25.18 ± 1.10 & 17.82 ± 1.11 
\\
BNN~\cite{Johansson2016}   & 1.00 ± 0.00                & 1.45 ± 0.00 
& 6.11 ± 0.00 & 4.57 ± 0.00 
& 
38.57 ± 0.00& 18.85 ± 0.00 
\\
CFR-MMD~\cite{pmlr-v70-shalit17a} & 0.95 ± 0.01   & 1.41 ± 0.01 
& 
4.92 ± 0.29 & 4.47 ± 0.13 
& 
26.16 ± 1.68& 17.67 ± 0.85
\\
CFR-Wass~\cite{pmlr-v70-shalit17a}  & 0.95 ± 0.01   & 1.40 ± 0.01  
& 4.87 ± 0.28& 4.44 ± 0.20 & 
25.75 ± 0.61& 17.96 ± 1.12
\\
ESCFR~\cite{wang2023optimal}  & 0.97 ± 0.01 & 1.42 ± 0.01  
& 5.34 ± 0.20& 4.38 ± 0.09
&
24.88 ± 1.68&18.41 ± 1.09
\\
RERUM~\cite{he2024rankability}  & 0.94 ± 0.00 & 1.41 ± 0.00  
& 4.81 ± 0.10& 4.44 ± 0.06
&
26.22 ± 1.17&17.58 ± 0.35
\\
NetDeconf~\cite{guo2020learning}  & 1.00 ± 0.00   & 1.46 ± 0.00  
& 6.10 ± 0.00& 4.60 ± 0.00
&33.71 ± 0.12&18.30 ± 0.69
\\
NetEst~\cite{jiang2022estimating}    &0.96 ± 0.02 &  1.47 ± 0.01  
& 7.53 ± 0.13& 4.75 ± 0.12& 
35.49 ± 0.27& 18.55 ± 0.19 
 \\
SPNet~\cite{huang2023modeling} & 1.09 ± 0.10  & 1.51 ± 0.07 
&
6.12 ± 0.01& 4.60 ± 0.03& 
32.12 ± 1.88& 22.42 ± 2.65
\\
DWR~\cite{zhao2024learning} & 0.87 ± 0.01  & 1.35 ± 0.01 
&
4.50 ± 0.55 & 4.43 ± 0.37& 
23.51 ± 0.55& \underline{16.82 ± 0.75}
\\
CauGramer~\cite{wucausal} & 1.00 ± 0.01  & 1.46 ± 0.01 
&
4.14 ± 0.16 & 4.43 ± 0.06& 
\underline{17.05 ± 2.57}& 23.86 ± 1.26
\\
GCN-HSIC~\cite{pmlr-v130-ma21c} & 0.82 ± 0.02 & 1.34 ± 0.02& 
4.86 ± 0.37 & 4.55 ± 0.24& 
25.07 ± 1.35& 16.90 ± 0.64
\\
SAGE-HSIC~\cite{pmlr-v130-ma21c} & 0.83 ± 0.02   & 1.34 ± 0.02 &
4.07 ± 0.11& 4.24 ± 0.12& 
22.08 ± 1.20& 18.78 ± 1.99
\\
 SITE~\cite{LIN2024SITE} 
 &\underline{0.77 ± 0.01}  &  1.34 ± 0.06 
 & 4.14 ± 0.27& 4.22 ± 0.21& 
19.24 ± 1.05& 17.87 ± 1.37
\\
IDENet~\cite{adhikari2025inferring} 
 &1.00 ± 0.00  &  1.46 ± 0.00
 & 3.82 ± 0.11& 4.13 ± 0.06& 
25.17 ± 3.08& 19.97 ± 2.09
\\
HyperSCI~\cite{ma2022learning} & 0.81 ± 0.02   & \underline{1.28 ± 0.02} & 
3.84 ± 0.17& 4.03 ± 0.23& 
18.32 ± 1.49& 17.90 ± 1.68
\\
HINITE~\cite{lin2023estimating}& 0.79 ± 0.02   & 1.30 ± 0.05 
&
\underline{3.81 ± 0.14}& \underline{3.87 ± 0.10}& 
17.17 ± 1.39& 17.30 ± 2.60
\\ \hline
\hline 
GITE (ours) & \textbf{0.75 ± 0.01}   & \textbf{1.21 ± 0.01} 
&\textbf{3.41 ± 0.17}& \textbf{3.46 ± 0.11}& 
\underline{14.36 ± 0.82}& \textbf{13.69 ± 1.46}
\\
GITE$_{\rm{v}}$ (ours) & \textbf{0.75 ± 0.01}   & \textbf{1.21 ± 0.01}  
&
\underline{3.44 ± 0.11}&\underline{3.47 ± 0.10}& 
\textbf{13.87 ± 0.98}& \underline{14.13 ± 1.88}
\\\hline
\end{tabular}
\label{table1:main}
\end{table*}

\subsection{Outcome prediction module}
Given the covariate representation $\boldsymbol{z}_i$, interference representations  $\boldsymbol{z}_{\bold{X}_i}$ and $\boldsymbol{z}_{\bold{T}_i}$, and treatment assignment $t_i$,
we train two predictors to infer the outcomes with different values of $t$. 

Specifically, let $f_{0}$ and $f_{1}$  denote the predictor for potential outcome with $t=0$ and $t=1$, respectively. Each predictor is achieved by an MLP.  
Let $\Theta$ be all learnable parameters of GITE.  We add L2 regularization into our loss function to avoid model overfitting.
The loss function $\mathcal{L}_{\rm{total}}$ of the proposed GITE consists of mean square error (MSE) between predicted and factual outcomes, Wasserstein discrepancy of representations between different treatment groups, and L2 regularization (denoted as $\Vert\Theta\Vert^2)$.  Each term in $\mathcal{L}_{\rm{total}}$ is traded off by hyperparameters $\beta$ and $\lambda$, as follows: 
\begin{align}
\mathcal{L}_{\rm{total}} = &\frac{1}{n_{\rm{tr}}}\sum_{i=1}^{n_{\rm{tr}}} \left(f_{t}(\boldsymbol{z}_i,\boldsymbol{z}_{{\bold{X}_i}},\boldsymbol{z}_{\bold{T}_i}) - y_{i}\right)^2 + \beta \cdot \mathcal{W} + \lambda \cdot \Vert \Theta \Vert^2.\label{eq:maintotalloss}
\end{align}
The parameters of GITE are optimized by minimizing $\mathcal{L}_{\rm{total}}$.
For GITE$_{\rm{v}}$, we add $\lambda_P \cdot \mathcal{L}_{P}$ into the Equation~(\ref{eq:maintotalloss}), where $\lambda_P$  is a hyperparameter.   
After training,  ITE can be estimated using the trained predictors and generated representations, as follows: \begin{equation}
    \hat{\tau}_i=f_{1}(\boldsymbol{z}_i,\boldsymbol{z}_{{\bold{X}_i}},\boldsymbol{z}_{\bold{T}_i})-f_{0}(\boldsymbol{z}_i,\boldsymbol{z}_{{\bold{X}_i}},\boldsymbol{z}_{\bold{T}_i}).
\end{equation}

\vspace{1em}
\noindent\textbf{Error bounds.}
To estimate ITE from observational data, the training of the model relies on factual outcomes instead of true ITE due to the absence of counterfactual outcomes. However,  confounding and interference biases may cause the performance of factual outcomes to poorly reflect that of ITE estimation, which raises concerns about the ability of the trained model to guide decision-making. To address this, \citet{pmlr-v70-shalit17a} and \citet{cai2023generalization} analyze the error bound for ITE estimation by building a bridge between errors in ITE estimation and factual outcome prediction. However, the former applies only to non-graph data, and the latter assumes neighbor interference rather than networked interference. Therefore, we analyze the error bound of ITE with networked interference based on the proposed representation balancing strategy, as detailed in Appendix~\ref{app:bound}.

\section{Experiments}\label{sec:all_exps}

In this section, we conducted experiments to answer the following research questions (RQs).
\vspace{-0pt}

\begin{itemize}
    \item \textbf{RQ~1}: do the proposed methods outperform baseline methods in ITE estimation with confounders and networked interference?
    \item \textbf{RQ~2}: are proposed components important to the proposed methods?
    \item \textbf{RQ~3}:  how sensitive are the proposed methods to their hyperparameters?
\end{itemize}

\begin{table*}
\centering
\caption{Results (mean and standard errors) of ablation experiments. 
 Results are averaged over ten executions. 
}
\vspace{0.0cm}
\begin{tabular}{l|ccccccccccc}
\hline
  & \multicolumn{2}{c}{AMZ-N}       
  & \multicolumn{2}{c}{Flickr}& \multicolumn{2}{c}{Blog}\\
Method & $\sqrt{\epsilon_{\textrm{MSE}}}$ & $\sqrt{\epsilon_{\textrm{PEHE}}}$    & 
$\sqrt{\epsilon_{\textrm{MSE}}}$ & $\sqrt{\epsilon_{\textrm{PEHE}}}$    & 
$\sqrt{\epsilon_{\textrm{MSE}}}$ & $\sqrt{\epsilon_{\textrm{PEHE}}}$     
\\ \hline \hline
GITE$_{\rm{NR}}$           & 0.78 ± 0.01          & 1.27 ± 0.03 
&
\underline{3.45 ± 0.11} & 3.56 ± 0.07 
& 14.60 ± 1.33& 14.91 ± 2.62 
\\
GITE$_{\rm{NB}}$   & \underline{0.76 ± 0.01}                & \underline{1.22 ± 0.01} 
&
3.47 ± 0.12& \underline{3.50 ± 0.14} 
& 14.80 ± 1.04 & 15.24 ± 3.36
\\
GITE$_{\rm{NS}}$    & \underline{0.76 ± 0.01}   & \underline{1.22 ± 0.01}  
&
3.49 ± 0.06 & 3.63 ± 0.09 
& 14.80 ± 1.31& 14.79 ± 2.19 
\\
GITE$_{\rm{NM}}$    & \underline{0.76 ± 0.01}   & \underline{1.22 ± 0.01}  
&
3.47 ± 0.21 & 3.61 ± 0.06 
& 14.54 ± 1.15& \underline{14.56 ± 1.24} 
\\
GITE$_{\rm{NATT}}$    & 0.78 ± 0.03   & 1.24 ± 0.02  
&
6.11 ± 0.00 & 4.57 ± 0.03 
& 26.98 ± 9.28& 20.24 ± 3.55 
\\
 GITE$_{\rm{NA}}$       &0.77 ± 0.01  &  1.23 ± 0.01 
 & 6.96 ± 1.79 & 5.63 ± 1.86 
 & 22.61 ± 6.06 & 23.23 ± 5.94 
 \\
 GITE$_{\rm{NP}}$ & \underline{0.76 ± 0.01}  & \underline{1.22 ± 0.01} 
&3.46 ± 0.08 & 3.52 ± 0.08 
& \underline{14.47 ± 0.97} & 14.58 ± 1.31 
\\
GITE$_{\rm{BS}}$ & 0.77 ± 0.01  & \underline{1.22 ± 0.01} 
&3.52 ± 0.10 & 3.53 ± 0.17 
& 15.24 ± 1.65& 14.65 ± 1.46 
\\
\hline \hline
GITE & \textbf{0.75 ± 0.01}   & \textbf{1.21 ± 0.01} 
&
\textbf{3.41 ± 0.17}& \textbf{3.46 ± 0.11}
& \underline{14.36 ± 0.82}& \textbf{13.69 ± 1.46}
\\
GITE$_{\rm{v}}$  & \textbf{0.75 ± 0.01}   & \textbf{1.21 ± 0.01} 
&
\underline{3.44 ± 0.11}& \underline{3.47 ± 0.10}& 
\textbf{13.87 ± 0.98}& \underline{14.13 ± 1.88}
\\\hline
\end{tabular}
\label{table4:ablation}
\end{table*}

\subsection{Experimental setting}\label{sec4.1:expset}
\textbf{Datasets.}
We conducted experiments on three public datasets widely used in the task of ITE estimation with interference:
Amazon negative (abbreviated as AMZ-N) dataset~\cite{he2016ups}, Flickr dataset~\cite{wang2013learning}, and BlogCatalog dataset (abbreviated as Blog)~\cite{li2015unsupervised,li2019adaptive}.
The AMZ-N dataset contains $14,538$ items with $15,011$ edges. For the AMZ-N dataset, we used the covariates, treatments, outcomes, and ITE, all of which were provided by \citet{rakesh2018linked}.
The Flickr dataset contains $7,575$ users with $479,476$ directed edges.
We used the $1,206$-dimensional embeddings of user profiles that were provided by \citet{guo2020learning}. 
The Blog dataset contains $5,196$ units with $343,486$ directed edges.
We used the $2,198$-dimensional embeddings of user profiles that were provided by \citet{guo2020learning}.
The details of each dataset are described in Appendix~\ref{app:datasets}.
The ground truth of ITE is hard to collect due to the lack of ground truth regarding counterfactual outcomes.
Following existing works~\cite{cai2023generalization,jiang2022estimating,pmlr-v130-ma21c}, we transformed the covariates and graph structures to simulate treatments and outcomes with confounders and networked interference for the Flickr and Blog datasets, as detailed in Appendix~\ref{App:simulation}.

\vspace{1em}
\noindent\textbf{Baselines.}
Baseline methods can be  divided into the following four categories.
(I) ITE estimators for non-graph data: BNN~\cite{Johansson2016}, CFR-MMD~\cite{pmlr-v70-shalit17a},  CFR-Wass~\cite{pmlr-v70-shalit17a}, 
     TARNet~\cite{pmlr-v70-shalit17a},  ESCFR~\cite{wang2023optimal}, and RERUM~\cite{he2024rankability}. 
(II) ITE estimator for graph data without addressing interference: NetDeconf~\cite{guo2020learning}.
(III) ITE estimators for graph data with addressing neighbor interference only: NetEst~\cite{jiang2022estimating},  
     SPNet~\cite{huang2023modeling},  DWR~\cite{zhao2024learning}, and CauGramer~\cite{wucausal}.
(IV) ITE estimators for graph data with addressing networked interference: GCN-HSIC~\cite{pmlr-v130-ma21c}, SAGE-HSIC~\cite{pmlr-v130-ma21c},  SITE~\cite{LIN2024SITE}, IDENet~\cite{adhikari2025inferring} model networked interference by using GCN or mean aggregation function;
HyperSCI~\cite{ma2022learning} and HINITE~\cite{lin2023estimating} model networked interference by using GAT.
We describe the details of each baseline in Appendix~\ref{app:detailsbaselines}.

\vspace{1em}
\noindent\textbf{Metrics.} Following \citet{pmlr-v130-ma21c} and \citet{LIN2024SITE}, we consider two widely used metrics $\sqrt{\epsilon_{\rm{MSE}}}$ and $\sqrt{\epsilon_{\rm{PEHE}}}$ for all datasets.
$
\sqrt{\epsilon_{\mathrm{MSE}}}$ quantifies the performance in outcome prediction, while $\sqrt{\epsilon_{\text{PEHE}}}
$ quantifies the performance in ITE estimation. 
They are defined as follows: 
\begin{equation}
\begin{aligned}
\sqrt{\epsilon_{\mathrm{MSE}}}=\sqrt{\frac{1}{n_{\rm{te}}}\sum^{n_{\rm{te}}}_{i=1}(\hat{y}_i-y_{i})^2}, \; \sqrt{\epsilon_{\text{PEHE}}}=\sqrt{\frac{1}{n_{\rm{te}}}\sum^{n_{\rm{te}}}_{i=1} \left( \hat{\tau}_i-\tau_i\right)^2},\nonumber
\end{aligned}
\end{equation}
where $n_{\rm{te}}$ is the size of the test set.
We randomly partitioned all datasets into training/validation/test splits with a ratio of $70\%/15\%/15\%$ and averaged results over ten repeated executions.  We defer implementation and hyperparameter details to Appendix~\ref{app:Implementation}.

\begin{figure*}
	\centering
 \subfloat[ AMZ-N,   $\beta$,  $\sqrt{\epsilon_{\textrm{MSE}}}$.]{\includegraphics[width=.24\textwidth]{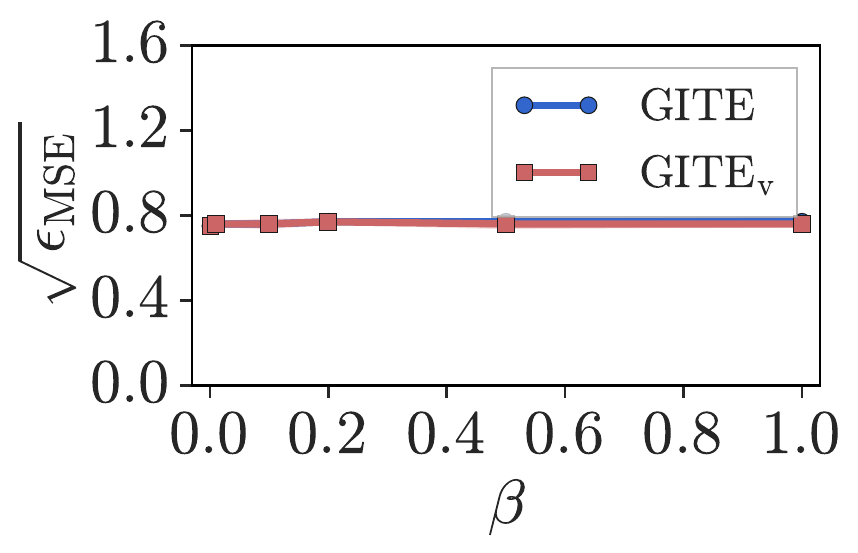}} \hspace{1pt}
	\subfloat[ AMZ-N, $\beta$, $\sqrt{\epsilon_{\textrm{PEHE}}}$.]
 {\includegraphics[width=.24\textwidth]{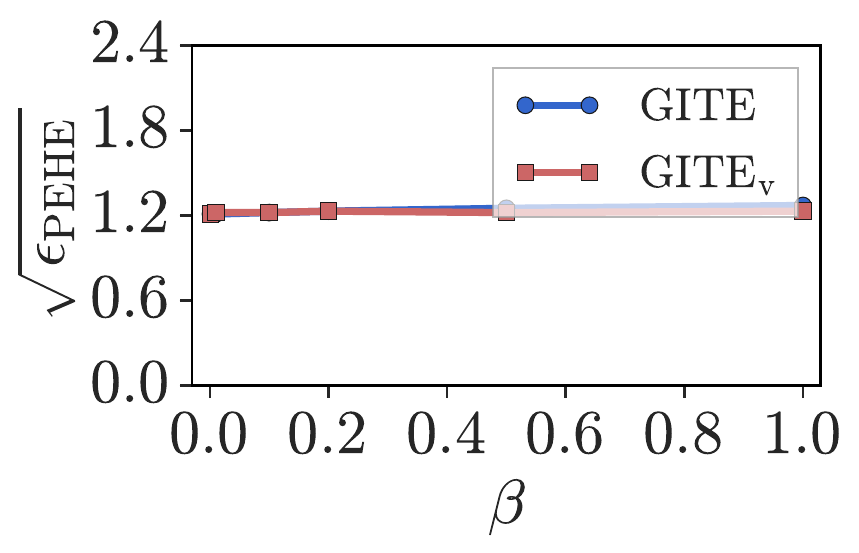}} \hspace{1pt}
 \subfloat[AMZ-N,  $\lambda$,  $\sqrt{\epsilon_{\textrm{MSE}}}$.]{\includegraphics[width=.24\textwidth]{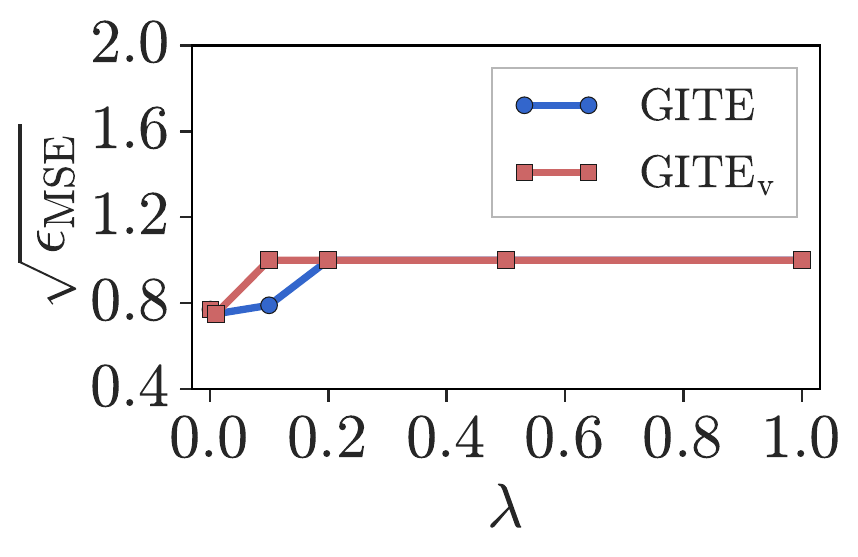}} \hspace{1pt}
	\subfloat[AMZ-N, $\lambda$,  $\sqrt{\epsilon_{\textrm{PEHE}}}$.]
 {\includegraphics[width=.24\textwidth]{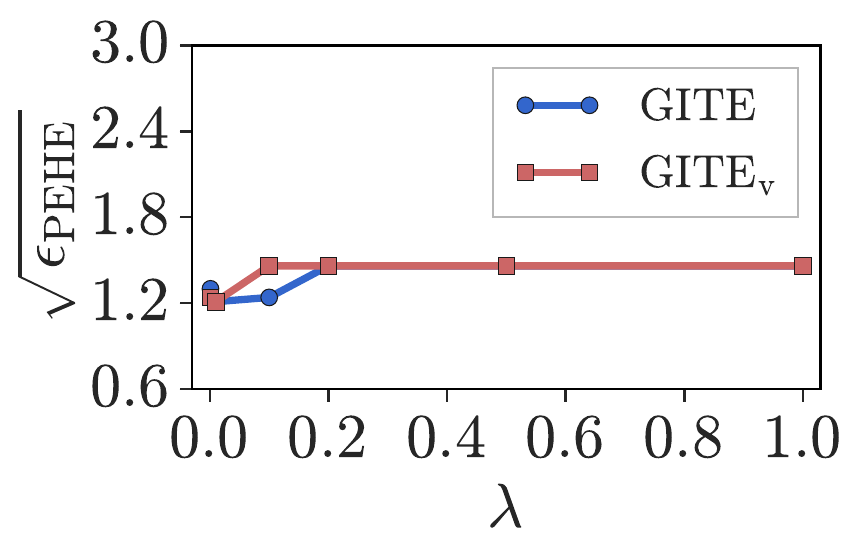}}\\
\hspace{-0pt}	\subfloat[ Flickr, $\beta$, $\sqrt{\epsilon_{\textrm{MSE}}}$.]{\includegraphics[width=.242\textwidth]{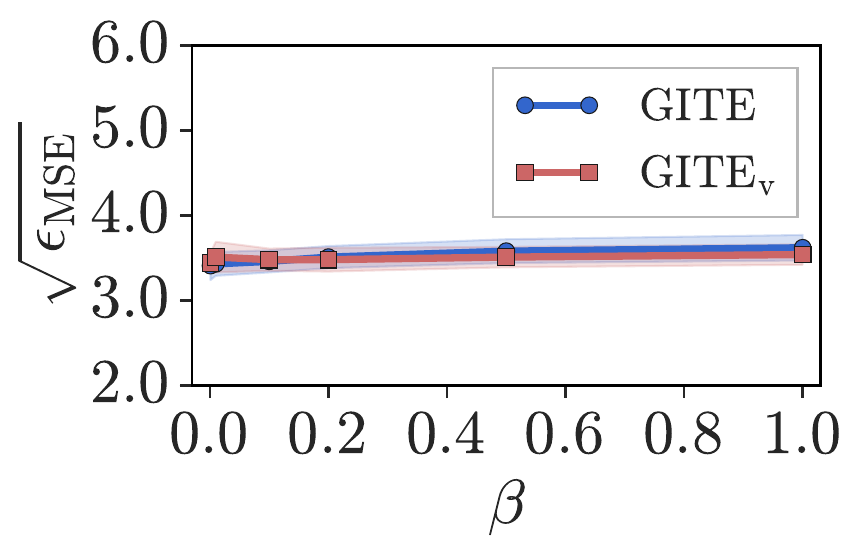}} \hspace{1pt}
	\subfloat[ Flickr, $\beta$, $\sqrt{\epsilon_{\textrm{PEHE}}}$.]
 {\includegraphics[width=.242\textwidth]{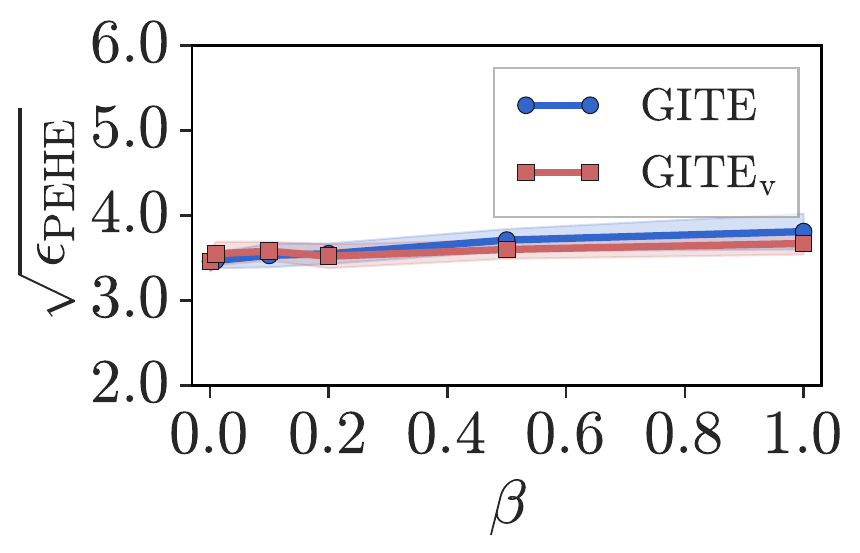}} \hspace{1pt}
	\subfloat[  Flickr, $\lambda$, $\sqrt{\epsilon_{\textrm{MSE}}}$.]{\includegraphics[width=.242\textwidth]{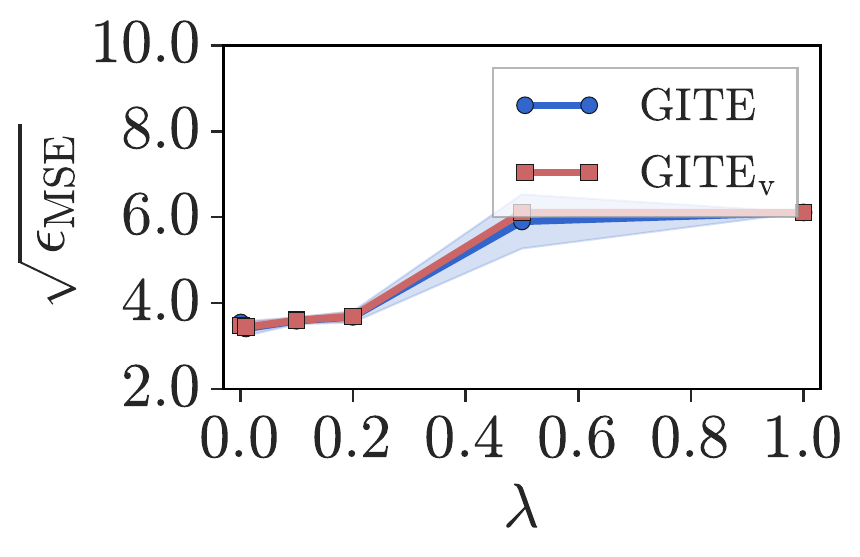}} \hspace{1pt} \subfloat[ Flickr, $\lambda$, $\sqrt{\epsilon_{\textrm{PEHE}}}$.]{\includegraphics[width=.242\textwidth]{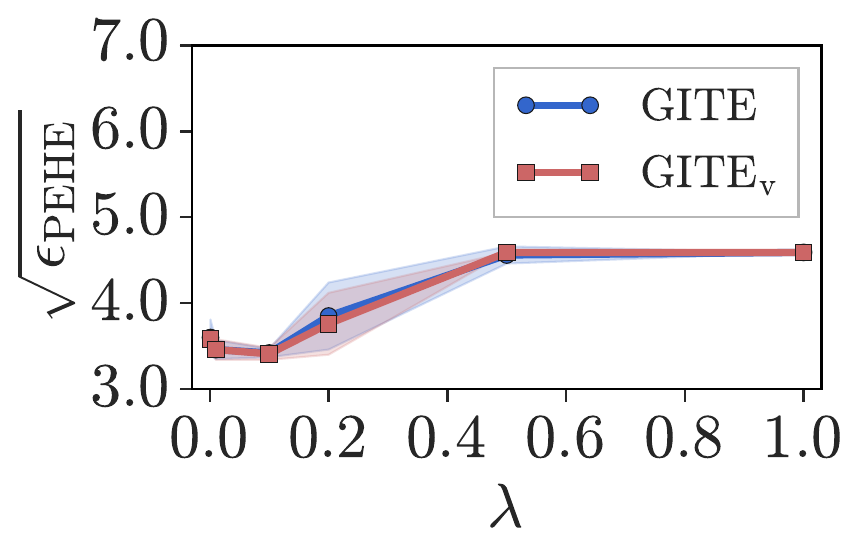}} \\
 \hspace{-2pt}\subfloat[Blog, $\beta$, $\sqrt{\epsilon_{\textrm{MSE}}}$.]{\includegraphics[width=.25\textwidth]{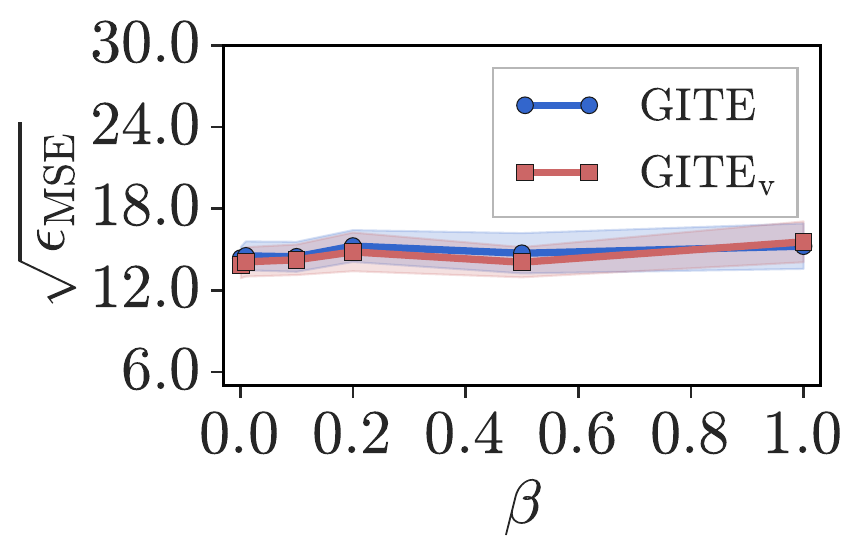}} 
	\subfloat[Blog,  $\beta$, $\sqrt{\epsilon_{\textrm{PEHE}}}$.]
 {\includegraphics[width=.25\textwidth]{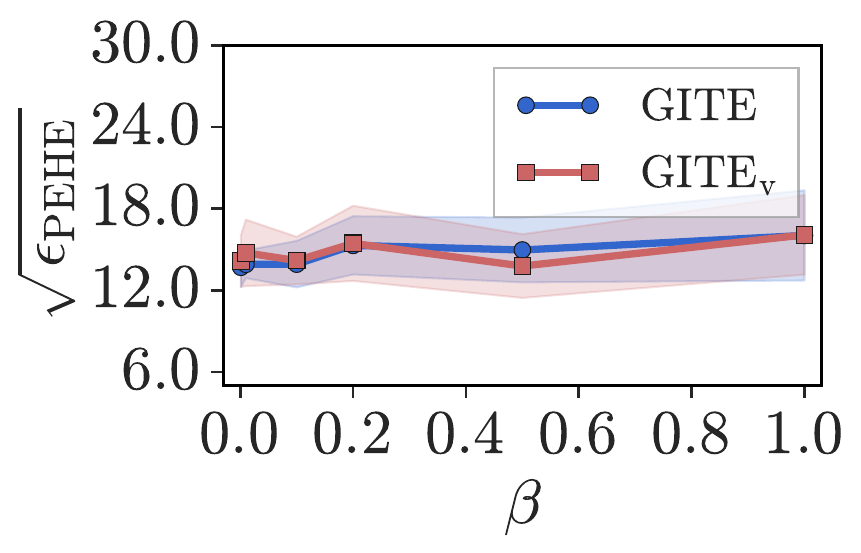}}
 \subfloat[Blog, $\lambda$, $\sqrt{\epsilon_{\textrm{MSE}}}$.]{\includegraphics[width=.25\textwidth]{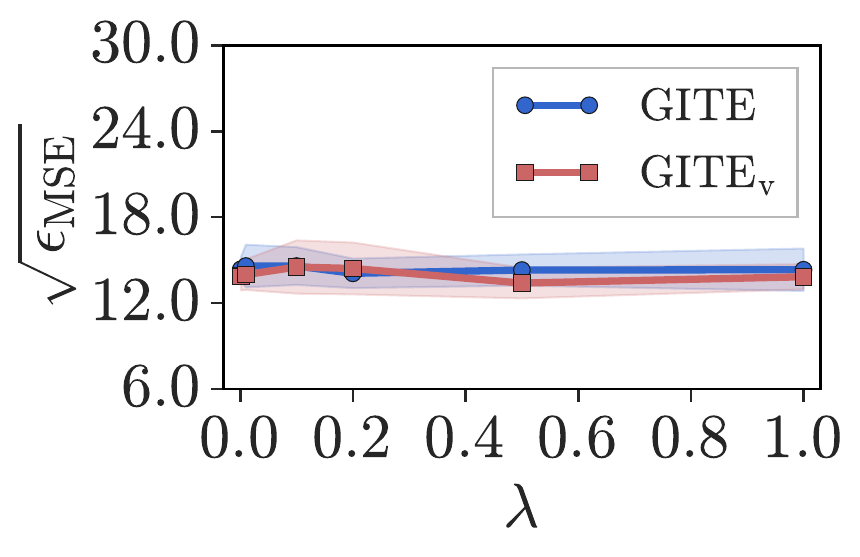}} \hspace{-3pt}
	\subfloat[Blog, $\lambda$, $\sqrt{\epsilon_{\textrm{PEHE}}}$.] 
{\includegraphics[width=.25\textwidth]{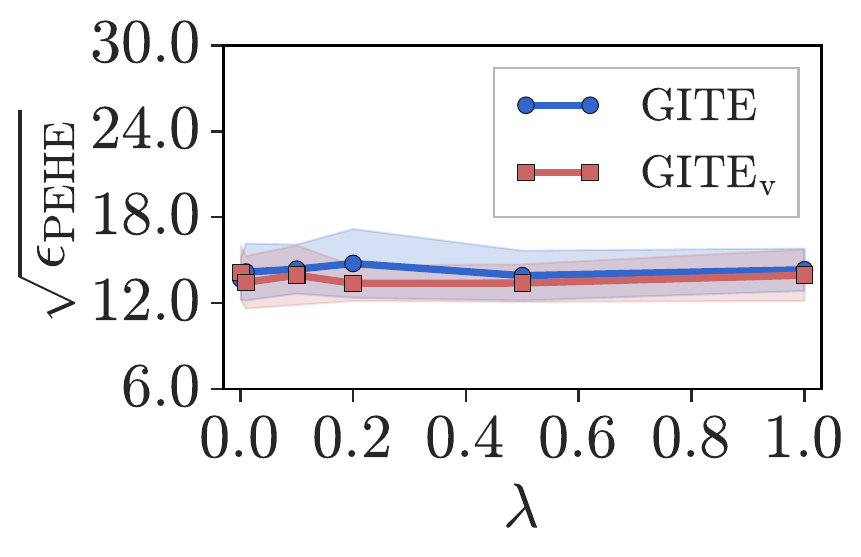}} 

\caption{Results (mean and standard errors) of sensitivity experiments for hyperparameters $\beta$ and $\lambda$. 
Results are averaged over ten executions.
}
\label{fig:expsen}
\end{figure*}

\subsection{Performance evaluation experiments (RQ~1)} As shown in Table~\ref{table1:main}, we conducted experiments to answer RQ~1.
Overall, GITE and GITE$_{\rm{v}}$ outperform all baseline methods, which shows their effectiveness. 
Specifically, 
GAT-based methods (e.g., HyperSCI, HINITE) generally outperform GCN/GNN-based (GCN-HSIC, SAGE-HSIC, and SITE) methods in ITE estimation under DNE, as they partially account for differing importance of neighbors, but still ignore variations in neighbor size. GITE outperforms GAT-based methods by using two partial attention mechanisms and a message amplifier to properly model DNE.
Furthermore, we can observe that the improvements achieved by the proposed methods in the performance of outcome prediction and ITE estimation on the Flickr and Blog datasets are more significant than those on the AMZ-N dataset.
We consider this is because the Flickr and Blog datasets contain far more edges than AMZ-N, which introduces more severe DNE,
whereas 
the AMZ-N is a sparse graph, which contains $14,538$ nodes with only $15,011$ directed edges. 
On the Flickr and Blog datasets, the baseline methods may generate inappropriate interference representations due to being unable to properly capture DNE,
which degrades the performance in both outcome prediction and ITE estimation.
This reveals that properly capturing DNE is important to ITE estimation from graph data.
On the AMZ-N dataset, GCN, mean aggregation, and GAT performed well in generating interference representations,
as the network is relatively sparse, and they suffer less from DNE.
It is noteworthy that even on such a sparse graph, GITE achieves approximately $2.6\%$ improvement in outcome prediction performance and $5.5\%$ improvement in ITE estimation performance.

\subsection{Ablation experiments (RQ~2)} 
 To answer RQ~2, we conducted ablation experiments.
We first introduce several variants of the original GITE with ablation.
GITE$_{\rm{NR}}$ removes the L2 regularization by setting $\lambda=0$.
GITE$_{\rm{NB}}$ removes representation balancing module by setting $\beta=0$.
GITE$_{\rm{NS}}$ removes the message amplifier and importance estimated by SPAtt.
GITE$_{\rm{NM}}$ removes the message amplifier module.
GITE$_{\rm{NATT}}$ removes attention but keeps the message amplifier.
GITE$_{\rm{NA}}$ removes both importance estimated by SPAtt and IPAtt, i.e., using the sum operation for networked interference modeling.
GITE$_{\rm{NP}}$ removes the proxy module but still jointly balances representations.
GITE$_{\rm{BS}}$ balances representations separately.

Results of ablation experiments are shown in Table~\ref{table4:ablation}.
Overall, the results show that each component is important to the proposed methods.
In particular, we observe that the performance declines significantly on the Flickr and Blog datasets when 
removing both partial importance estimated by SPAtt and IPAtt. 
This is because GITE$_{\rm{NA}}$ uses the sum operation for aggregation,
which leads to an issue of numerical explosion for individuals with many neighbors and connections in graphs.
This makes it difficult to train the model.
We can observe that the issue is not serious on the AMZ-N dataset, which is very sparse.   

\subsection{Sensitivity experiments (RQ~3)} 
To answer RQ~3, we tested GITE and GITE$_{\rm{v}}$
with different values of $\beta$ and $\lambda$ in the range $\{0.001,0.01,0.1,0.2,0.5,1.0\}$. 
Results are shown in Figure~\ref{fig:expsen}. 
We observe that no significant changes in performance with different values of $\beta$ on the AMZ-N and Blog datasets.
However, we can also observe that there is a performance degradation when $\beta>0.2$ on the Flickr dataset, thus we recommend searching the value of $\beta$ in the range $(0, 0.2]$.
Furthermore, we observe that setting a large value of $\lambda$ ($> 0.2$) can result in significant performance degradation on the AMZ-N and Flickr datasets,
as models cannot update their weights with a large value of $\lambda$.
Thus, we recommend searching the value of $\lambda$ in the range $(0, 0.2]$.
More sensitivity experiments are detailed in Appendix~\ref{app:expsen}.

\subsection{Additional experiments} Additional experiments that address further RQs~4, 5, and 6 are detailed in the Appendices \ref{app:trainingtime}, \ref{appexp:diffatt},  and \ref{appexp:diffMA}. 


\section{Conclusion}
In this study, we study an important issue: DNE, which remains a challenge for previous approaches. We proposed a novel method to address this issue and conducted experiments to demonstrate the effectiveness of our method in ITE estimation from graph data with DNE, which reveals the importance of capturing DNE. We also introduce a representation balancing strategy,  while theoretically analyzing error bound based on this strategy in Appendix~\ref{app:bound}.
 Future research directions are discussed in Appendix~\ref{app:futurework}.

\section*{Acknowledgments}
This work was supported by JSPS KAKENHI (Grant-in-Aid for Scientific Research) B: Grant Number 26K02984, and supported by JST SPRING, Grant Number JPMJSP2110.


\bibliographystyle{ACM-Reference-Format}
\bibliography{sample-base}

\appendix

\theoremstyle{plain}
\clearpage
\section{Notation table}\label{app:notationtable}
We provide a notation table in Table~\ref{tab:notation}.
\begin{table*}
\centering
\caption{Notation Table.}
\begin{tabular}{lc}
\hline 
Notation & Definition  \\
\hline 
\hline 
$N$ / $n_{\text{tr}}$ / $n_{\text{te}}$& \small The number of all / training / test individuals  \\
$\boldsymbol{x}_i$ / $\boldsymbol{X}$& Covariates of the  individual $i$ / all individuals \\
$t$ / $\boldsymbol{T}$& \small Treatments of the  individual $i$ / all individuals  \\
$y_i$ / $\boldsymbol{Y}$& \small Factual outcomes of the  individual $i$ / all individuals \\
$\bold{G}_i$ &  Related individuals of $i$ \\
$\boldsymbol{x}_{\bold{G}_i}$ & Covariates of the related individuals of $i$ \\
$t_{\bold{G}_i}$ & Treatments of the related individuals of $i$ \\
$y_i(1, t_{\bold{G}_i})$ / $y_i(0, t_{\bold{G}_i})$& \small Potential outcomes of the individual $i$ with interference $t_{\bold{G}}$ and $t = 1$ / $0$\\
$\boldsymbol{A}$ & \small Adjacency matrix of the graph \\
$\bold{N}_i$ & \small Neighbors of the individual $i$  \\
$\boldsymbol{z}_i$ /
$\boldsymbol{z}_{\bold{X}_i}$ / $\boldsymbol{z}_{\bold{T}_i}$   & \small Representations of $\boldsymbol{x}_i$ / $\boldsymbol{x}_{\bold{G}_i}$ / $t_{\bold{G}_i}$  
\\
$\boldsymbol{z}_{\bold{S}_i}$   & Representations of the structure of
the local network of  $i$
 \\
$\phi_{\bold{S}}$ / $\phi_{\bold{X}}$ / $\phi_{\bold{T}}$ & \small Map function for generating representations $\boldsymbol{z}_{\bold{S}_i}$ / $\boldsymbol{z}_{\bold{X}_i}$
 / $\boldsymbol{z}_{\bold{T}_i}$\\
$f_1$ / $f_0$&  \small Outcome predictors for $y_i(1, t_{\bold{G}})$ / $y_i(0, t_{\bold{G}})$\\
\hline
\end{tabular}
\label{tab:notation}
\end{table*}

\section{Attention mechanisms for IPAtt and SPAtt}\label{app:detailsforatt}
Several attention mechanisms can be implemented for the proposed IPAtt and SPAtt mechanisms. We consider two widely used attention mechanisms: GAT~\cite{2017gat} and the attention mechanism of Transformer based on query and key vectors (abbreviated as QK-based AT)~\cite{vaswani2017attention,NEURIPS2021_f1c15925}. Let $a(\boldsymbol{p}_i,\boldsymbol{p}_k)$ denote the mechanism that estimates the importance between two individuals based on the inputs.  

The implementation of  $a(\boldsymbol{p}_i,\boldsymbol{p}_k)$ for  IPAtt and SPAtt mechanisms with GAT~\cite{2017gat} is
as follows:
\begin{align}
a\left(\boldsymbol{p}_i,\boldsymbol{p}_k\right)=\mathrm{LeakyReLU}\biggr(\left(\boldsymbol{w}^{(l)}\right)^{\top}[\boldsymbol{W}^{(l)}\boldsymbol{p}_i\Vert\boldsymbol{W}^{(l)}\boldsymbol{p}_k]\biggr), \label{eq:gatcomputation}
\end{align}
where $\mathrm{LeakyReLU}$ denotes $\mathrm{LeakyReLU}$ activation function~\cite{maas2013rectifier}, $\boldsymbol{w}^{(l)}$ denotes a learnable parameter vector, and $\boldsymbol{W}^{(l)}$ denotes a learnable parameter matrix.

The implementation of  $a(\boldsymbol{p}_i,\boldsymbol{p}_k)$ for  IPAtt and SPAtt mechanisms with the QK-based AT~\cite{vaswani2017attention,NEURIPS2021_f1c15925}, is as follows:
\begin{align}
a\left(\boldsymbol{p}_i,\boldsymbol{p}_k\right)=\Biggr(\frac{\boldsymbol{p}_{\text{Q}_i}\cdot\left(\boldsymbol{p}_{\text{K}_k}\right)^{\top}}{\sqrt{c_{\text{K}}}}\Biggr), \;\boldsymbol{p}_{\text{Q}_i}=\boldsymbol{W}^{(l)}_{\text{Q}}\boldsymbol{p}_i, \; \boldsymbol{p}_{\text{K}_k}=\boldsymbol{W}^{(l)}_{\text{K}}\boldsymbol{p}_k,
 \end{align}
 where $\boldsymbol{p}_{\text{Q}}$ denotes a query vector, $\boldsymbol{p}_{\text{K}}$ denotes a key vector, $c_{\text{K}}$ denotes the dimension of key vector, $\boldsymbol{W}^{(l)}_{\text{Q}}$ denotes a learnable parameter matrix for the query vector, and $\boldsymbol{W}^{(l)}_{\text{K}}$ denotes a learnable parameter matrix for the key vector. 
 
 For our experiments in Section~\ref{sec:all_exps}, we used the implementation with GAT by default for both IPAtt and SPAtt mechanisms. Additional experiments using implementation with QK-based AT are detailed in Appendix~\ref{appexp:diffatt}.

\section{Proof of existing interference modeling methods cannot fully capture DNE}\label{proof:attfail}

In this section, we prove that existing interference modeling methods cannot capture DNE for some local networks. DNE  consists of two sub-issues:
(I) the importance of different neighbors in contributing to interference varies~\cite{huang2023modeling, lin2023estimating, ma2022learning}, and 
(II) the scale of neighbors varies, leading to different levels of  interference (see Figure~\ref{exp:fails}).

 Let  $\bold{N}_{i}$ denote the set of neighbors of the individual $i$, $d_{i}$ denote degree of the individual $i$, 
$\boldsymbol{p}\in \mathbb{R}^{c_p}$ denote the interference-related information of an individual, which serves as the input of a mean aggregation, GCN, or GAT layer. Here, $\boldsymbol{p}$ is typically initialized as individual information, such as covariates and treatment of the individual, and $c_p$ depends on the specific initialization strategy of $\boldsymbol{p}$.
Let $\boldsymbol{p}'\in \mathbb{R}^{c_{p'}}$ denote the interference representation generated by such a layer for an individual and $\tilde{\mathbf{N}}_i = \mathbf{N}_i\cup i$ denote the set of neighbors of individual $i$ with the self-loop.
\begin{definition}~\label{def:weighting}
    Given an individual $i$ and the set of neighbors $\bold{N}_{i}$, let $a(i,k)$ be a learnable importance estimation mechanism that can assign a non-negative weight to each neighbor $k \in \bold{N}_{i}$. This weight adaptively captures the importance of the neighbor $k$ in contributing to the interference received by the individual $i$, as determined by their interference-related and structural information.  
\end{definition}
\begin{definition}~\label{def:mathDNE}
    Given an individual $i$ and the set of related individuals $\bold{G}_i$, let $\delta_i = f_{\bold{G}}(\boldsymbol{x}_{\bold{G}_i},t_{\bold{G}_i})$ be the effect of interference received by the individual $i$ with DNE. $f_{\bold{G}}(\boldsymbol{x}_{\bold{G}_i},t_{\bold{G}_i}) = g_{\bold{X}}(\boldsymbol{x}_{\bold{G}_i}) + g_{\bold{T}}(t_{\bold{G}_i})$, where $g_{\bold{X}}(\boldsymbol{x}_{\bold{G}_i}) $ and $ g_{\bold{T}}(t_{\bold{G}_i})$ are two aggregation functions.
\end{definition}

Here, we can consider an example for aggregation functions $g_{\bold{X}}(\boldsymbol{x}_{\bold{G}_i}) $ and $ g_{\bold{T}}(t_{\bold{G}_i})$. For simplicity, we consider that $\bold{G}_i$ contains two-hop neighbors for each individual.  In this case, $g_{\bold{X}}(\boldsymbol{x}_{\bold{G}_i}) = s(\bold{N}_i)\cdot\sum_{j \in \bold{N}_i}\alpha_{ij}\cdot s(\bold{N}_j)
\cdot \sum_{k \in \bold{N}_j}\alpha_{jk}\cdot(\boldsymbol{w}_{\bold{X}})^{\top}\cdot\boldsymbol{x}_k$ and  $g_{\bold{T}}(t_{\bold{G}_i}) = s(\bold{N}_i)\cdot\sum_{j \in \bold{N}_i}\alpha_{ij}\cdot s(\bold{N}_j)
\cdot \sum_{k \in \bold{N}_j}\alpha_{jk}\cdot w_{\bold{T}}\cdot t_k$, where $s(\cdot)$ denotes a mechanism that identify the neighbor scales of the individual $i$, $\alpha_{ij}$ denotes the importance of interference between individuals $i$ and $j$, $\boldsymbol{w}_{\bold{X}}$ and $w_{\bold{T}}$ denotes weight parameters, which can be learned from data.

 To address issue (I), a proper method should contain a mechanism, as defined in Definition~\ref{def:weighting}. Such a mechanism can be learned in a data-driven manner, which enables the model to adaptively estimate the importance of different neighbors, rather than assigning equal or fixed weights to all neighbors. Due to the existence of DNE, different neighbors of an individual contribute equally to the interference received by this individual only if they have both similar interference-related information and local network structures.
 To address issue (II),  a proper method 
 needs to generate distinct representations for individuals exposed to different local networks where the scales of neighbors differ. To this end, the mehod needs to include a mechanism that can identify scales of neighbors in $g_{\bold{X}}(\boldsymbol{x}_{\bold{G}_i}) $ and $ g_{\bold{T}}(t_{\bold{G}_i})$. 
 If an interference modeling method fails to address either issue (I) or (II) for individuals exposed to different local networks, it cannot capture the DNE for them.
Specifically, this failure arises when the method either lacks a mechanism to assign importance to neighbors by adaptively estimating their contributions to interference,  which do not address issue (I), or when it generates identical representations for individuals exposed to local networks of different scales of neighbors, which do not address issue (II).

 Existing methods for interference modeling apply mean aggregation~\cite{cai2023generalization,chen2024doubly,doi:10.1080/01621459.2020.1768100,forastiere2022estimating,jiang2022estimating,pmlr-v130-ma21c}, GCN~\cite{cai2023generalization,chen2024doubly,huang2023modeling,jiang2022estimating,LIN2024SITE,pmlr-v130-ma21c,adhikari2025inferring}, or GAT~\cite{huang2023modeling,lin2023estimating,ma2022learning, zhao2024learning} to generate interference representations.    
The mean aggregation can be defined as follows~\cite{pmlr-v130-ma21c}:
\begin{equation}
    \boldsymbol{p}_i'=\sigma\left(\sum_{k \in \tilde{\mathbf{N}}_i} \frac{1}{|\tilde{\mathbf{N}}_i|}\boldsymbol{W}\boldsymbol{p}_k\right), \label{pro:meanagg}
\end{equation}
where $\boldsymbol{W}$ denotes a learnable parameter matrix. Notably, $\boldsymbol{W}$ is not always applied for mean aggregation, as seen in methods for treatment aggregation~\cite{cai2023generalization,jiang2022estimating}. 
The individual-level aggregation of GCN can be defined as follows~\cite{jiang2022estimating,kipf2016semi}:  
\begin{equation}
\boldsymbol{p}_i'=\sigma\left(\sum_{k \in \tilde{\mathbf{N}}_i}  \frac{1}{\sqrt{\tilde{d}_i \tilde{d}_k}}\boldsymbol{W}\boldsymbol{p}_k\right), 
\label{pro:GCNagg}
\end{equation}
where $\tilde{d}_i$ is the degree of the individual $i$ with the self-loop. The individual-level aggregation of GCN is transformed from the aggregation mechanism of the original GCN~\cite{kipf2016semi}.
The individual-level aggregation of GAT can be defined as follows~\cite{2017gat}: 
\begin{equation}
\boldsymbol{p}'_i=\sigma\left(\sum_{k \in \tilde{\mathbf{N}}_i} \alpha_{ik} \cdot \boldsymbol{W}\boldsymbol{p}_k\right),
\label{pro:GATagg}
\end{equation}
where $\alpha_{ik}$ is the estimated attention weight between individuals $i$ and $k$, computed by the graph attention mechanism~\cite{2017gat}. Here, $\alpha_{ik}$ is typically normalized by softmax, such that $\sum_{k \in \tilde{\mathbf{N}}_i} \alpha_{ik} = 1$, as seen in previous methods~\cite{huang2023modeling,lin2023estimating,ma2022learning,2017gat}.
 These methods cannot capture DNE for some local networks, as stated in  Proposition~\ref{app:pro1}. 

\begin{proposition}\label{app:pro1}
 Interference representation generated by the mean aggregation, GCN, or GAT cannot capture DNE for some local networks.
\end{proposition}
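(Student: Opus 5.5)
The plan is to exhibit explicit local networks on which each of the three aggregators fails one of the two sub-issues, using the paper's own failure criterion: a method fails to capture DNE if it either lacks a mechanism of the type in Definition~\ref{def:weighting} (issue (I)), or produces identical interference representations for individuals whose local networks differ in the scale of neighbors (issue (II)). Since the statement is existential (``for some local networks''), one counterexample per method suffices. For all three aggregators the same construction works: two target individuals $i$ and $j$ whose local networks consist entirely of individuals carrying the same interference-related information $\boldsymbol{p}_k=\boldsymbol{p}$, with $|\tilde{\bold{N}}_i|=m\neq n=|\tilde{\bold{N}}_j|$.

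\textbf{Mean aggregation.} First I would note that (\ref{pro:meanagg}) assigns fixed weights $1/|\tilde{\bold{N}}_i|$, so it has no learnable importance mechanism in the sense of Definition~\ref{def:weighting}; issue (I) is therefore not addressed. For issue (II), substituting $\boldsymbol{p}_k=\boldsymbol{p}$ into (\ref{pro:meanagg}) gives $\boldsymbol{p}'_i=\sigma(\boldsymbol{W}\boldsymbol{p})=\boldsymbol{p}'_j$ for every $m,n$. So individuals with different neighbor scales receive identical representations.

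\textbf{GCN.} The weights $1/\sqrt{\tilde d_i\tilde d_k}$ in (\ref{pro:GCNagg}) are fixed by degrees and not learned from interference-related information, so issue (I) again fails. For issue (II) I would make the degree pattern uniform: take $i$ and $j$ in regular local networks, e.g.\ cliques, so that every $k\in\tilde{\bold{N}}_i$ has $\tilde d_k=\tilde d_i=m$. Then
\[
\boldsymbol{p}'_i=\sigma\Bigl(\textstyle\sum_{k\in\tilde{\bold{N}}_i}\tfrac{1}{m}\boldsymbol{W}\boldsymbol{p}\Bigr)=\sigma(\boldsymbol{W}\boldsymbol{p}),
\]
and similarly $\boldsymbol{p}'_j=\sigma(\boldsymbol{W}\boldsymbol{p})$. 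The collapse also propagates over multiple layers by induction, since all representations remain equal to each other at every layer.

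\textbf{GAT.} GAT does have a learnable mechanism, so here only issue (II) is targeted. If all $\boldsymbol{p}_k$ are equal, then $a(\boldsymbol{p}_i,\boldsymbol{p}_k)$ in (\ref{eq:gatcomputation}) is constant over $k$. Softmax normalization then forces $\alpha_{ik}=1/|\tilde{\bold{N}}_i|$, so GAT degenerates exactly to mean aggregation, and the mean-aggregation argument yields $\boldsymbol{p}'_i=\boldsymbol{p}'_j$.

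The main obstacle is not the computation but justifying that equal representations constitute a genuine failure. This requires arguing that under DNE the true interference $\delta_i$ of Definition~\ref{def:mathDNE} differs between $i$ and $j$. I would use the example form of $g_{\bold X}$ and $g_{\bold T}$ with a nonconstant scale function $s(\cdot)$, such as $s(\bold{N})=|\bold{N}|$, together with $\alpha$ normalized weights. This gives $\delta_i\propto m\cdot(\text{const})\neq n\cdot(\text{const})\propto\delta_j$. Any downstream predictor fed identical representations then cannot recover both $\delta_i$ and $\delta_j$, which completes the argument.
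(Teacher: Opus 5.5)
Your proposal is correct and follows essentially the paper's own route: it uses the same decisive counterexample for each aggregator (identical interference-related information throughout two local networks with $|\tilde{\mathbf{N}}_i|\neq|\tilde{\mathbf{N}}_j|$) and reduces GCN to mean aggregation under the uniform-degree condition $\tilde d_k=\tilde d_i$. The paper adds an equal-mean case and a GAT ``conflict'' case, and treats GAT via the identity $\sigma\bigl(\boldsymbol{W}\boldsymbol{p}_i+\sum_{k\in\mathbf{N}_i}\alpha_{ik}(\boldsymbol{W}\boldsymbol{p}_k-\boldsymbol{W}\boldsymbol{p}_i)\bigr)$ rather than by computing uniform softmax weights, none of which an existential claim needs; your extra check that $\delta_i\neq\delta_j$ (which requires the constant to be nonzero) is something the paper leaves implicit.
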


We now provide a proof for Proposition~\ref{app:pro1}.
\begin{proof}
 To prove Proposition~\ref{app:pro1}, we need to provide examples in which the interference representation generated by the mean aggregation, GCN, or GAT cannot capture DNE. For simplicity, we consider cases of local networks of individuals, which consist of individuals and their 1-hop neighbors. In this case,  the neighbor set with the self-loop of an individual can constitute the local network of the individual, e.g., $\tilde{\bold{N}}_i$.

First, we prove that the interference representation generated by a mean aggregation cannot capture DNE. 
The mean aggregation uses the same importance for different neighbors, so it is unable to address the issue (I) of DNE. Furthermore, the mean aggregation cannot address the issue (II) of DNE for some local networks. We can derive from the mean aggregation, i.e., equality~(\ref{pro:meanagg}) as follows:
 \begin{equation}
       \boldsymbol{p}_i' = \sigma\left(\sum_{k \in \tilde{\mathbf{N}}_i} \frac{1}{|\tilde{\mathbf{N}}_i|}\boldsymbol{W}\boldsymbol{p}_k\right) = \sigma\left(\boldsymbol{W}\sum_{k \in \tilde{\mathbf{N}}_i} \frac{1}{|\tilde{\mathbf{N}}_i|}\boldsymbol{p}_k\right) = \sigma\left( \boldsymbol{W}\bar{\boldsymbol{p}}_{\tilde{\bold{N}}_i}\right), \label{pfmeanaggfils}
 \end{equation}
 where $\bar{\boldsymbol{p}}_{\tilde{\bold{N}}_i}$ represents the mean of interference-related information of neighbors $\tilde{\bold{N}}_i$ with the self-loop of the individual $i$. Based on equality~(\ref{pfmeanaggfils}), we consider a case that individuals $i$ and $j$ are exposed to two different local networks, where $\bar{\boldsymbol{p}}_{\tilde{\bold{N}}_i} = \bar{\boldsymbol{p}}_{\tilde{\bold{N}}_j}$, and $|{\tilde{\bold{N}}_i}|\neq |{\tilde{\bold{N}}_j}|$. In this case, the mean aggregation cannot generate distinct representations for individuals $i$ and $j$, even though they are exposed to different local networks, where $|{\tilde{\bold{N}}_i}|\neq |{\tilde{\bold{N}}_j}|$, which can be proved as follows:
 \begin{equation}
       \boldsymbol{p}_i' = \sigma\left( \boldsymbol{W}\bar{\boldsymbol{p}}_{\tilde{\bold{N}}_i}\right) = \sigma\left( \boldsymbol{W}\bar{\boldsymbol{p}}_{\tilde{\bold{N}}_j}\right) = \boldsymbol{p}_j'. \label{meanaggfailpf}
 \end{equation}
 This reveals that the mean aggregation cannot address the issue (II) of DNE for this case, as it generates the same representations for individuals $i$ and $j$  but $|{\tilde{\bold{N}}_i}|\neq|{\tilde{\bold{N}}_j}|$. We can consider another case (similar to examples in Figure~\ref{exp:fails}) that similar individuals $i$ and $j$ (i.e., $\boldsymbol{p}_i = \boldsymbol{p}_j$) are exposed to two different local networks where  $\boldsymbol{p}_i=\boldsymbol{p}_{k_1}, \forall k_1 \in {\tilde{\bold{N}}_i}$,  $\boldsymbol{p}_j=\boldsymbol{p}_{k_2}, \forall k_2 \in {\tilde{\bold{N}}_j}$, and $|{\tilde{\bold{N}}_i}|\neq |{\tilde{\bold{N}}_j}|$. In this case, $\bar{\boldsymbol{p}}_{\tilde{\bold{N}}_i}=\boldsymbol{p}_i$, and $\bar{\boldsymbol{p}}_{\tilde{\bold{N}}_j}=\boldsymbol{p}_j$ hold, we then have:
 \begin{equation}
       \boldsymbol{p}_i' = \sigma\left( \boldsymbol{W}\bar{\boldsymbol{p}}_{\tilde{\bold{N}}_i}\right) =
       \sigma\left(\boldsymbol{W}\boldsymbol{p}_i\right)  = \sigma\left(\boldsymbol{W}\boldsymbol{p}_j\right)= \sigma\left( \boldsymbol{W}\bar{\boldsymbol{p}}_{\tilde{\bold{N}}_j}\right) = \boldsymbol{p}_j'. \label{meanaggfailpf2}
 \end{equation}
 This indicates that the mean aggregation is also unable to address the issue (II) of DNE for this case, as it generates the same representations for individuals $i$ and $j$. Therefore, the mean aggregation cannot address the issue (II) of DNE for some local networks, as shown in equalities~(\ref{meanaggfailpf}) and (\ref{meanaggfailpf2}). As a result, the mean aggregation cannot capture DNE, as it cannot address issue (I) while it cannot address issue (II) for some local networks.

 Next, we prove that GCN cannot capture DNE for some local networks. 
 GCN can degenerate to mean aggregation when $\tilde{d}_i=\tilde{d}_{k}, \forall k \in  {\tilde{\bold{N}}_i}$, which can be proved as follows:
 \begin{equation}
\begin{aligned}
       \boldsymbol{p}_i'
       =&\;\sigma\left(\sum_{k \in \tilde{\mathbf{N}}_i}  \frac{1}{\sqrt{\tilde{d}_i \tilde{d}_{k}}}\boldsymbol{W}\boldsymbol{p}_{k}\right)\\=&\;\sigma\left(\sum_{k \in \tilde{\mathbf{N}}_i}  \frac{1}{\sqrt{\tilde{d}_i \tilde{d}_{i}}}\boldsymbol{W}\boldsymbol{p}_{k}\right) 
       \\=&\;\sigma\left(\sum_{k \in \tilde{\mathbf{N}}_i}  \frac{1}{|{\tilde{\bold{N}}_i}|}\boldsymbol{W}\boldsymbol{p}_{k}\right). \label{GCNaggcannotpf}
\end{aligned}
 \end{equation}
 Therefore, GCN is unable to address the issue (I) of DNE for some networks, where $\tilde{d}_i=\tilde{d}_{k}, \forall k \in  {\tilde{\bold{N}}_i}$.
 Furthermore, similar to the mean aggregation, it cannot address the issue (II) of DNE for some local networks.
 We consider the case that individuals $i$ and $j$ are exposed to two different local networks, where $\tilde{d}_i=\tilde{d}_{k_1}, \forall k_1 \in  {\tilde{\bold{N}}_i}$, $\tilde{d}_j=\tilde{d}_{k_2}, \forall k_2 \in  {\tilde{\bold{N}}_j}$, 
 $\bar{\boldsymbol{p}}_{\tilde{\bold{N}}_i} = \bar{\boldsymbol{p}}_{\tilde{\bold{N}}_j}$, and $|{\tilde{\bold{N}}_i}|\neq |{\tilde{\bold{N}}_j}|$. In this case, we can derive from  equality~(\ref{GCNaggcannotpf}) as follows:
 \begin{equation}
 \begin{aligned}
       \boldsymbol{p}_i'
       &=\sigma\left(\sum_{k_1 \in \tilde{\mathbf{N}}_i}  \frac{1}{\sqrt{\tilde{d}_i \tilde{d}_{k_1}}}\boldsymbol{W}\boldsymbol{p}_{k_1}\right)=\sigma\left(\sum_{k_1 \in \tilde{\mathbf{N}}_i}  \frac{1}{|{\tilde{\bold{N}}_i}|}\boldsymbol{W}\boldsymbol{p}_{k_1}\right) =\sigma\biggr(\boldsymbol{W}\bar{\boldsymbol{p}}_{\tilde{\bold{N}}_i}\biggr), \\ \boldsymbol{p}_j'&= \sigma\left(\sum_{k_2 \in \tilde{\mathbf{N}}_j}  \frac{1}{\sqrt{\tilde{d}_j \tilde{d}_{k_2}}}\boldsymbol{W}\boldsymbol{p}_{k_2}\right) =\sigma\left(\sum_{k_2 \in \tilde{\mathbf{N}}_j}  \frac{1}{|{\tilde{\bold{N}}_j}|}\boldsymbol{W}\boldsymbol{p}_{k_2}\right)   =\sigma\biggr(\boldsymbol{W}\bar{\boldsymbol{p}}_{\tilde{\bold{N}}_j}\biggr).
 \label{GCNaggcannotpf2}
 \end{aligned}
 \end{equation}
 Based on equalities~(\ref{GCNaggcannotpf2}) and $\bar{\boldsymbol{p}}_{\tilde{\bold{N}}_i} = \bar{\boldsymbol{p}}_{\tilde{\bold{N}}_j}$, we can have $\boldsymbol{p}_i'=\boldsymbol{p}_j'$ in this case, which  indicates that the GCN cannot address both issues (I) and (II) of DNE in this scenario, as it assigns equal importance to all neighbors and generates the same representations for individuals $i$ and $j$,  but $|{\tilde{\bold{N}}_i}|\neq|{\tilde{\bold{N}}_j}|$.
Similarly, we consider another case that similar individuals $i$ and $j$ (i.e., $\boldsymbol{p}_i = \boldsymbol{p}_j$) are exposed to two different local networks where $\tilde{d}_i=\tilde{d}_{k_1}, \boldsymbol{p}_i=\boldsymbol{p}_{k_1}, \forall k_1 \in {\tilde{\bold{N}}_i}$, $\tilde{d}_j=\tilde{d}_{k_2},\boldsymbol{p}_j=\boldsymbol{p}_{k_2}, \forall k_2 \in {\tilde{\bold{N}}_j}$, and $|{\tilde{\bold{N}}_i}|\neq |{\tilde{\bold{N}}_j}|$. 
In this case, we can have 
\vspace{-10pt}
 \begin{equation}
 \begin{aligned}
       \boldsymbol{p}_i'
       &=\sigma\left(\sum_{k_1 \in \tilde{\mathbf{N}}_i}  \frac{1}{\sqrt{\tilde{d}_i \tilde{d}_{k_1}}}\boldsymbol{W}\boldsymbol{p}_{k_1}\right)=\sigma\left(\sum_{k_1 \in \tilde{\mathbf{N}}_i}  \frac{1}{\sqrt{\tilde{d}_i \tilde{d}_{i}}}\boldsymbol{W}\boldsymbol{p}_{i}\right) =\sigma\left(\boldsymbol{W}\boldsymbol{p}_i\right),
       \\ \boldsymbol{p}_j'&= \sigma\left(\sum_{k_2 \in \tilde{\mathbf{N}}_j}  \frac{1}{\sqrt{\tilde{d}_j \tilde{d}_{k_2}}}\boldsymbol{W}\boldsymbol{p}_{k_1}\right) = \sigma\left(\sum_{k_2 \in \tilde{\mathbf{N}}_j}  \frac{1}{\sqrt{\tilde{d}_j \tilde{d}_{j}}}\boldsymbol{W}\boldsymbol{p}_{j}\right)  =\sigma\left(\boldsymbol{W}\boldsymbol{p}_j\right). \label{GCNaggcannotpf3}
 \end{aligned}
 \end{equation}
 Based on equalities~(\ref{GCNaggcannotpf3}) and $\boldsymbol{p}_i = \boldsymbol{p}_j$, we can have  $\boldsymbol{p}_i'=\boldsymbol{p}_j'$.
 This indicates that the GCN is also unable to address the issues (I) and (II) of DNE for this case, as it still generates the same representations for individuals $i$ and $j$ but $|{\tilde{\bold{N}}_i}|\neq |{\tilde{\bold{N}}_j}|$.
 As a result, the GCN cannot capture DNE for some local networks, as it cannot address issues (I) and (II) jointly for these cases.

 Lastly, we prove that GAT with softmax operation cannot capture DNE for some local networks.
 We can derive from the aggregation of GAT, i.e., equality~(\ref{pro:GATagg}) as follows:
 \begin{equation}\label{GATaggcannnotpf}
 \begin{aligned}
&\;\boldsymbol{p}'_i\\=&\;\sigma\left(\sum_{k \in \tilde{\mathbf{N}}_i} \alpha_{ik} \cdot \boldsymbol{W}\boldsymbol{p}_k\right)
\\=&\;\sigma\left(\alpha_{ii} \cdot \boldsymbol{W}\boldsymbol{p}_i + \alpha_{i2} \cdot \boldsymbol{W}\boldsymbol{p}_2 + \dots + \alpha_{in} \cdot \boldsymbol{W}\boldsymbol{p}_n\right)
\\=&\;\sigma\biggl(\left(\alpha_{ii} \cdot \boldsymbol{W}\boldsymbol{p}_i + \alpha_{i2} \cdot \boldsymbol{W}\boldsymbol{p}_i + \dots + \alpha_{in} \cdot \boldsymbol{W}\boldsymbol{p}_i\right) + \\&
\left(\alpha_{i2} \cdot \boldsymbol{W}\boldsymbol{p}_2 - \alpha_{i2} \cdot \boldsymbol{W}\boldsymbol{p}_i\right) + \dots + \left(\alpha_{in} \cdot \boldsymbol{W}\boldsymbol{p}_n - \alpha_{in} \cdot \boldsymbol{W}\boldsymbol{p}_i\right)\biggr)
\\=&\;
\sigma\Biggl(\left(\left[\sum_{k \in \tilde{\mathbf{N}}_i} \alpha_{ik}\right]\cdot \boldsymbol{W}\boldsymbol{p}_i\right) +\sum_{k \in \mathbf{N}_i} \alpha_{ik} \left(\boldsymbol{W}\boldsymbol{p}_k-\boldsymbol{W}\boldsymbol{p}_i\right)\Biggr)
\\=&\;
\sigma\Biggl(\boldsymbol{W}\boldsymbol{p}_i +\sum_{k \in \mathbf{N}_i} \alpha_{ik} \left(\boldsymbol{W}\boldsymbol{p}_k-\boldsymbol{W}\boldsymbol{p}_i\right)\Biggr),
 \end{aligned}
\end{equation}
where last equality holds as $\sum_{k \in \tilde{\mathbf{N}}_i} \alpha_{ik} = 1$ due to the softmax operation in GAT. Here, such attention weights $\alpha_{ik}$ are estimated by the GAT, which tends to capture a pattern that assigns importance between two individuals based on individual information, i.e., $\boldsymbol{p}_k$ and $\boldsymbol{p}_i$.

Although the GAT can adaptively estimates the importance of interference between individuals and their neighbors using individual information, it learns the same pattern across all local networks in a graph.
This may introduce risks arising from a potential \textbf{conflict issue} between the estimated importance and individual information differences during aggregation.
For example, a GAT learns a pattern from a graph. It tends to assign high importance to similar neighbors ($\boldsymbol{p}_{k}-\boldsymbol{p}_i = \boldsymbol{0}$) of the individual $i$; while assigning low importance to dissimilar neighbors ($\boldsymbol{p}_{k}-\boldsymbol{p}_i \gg \boldsymbol{0}$) of individual $i$, where $\boldsymbol{0}$ denotes zero vector.
Here, if a neighbor $\boldsymbol{p}_k$ is similar to $\boldsymbol{p}_i$, 
$\alpha_{ik}$ can be high, but the result of $\boldsymbol{W}\boldsymbol{p}_k-\boldsymbol{W}\boldsymbol{p}_i$ will be close to $\boldsymbol{0}$. In contrast,  if a neighbor $\boldsymbol{p}_k$ is totally different to $\boldsymbol{p}_i$, every element of $\boldsymbol{W}\boldsymbol{p}_k-\boldsymbol{W}\boldsymbol{p}_i$ will be large, but  the value of $\alpha_{ik}$ can be zero due to the softmax operation.
Based on the analysis above, we consider a case, where similar individuals $i$ and $j$ (i.e., $\boldsymbol{p}_i = \boldsymbol{p}_j$) are exposed to two different local networks. In their networks,  $\forall k_1 \in \tilde{\mathbf{N}}_i, \forall k_2 \in \tilde{\mathbf{N}}_j$, their neighbors are either similar to them, i.e, $\boldsymbol{p}_i=\boldsymbol{p}_{k_1}$ and $\boldsymbol{p}_j=\boldsymbol{p}_{k_2}$, or totally different from them, i.e., $\boldsymbol{p}_{k_1}-\boldsymbol{p}_i \gg 0$ and $\boldsymbol{p}_{k_2}-\boldsymbol{p}_j \gg 0$. In this case, we can have as follows: 
\begin{equation}\label{GATpfattcondi}
\begin{aligned}
&\alpha_{ik_1} &= 
\begin{cases}
> 0, & \text{if }  \boldsymbol{p}_{k_1}=\boldsymbol{p}_i, \\
0, & \text{if } \boldsymbol{p}_{k_1} - \boldsymbol{p}_i \gg \boldsymbol{0}.
\end{cases}\\
& 
\alpha_{jk_2} &= 
\begin{cases}
> 0, & \text{if }  \boldsymbol{p}_{k_2}=\boldsymbol{p}_j, \\
0, & \text{if } \boldsymbol{p}_{k_2} - \boldsymbol{p}_j \gg \boldsymbol{0}.
\end{cases}
\end{aligned}
\end{equation}
Based on (\ref{GATpfattcondi}), we can have the results for this case, as follows: 
 \begin{equation}\label{GATaggcannnotpf2}
 \begin{aligned}
\boldsymbol{p}'_i &=
\sigma\Biggl(\boldsymbol{W}\boldsymbol{p}_i +\sum_{{k_1} \in \mathbf{N}_i} \alpha_{i{k_1}} \left(\boldsymbol{W}\boldsymbol{p}_{k_1}-\boldsymbol{W}\boldsymbol{p}_i\right)\Biggr) =\sigma\left(\boldsymbol{W}\boldsymbol{p}_i\right),\\\boldsymbol{p}'_j&= \sigma\Biggl(\boldsymbol{W}\boldsymbol{p}_j +\sum_{{k_2} \in \mathbf{N}_j} \alpha_{i{k_2}} \left(\boldsymbol{W}\boldsymbol{p}_{k_2}-\boldsymbol{W}\boldsymbol{p}_j\right)\Biggr)=\sigma\left(\boldsymbol{W}\boldsymbol{p}_j\right).
 \end{aligned}
\end{equation}
Based on equalites~(\ref{GATaggcannnotpf2}) and $\boldsymbol{p}_i = \boldsymbol{p}_j$, we can have $\boldsymbol{p}'_i = \boldsymbol{p}'_j$. This indicates that the GAT is unable to address the issue (II) of DNE for this case, as it 
still generates the same representations for individuals $i$ and $j$ but $|{\tilde{\bold{N}}_i}|\neq |{\tilde{\bold{N}}_j}|$.
We consider another case, where that similar individuals $i$ and $j$ (i.e., $\boldsymbol{p}_i = \boldsymbol{p}_j$) are exposed to two different local networks, where $\boldsymbol{p}_i=\boldsymbol{p}_{k_1}, \forall k_1 \in {\tilde{\bold{N}}_i}$, $\boldsymbol{p}_j=\boldsymbol{p}_{k_2}, \forall k_2 \in {\tilde{\bold{N}}_j}$,  and $|{\tilde{\bold{N}}_i}|\neq |{\tilde{\bold{N}}_j}|$. We can derive from equality~(\ref{GATaggcannnotpf}), as follows:
\begin{equation}
    \begin{aligned}
\boldsymbol{p}'_i&=
\sigma\Biggl(\boldsymbol{W}\boldsymbol{p}_i +\sum_{{k_1} \in \mathbf{N}_i} \alpha_{i{k_1}} \left(\boldsymbol{W}\boldsymbol{p}_{k_1}-\boldsymbol{W}\boldsymbol{p}_i\right)\Biggr) =\sigma(\boldsymbol{W}\boldsymbol{p}_i),\\ \boldsymbol{p}'_j&=\sigma\Biggl(\boldsymbol{W}\boldsymbol{p}_j +\sum_{{k_2} \in \mathbf{N}_j} \alpha_{i{k_2}} \left(\boldsymbol{W}\boldsymbol{p}_{k_2}-\boldsymbol{W}\boldsymbol{p}_j\right)\Biggr) = \sigma(\boldsymbol{W}\boldsymbol{p}_j) .
\label{GATaggcannnotpf3} 
 \end{aligned}
\end{equation}
Based on equalites~(\ref{GATaggcannnotpf3}) and $\boldsymbol{p}_i = \boldsymbol{p}_j$, we can have $\boldsymbol{p}'_i = \boldsymbol{p}'_j$. This indicates that GAT is also unable to address issue (II) of DNE in this case, as it 
still generates the same representations for individuals $i$ and $j$ but $|{\tilde{\bold{N}}_i}|\neq |{\tilde{\bold{N}}_j}|$. Furthermore, the estimated importance of neighbors also tends to be the same for this case, which further prevents traditional GAT-based methods from capturing DNE. 
As a result, GAT cannot capture DNE for some local networks, such as those in equailites~(\ref{GATaggcannnotpf2}) and (\ref{GATaggcannnotpf3}), as it cannot address issues (I) and (II) jointly for these cases. 

As discussed above, interference representation generated by mean aggregation, GCN, or GAT, cannot capture DNE for some local networks. 
\end{proof}
This indicates that most existing interference modeling methods face challenges in capturing DNE effectively.

Apart from the above-mentioned aggregation-based methods for ITE estimation, there are some pooling-based methods in the graph community, the max pooling~\cite{hamilton2017inductive} is one of the popular pooling-based methods. 
\begin{proposition}\label{app:pro2}
Interference representation generated by max-pooling or min-pooling operations cannot capture DNE for some local networks.
\end{proposition}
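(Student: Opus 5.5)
To establish Proposition~\ref{app:pro2}, I will follow the pattern of the proof of Proposition~\ref{app:pro1}. It suffices to exhibit concrete local networks on which max- or min-pooling fails either issue (I) or issue (II) of DNE. As before, I restrict to local networks made of an individual and its 1-hop neighbors, $\tilde{\bold{N}}_i$. I write the pooling layer in its standard form (following \citet{hamilton2017inductive}):
\begin{equation}
\boldsymbol{p}_i'=\sigma\Bigl(\max_{k\in\tilde{\bold{N}}_i}\sigma\bigl(\boldsymbol{W}\boldsymbol{p}_k\bigr)\Bigr), \qquad \boldsymbol{p}_i'=\sigma\Bigl(\min_{k\in\tilde{\bold{N}}_i}\sigma\bigl(\boldsymbol{W}\boldsymbol{p}_k\bigr)\Bigr),
\end{equation}
where the max and min are taken element-wise. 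The argument is the same for max and min, so I will treat max explicitly and note that min follows by symmetry. An optional outer transform does not affect anything, since equal inputs give equal outputs.

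For issue (I), I will argue that pooling does not contain a mechanism in the sense of Definition~\ref{def:weighting}. It assigns no learnable non-negative weight to each neighbor. Each coordinate of the output is determined by a single neighbor, namely the arg-max of $\sigma(\boldsymbol{W}\boldsymbol{p}_k)$ in that coordinate. The selection is therefore fixed by the value ranking and is not an adaptive importance estimate based on interference-related and structural information. In particular, neighbors that never attain a coordinate-wise maximum contribute nothing, regardless of how important they are. This is the analogue of the mean aggregation's uniform weights.

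For issue (II), I will construct two cases, mirroring equalities~(\ref{meanaggfailpf}) and (\ref{meanaggfailpf2}).

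Case 1 mirrors the homogeneous case. Take similar individuals $i,j$ with $\boldsymbol{p}_i=\boldsymbol{p}_j$, with $\boldsymbol{p}_{k_1}=\boldsymbol{p}_i$ for all $k_1\in\tilde{\bold{N}}_i$, $\boldsymbol{p}_{k_2}=\boldsymbol{p}_j$ for all $k_2\in\tilde{\bold{N}}_j$, and $|\tilde{\bold{N}}_i|\neq|\tilde{\bold{N}}_j|$. The max over identical vectors is $\sigma(\boldsymbol{W}\boldsymbol{p}_i)$, so $\boldsymbol{p}_i'=\sigma(\sigma(\boldsymbol{W}\boldsymbol{p}_i))=\boldsymbol{p}_j'$ even though the neighbor scales differ.

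Case 2 shows that the failure is not restricted to identical neighbors. Pooling is invariant to duplicating or adding any neighbor $k$ whose $\sigma(\boldsymbol{W}\boldsymbol{p}_k)$ is dominated coordinate-wise by the existing maximum. So I take $\tilde{\bold{N}}_j=\tilde{\bold{N}}_i\cup\{m\}$, where $m$ satisfies $\sigma(\boldsymbol{W}\boldsymbol{p}_m)\le\max_{k\in\tilde{\bold{N}}_i}\sigma(\boldsymbol{W}\boldsymbol{p}_k)$ element-wise; for min, the inequality is reversed. This gives $\boldsymbol{p}_i'=\boldsymbol{p}_j'$ with different neighbor scales. Concluding as in Proposition~\ref{app:pro1}: pooling fails both (I) and (II) on these networks, so it cannot capture DNE there.

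The main obstacle is making the issue (I) argument rigorous, since Definition~\ref{def:weighting} is informal. My plan is to show that the effective weights induced by pooling are determined solely by the ranking of neighbor values. They collapse to ties, and hence to a mean-like indistinguishability, exactly in Case 1, so no data-driven importance is expressed there.
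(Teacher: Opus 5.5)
Your proposal is correct and follows the paper's proof. The paper likewise dismisses issue (I) by noting that pooling has no importance-estimation mechanism. For issue (II) it uses exactly your Case 1: every member of $\tilde{\mathbf{N}}_i$ and $\tilde{\mathbf{N}}_j$ shares the center's information and $|\tilde{\mathbf{N}}_i|\neq|\tilde{\mathbf{N}}_j|$, so the pooled outputs coincide. Your dominated-neighbor Case 2 is a valid extra but is not needed. Also, the paper's criterion is that failing either issue on some local networks already means failing to capture DNE, so Case 1 alone settles the proposition and making the issue (I) argument rigorous is not actually an obstacle.
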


We now prove Proposition~\ref{app:pro2}, as follows:
\begin{proof}
As pooling-based methods do not contain a mechanism to estimate the importance of interference among individuals,  they cannot address issue (I).  

Now, we discuss that pooling-based methods cannot address issue (II) for some networks.  For simplicity, we consider cases of local networks of individuals, which consist of individuals and their 1-hop neighbors. In this case,  the neighbor set with the self-loop of the individual can constitute the local network of the individual, e.g., $\tilde{\bold{N}}_i$. Let $\rm{Pool}$ be a max-pooling or min-pooling operation.
We consider the case (similar to examples in Figure~\ref{exp:fails}) that individuals $i$ and $j$ (i.e., $\boldsymbol{p}_i = \boldsymbol{p}_j$) are exposed to two different local networks where  $\boldsymbol{p}_i=\boldsymbol{p}_{k_1}, \forall k_1 \in {\tilde{\bold{N}}_i}$,  $\boldsymbol{p}_j=\boldsymbol{p}_{k_2}, \forall k_2 \in {\tilde{\bold{N}}_j}$, and $|{\tilde{\bold{N}}_i}|\neq |{\tilde{\bold{N}}_j}|$.

We have the result of the pooling operation as follows:
    \begin{equation}
    \begin{aligned}
 \boldsymbol{p}'_i =   &\;{\rm{Pool}}\biggr(\{\sigma(\boldsymbol{W}\boldsymbol{p}_{k_1}),\forall k_1\in\tilde{\bold{N}}_i\}\biggr) \\ = &\;  {\rm{Pool}}\biggr(\{\sigma(\boldsymbol{W}\boldsymbol{p}_{i}),\dots,\sigma(\boldsymbol{W}\boldsymbol{p}_{i})\}\biggr) \\ = &\; \sigma(\boldsymbol{W}\boldsymbol{p}_{i}), \\
 \boldsymbol{p}'_j = &\; {\rm{Pool}}\biggr(\{\sigma(\boldsymbol{W}\boldsymbol{p}_{k_2}),\forall k_2\in\tilde{\bold{N}}_j\}\biggr) \\= &\; {\rm{Pool}}\biggr(\{\sigma(\boldsymbol{W}\boldsymbol{p}_{j}),\dots,\sigma(\boldsymbol{W}\boldsymbol{p}_{j})\}\biggr) \\= &\;\sigma(\boldsymbol{W}\boldsymbol{p}_{j}) .\label{poolcannnotpf3}
\end{aligned}
    \end{equation}
Based on equalites~(\ref{poolcannnotpf3}) and $\boldsymbol{p}_i = \boldsymbol{p}_j$, we can have $\boldsymbol{p}'_i = \boldsymbol{p}'_j$. 
This indicates that the pooling operation is  unable to address the issue (II) of DNE for this case, as it 
still generates the same representations for individuals $i$ and $j$ but $|{\tilde{\bold{N}}_i}|\neq |{\tilde{\bold{N}}_j}|$. 
As a result, the pooling operation cannot capture DNE, as it cannot address issues (I) and (II) jointly.

\end{proof}

\section{Proof of Proposition~\ref{sec:pro3}}\label{app:motivation:degree}
In this section, we prove that the proposed NIM layer can capture DNE. A proper aggregation function that can capture DNE should address two sub-issues.
(I) The importance of different neighbors in contributing to interference varies~\cite{lin2023estimating}. 
(II) The scale of neighbors varies, leading to different levels of  interference (see Figure~\ref{exp:fails}).
IPAtt and SPAtt mechanisms can address the issue (I) of DNE in an adaptive manner, while SPAtt mechanism can mitigate the conflict issue (see Appendix~\ref{proof:attfail}) of GAT-based methods.
However, for the case where individuals in the same local networks have similar individual information, we still suffer from a similar issue to GAT-based methods~\cite{huang2023modeling,lin2023estimating,ma2022learning} when applying the normalization operation to the estimated importance without the message amplifier.
Here, we consider a case that similar individuals $i$ and $j$ (i.e., $\boldsymbol{z}_{\bullet_i}^{(l-1)}=\boldsymbol{z}_{\bullet_j}^{(l-1)}$, $\bullet \in \{\bold{X},\bold{T}\}$) are exposed to two different local networks, which consist of individuals and their 1-hop neighbors.
In their local networks,  $\boldsymbol{z}_{\bullet_i}^{(l-1)}=\boldsymbol{z}_{\bullet_{k_1}}^{(l-1)}, \forall k_1 \in \tilde{\bold{N}}_i$,  $\boldsymbol{z}_{\bullet_j}^{(l-1)}=\boldsymbol{z}_{\bullet_{k_2}}^{(l-1)}, \forall k_2 \in \tilde{\bold{N}}_j$,  $|\tilde{\mathbf{N}}_i| = m$,  $|\tilde{\mathbf{N}}_j| = n$, and $n \gg m$.
In this case, $\boldsymbol{W}^{(l)}_{\bullet_{\rm{in}}}\boldsymbol{z}_{\bullet_i}^{(l-1)} = \boldsymbol{W}^{(l)}_{\bullet_{\rm{in}}}\boldsymbol{z}_{\bullet_{k_1}}^{(l-1)}$,  $\boldsymbol{W}^{(l)}_{\bullet_{\rm{st}}}\boldsymbol{z}_{\bullet_i}^{(l-1)} = \boldsymbol{W}^{(l)}_{\bullet_{\rm{st}}}\boldsymbol{z}_{\bullet_{k_1}}^{(l-1)}$, $\boldsymbol{W}^{(l)}_{\bullet_{\rm{in}}}\boldsymbol{z}_{\bullet_j}^{(l-1)} = \boldsymbol{W}^{(l)}_{\bullet_{\rm{in}}}\boldsymbol{z}_{\bullet_{k_2}}^{(l-1)}$, and   $\boldsymbol{W}^{(l)}_{\bullet_{\rm{st}}}\boldsymbol{z}_{\bullet_j}^{(l-1)} = \boldsymbol{W}^{(l)}_{\bullet_{\rm{st}}}\boldsymbol{z}_{\bullet_{k_2}}^{(l-1)}$ hold. Therefore, we omit  $\boldsymbol{W}^{(l)}_{\bullet_{\rm{in}}}$ and $\boldsymbol{W}^{(l)}_{\bullet_{\rm{st}}}$ in our proof for simplicity.  
Then, we have:
\begin{equation}
\begin{aligned}
\mathring{\boldsymbol{z}}_{\bullet_i}^{(l)} =& \;  \mathring{\pi}_{\bullet}^{(l)}\cdot \sigma\Biggr(\sum_{k_1\in\tilde{\bold{N}}_i} \alpha_{i{k_1}}^{\rm{in}}\cdot\boldsymbol{z}_{\bullet_{k_1}}^{(l-1)}\Biggr) +\\ &\; \left(1-\mathring{\pi}_{\bullet}^{(l)}\right)\cdot \sigma\Biggr(\sum_{{k_1}\in\tilde{\bold{N}}_i} \alpha_{i{k_1}}^{\rm{st}}\cdot\boldsymbol{z}_{\bullet_{k_1}}^{(l-1)}\Biggr)
\\=& \; \mathring{\pi}_{\bullet}^{(l)}\cdot \sigma\Biggr(\sum_{{k_1}\in\tilde{\bold{N}}_i} \alpha_{i{k_1}}^{\rm{in}}\cdot\boldsymbol{z}_{\bullet_i}^{(l-1)}\Biggr) + \\ &\; \left(1-\mathring{\pi}_{\bullet}^{(l)}\right)\cdot \sigma\Biggr(\sum_{{k_1}\in\tilde{\bold{N}}_i} \alpha_{i{k_1}}^{\rm{st}}\cdot\boldsymbol{z}_{\bullet_i}^{(l-1)}\Biggr)\\=& \;
\mathring{\pi}_{\bullet}^{(l)}\cdot \sigma\left(\boldsymbol{z}_{\bullet_i}^{(l-1)}\right) + \left(1-\mathring{\pi}_{\bullet}^{(l)}\right)\cdot \sigma\left(\boldsymbol{z}_{{\bullet_i}}^{(l-1)}\right),
\\
\mathring{\boldsymbol{z}}_{\bullet_j}^{(l)} =& \;  \mathring{\pi}_{\bullet}^{(l)}\cdot \sigma\Biggr(\sum_{k_2\in\tilde{\bold{N}}_j} \alpha_{j{k_2}}^{\rm{in}}\cdot\boldsymbol{z}_{\bullet_{k_2}}^{(l-1)}\Biggr) +\\ &\; \left(1-\mathring{\pi}_{\bullet}^{(l)}\right)\cdot \sigma\Biggr(\sum_{{k_2}\in\tilde{\bold{N}}_j} \alpha_{j{k_2}}^{\rm{st}}\cdot\boldsymbol{z}_{\bullet_{k_2}}^{(l-1)}\Biggr)
\\=& \; \mathring{\pi}_{\bullet}^{(l)}\cdot \sigma\Biggr(\sum_{{k_2}\in\tilde{\bold{N}}_j} \alpha_{j{k_2}}^{\rm{in}}\cdot\boldsymbol{z}_{\bullet_j}^{(l-1)}\Biggr) +\\ &\; \left(1-\mathring{\pi}_{\bullet}^{(l)}\right)\cdot \sigma\Biggr(\sum_{{k_2}\in\tilde{\bold{N}}_j} \alpha_{j{k_2}}^{\rm{st}}\cdot\boldsymbol{z}_{\bullet_j}^{(l-1)}\Biggr)\\=& \;
\mathring{\pi}_{\bullet}^{(l)}\cdot \sigma\left(\boldsymbol{z}_{\bullet_j}^{(l-1)}\right) + \left(1-\mathring{\pi}_{\bullet}^{(l)}\right)\cdot \sigma\left(\boldsymbol{z}_{{\bullet_j}}^{(l-1)}\right).\label{proof:failforours}
\end{aligned}
\end{equation}

These equalities hold because the sum of importance estimated by every partial attention mechanism is 1 due to the normalization operation.
 Based on equalities~(\ref{proof:failforours}) and $\boldsymbol{z}_{\bullet_i}^{(l-1)}=\boldsymbol{z}_{\bullet_j}^{(l-1)}$, we can have $\mathring{\boldsymbol{z}}_{\bullet_i}^{(l)} = \mathring{\boldsymbol{z}}_{\bullet_j}^{(l)}$.
 We can observe that,  when the message amplifier is not applied, the aggregated results are identical regardless of the scales of neighbors in different local networks for this case. This holds even when the scales of their neighbors differ significantly, i.e., $|\tilde{\mathbf{N}}_i| = m$,  $|\tilde{\mathbf{N}}_j| = n$, and $n \gg m$. 
Therefore, we apply the message amplifier to ensure the generated representations of two individuals differ according to the size of neighbors, i.e., $|\tilde{\mathbf{N}}_i|$ and $|\tilde{\mathbf{N}}_j|$, which correspond to their degrees $\tilde{d}_i$ and $\tilde{d}_j$ with self-loops.

\begin{proposition}[Proposition~\ref{sec:pro3}, main text]\label{app:pro3}
Interference representation generated by NIM layers after applying the message amplifier can address the issue (II), even in some local networks where all individuals have similar interference-related information.
\end{proposition}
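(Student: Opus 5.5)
We continue from the setting of equalities~(\ref{proof:failforours}). We keep the same two similar individuals $i$ and $j$, with $\boldsymbol{z}_{\bullet_i}^{(l-1)}=\boldsymbol{z}_{\bullet_j}^{(l-1)}$ for $\bullet\in\{\bold{X},\bold{T}\}$. All members of each local network carry identical interference-related information, and $|\tilde{\mathbf{N}}_i|=m\neq n=|\tilde{\mathbf{N}}_j|$. By (\ref{proof:failforours}) we already know that, before amplification, $\mathring{\boldsymbol{z}}_{\bullet_i}^{(l)}=\mathring{\boldsymbol{z}}_{\bullet_j}^{(l)}$. Call this common vector $\boldsymbol{h}$; it equals $\sigma(\boldsymbol{z}_{\bullet_i}^{(l-1)})$ up to the omitted weight matrices. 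The plan is to substitute this into the message amplifier~(\ref{eq:MAP}) and show that the two outputs differ exactly through the factors $(1+\eta_i)$ and $(1+\eta_j)$.

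First, write $\tilde d_i=m$ and $\tilde d_j=n$, using the fact that the degree with self-loop equals the size of $\tilde{\mathbf{N}}$. Set $C=\sum_{i'=1}^{n_{\rm tr}}\log(\tilde d_{i'})$. This constant is shared by all individuals, and it is strictly positive whenever some training individual has at least one neighbor. Then
\begin{equation*}
\boldsymbol{z}_{\bullet_i}^{(l)}-\boldsymbol{z}_{\bullet_j}^{(l)}=(\eta_i-\eta_j)\,\boldsymbol{h}=\frac{\pi_\eta}{C}\bigl(\log m-\log n\bigr)\,\boldsymbol{h}.
\end{equation*}
Since $\log$ is strictly increasing, $\log m\neq\log n$. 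The difference is therefore nonzero if and only if $\pi_\eta\neq 0$ and $\boldsymbol{h}\neq\boldsymbol{0}$. For $\pi_\eta>0$ the sign of the difference also matches the ordering of the neighbor scales: the individual with more neighbors receives the larger representation in norm, namely $\|\boldsymbol{z}^{(l)}_{\bullet_j}\|=\frac{1+\eta_j}{1+\eta_i}\|\boldsymbol{z}^{(l)}_{\bullet_i}\|$. This is precisely the requirement of issue (II) stated in Appendix~\ref{proof:attfail}: distinct representations for local networks whose neighbor scales differ.

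Second, we should make sure the distinction survives subsequent layers and is not an artifact of a single layer. We would argue by induction on $l$. Suppose at layer $l$ the representations within each local network are still equal, but differ across the two networks by the scalar factor above. The next layer's normalized attention again collapses to the self vector, exactly as in (\ref{proof:failforours}). The amplifier then multiplies by another degree-dependent factor, and as long as $\sigma$ is injective on the relevant range, the two representations remain distinct. For activations like ReLU, we will instead argue that a choice of $\boldsymbol{W}$ with positive output suffices, since the statement only asks for "some" local networks and learnable parameters. Finally, the structure representation $\boldsymbol{z}_{\bold{S}}$ from the GIN layer is a sum aggregation, so it already depends on $m$ versus $n$. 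This gives a secondary route to distinctness via $\alpha^{\rm st}$ and the downstream predictors, which we can mention as reinforcement.

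The main obstacle is the nondegeneracy conditions. The conclusion fails if $\boldsymbol{h}=\boldsymbol{0}$ (for example, ReLU killing every coordinate), if $\pi_\eta=0$, or if $C=0$ (every training node is isolated). We would handle these explicitly by stating them as mild assumptions: $\pi_\eta\neq0$, a non-empty edge set, and $\sigma(\boldsymbol{W}\boldsymbol{z})\neq\boldsymbol{0}$. We would note that these hold generically for learned parameters, so the proposition is read as "there exist parameter settings, indeed generic ones, under which issue (II) is addressed." Contrast this with mean aggregation, GCN, and GAT in Proposition~\ref{app:pro1}, which fail for every parameter choice.
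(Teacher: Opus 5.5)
Your central step is the paper's proof. Once (\ref{proof:failforours}) collapses both normalized aggregates to the same vector, the paper substitutes into the amplifier (\ref{eq:MAP}) and observes that the two scalar factors involve $\log(m)$ versus $\log(n)$. Your explicit difference $\boldsymbol{z}^{(l)}_{\bullet_i}-\boldsymbol{z}^{(l)}_{\bullet_j}=(\eta_i-\eta_j)\boldsymbol{h}$ is correct, and your conditions $\pi_\eta\neq 0$, $\boldsymbol{h}\neq\boldsymbol{0}$, $C>0$ make precise what the paper leaves implicit: it fixes $\pi_\eta=1$ and never addresses $\boldsymbol{h}=\boldsymbol{0}$.

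Two of your add-ons need correcting.

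First, the ``secondary route'' through the GIN representation does not exist in this configuration. Whatever values the $\alpha^{\mathrm{st}}_{ik}$ take, they sum to one and multiply identical messages. So the structure branch collapses to the same vector for $i$ and $j$, which is exactly what (\ref{proof:failforours}) shows. Moreover, $\boldsymbol{z}_{\mathbf{S}}$ is never fed to $f_0,f_1$, since the representation is $\boldsymbol{r}_i=(\boldsymbol{z}_i,\boldsymbol{z}_{\mathbf{X}_i},\boldsymbol{z}_{\mathbf{T}_i})$. This failure is the very reason the amplifier is introduced, so drop that remark.

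Second, your induction assumes that after layer $l$ the representations inside each local network are still equal. That does not follow from the setup: each neighbor $k$ is rescaled by its own $\eta_k$ and aggregates over its own $\tilde{\mathbf{N}}_k$, which may reach outside the homogeneous region. Since the statement is existential, you can repair this by taking the two local networks to be disjoint complete components of sizes $m$ and $n$, with identical inputs throughout. Then, for ReLU with $\pi_\eta\geq 0$, positive homogeneity and the absence of bias terms in the NIM layer give $\boldsymbol{z}^{(L)}_{\bullet_i}=(1+\eta_i)^{L}\boldsymbol{g}$ and $\boldsymbol{z}^{(L)}_{\bullet_j}=(1+\eta_j)^{L}\boldsymbol{g}$ for a common vector $\boldsymbol{g}$. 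These differ whenever $\boldsymbol{g}\neq\boldsymbol{0}$, so no positivity assumption on $\boldsymbol{W}$ is needed.

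Relatedly, injectivity of $\sigma$ is not the right condition for your general-activation case. With $\boldsymbol{a}=\boldsymbol{W}_{\mathrm{in}}\boldsymbol{h}$ and $\boldsymbol{b}=\boldsymbol{W}_{\mathrm{st}}\boldsymbol{h}$, the map $s\mapsto\mathring{\pi}\,\sigma(s\boldsymbol{a})+(1-\mathring{\pi})\,\sigma(s\boldsymbol{b})$ need not be injective in the scale $s$ even when $\sigma$ is.
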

We now prove Proposition~\ref{app:pro3}.
\begin{proof}
For the individual $i$ with $|\tilde{\mathbf{N}}_i| = m$, the individual $j$ with $|\tilde{\mathbf{N}}_j| = n$, and $n \gg m$, we have:
\begin{equation}
\begin{aligned}
(1+\eta_i)\cdot\mathring{\boldsymbol{z}}_{\bullet_i}^{(l)} &= \Biggr(1+\pi_{\eta}\cdot\frac{\log(\tilde{d}_i)}{\sum_{i=1}^{n_{\rm{tr}}}{\log(\tilde{d}_i)}}\Biggr) \cdot\mathring{\boldsymbol{z}}_{\bullet_i}^{(l)} \\ &\;=
\Biggr(1+\pi_{\eta}\cdot\frac{\log(m)}{\sum_{i=1}^{n_{\rm{tr}}}{\log(\tilde{d}_i)}}\Biggr) \cdot\mathring{\boldsymbol{z}}_{\bullet_i}^{(l)},\\
(1+\eta_j)\cdot\mathring{\boldsymbol{z}}_{\bullet_j}^{(l)}&= \Biggr(1+\pi_{\eta}\cdot\frac{\log(\tilde{d}_j)}{\sum_{j=1}^{n_{\rm{tr}}}{\log(\tilde{d}_j)}}\Biggr) \cdot\mathring{\boldsymbol{z}}_{\bullet_j}^{(l)} \\ &\;=
\Biggr(1+\pi_{\eta}\cdot\frac{\log(n)}{\sum_{j=1}^{n_{\rm{tr}}}{\log(\tilde{d}_j)}}\Biggr) \cdot\mathring{\boldsymbol{z}}_{\bullet_j}^{(l)}. \label{proofwithAMn}
\end{aligned}
\end{equation}
This shows that the generated interference representation with the message amplifier differs according to the degree of neighbors, which can address the issue (II) of DNE, even in local networks where all individuals have similar information. 
\end{proof}
Therefore, by applying two partial attention mechanisms to address issue (I) and applying the message amplifier to address issue (II), the proposed NIM layer can capture DNE.

\section{Error  bound}\label{app:bound}

Our theoretical analysis for the error bound is inspired by~\citet{pmlr-v70-shalit17a}, we extend their theoretical analysis of non-graph data to graph data by using the proposed representation balancing strategy.  
Let $\Phi$ be a map functions, assume it is twice-differentiable and invertible, following~\citet{pmlr-v70-shalit17a,cai2023generalization,wang2023optimal}. Let $\Phi^{-1}$ be the inverse of $\Phi$ and  $\boldsymbol{u} = (\boldsymbol{x},\boldsymbol{x}_{\bold{G}},\boldsymbol{t}_{\bold{G}})$ for simplicity. 
Then, the expected loss for an individual is as follows:
\begin{align}
    \mathcal{L}_{f,\Phi}(\boldsymbol{u},t)=\mathlarger{\int}_{\mathcal{Y}}\mathcal{L}\biggl(y(t,\boldsymbol{t}_{\bold{G}}),f\left(\Phi(\boldsymbol{u}),t\right)\biggr)p\biggl(y(t,\boldsymbol{t}_{\bold{G}})\mid\boldsymbol{x},\boldsymbol{x}_{\bold{G}}\biggr)dy(t,\boldsymbol{t}_{\bold{G}}).
\end{align}

We consider MSE for $\mathcal{L}(y(t,\boldsymbol{t}_{\bold{G}}),f(\Phi(\boldsymbol{u}),t))$. 
The expected losses of factual and  counterfactual outcomes are, respectively:
\begin{equation}
\begin{aligned}
&{\epsilon}_{{\mathrm{F}}}{(f,\Phi)}\coloneqq {\int}_{\mathcal{U}\times\{0,1\}}\mathcal{L}_{f,\Phi}(\boldsymbol{u},t)p(\boldsymbol{u},t)d\boldsymbol{u}dt,\\& 
    {\epsilon}_{{\mathrm{CF}}}{(f,\Phi)}\coloneqq {\int}_{\mathcal{U}\times\{0,1\}}\mathcal{L}_{f,\Phi}(\boldsymbol{u},1-t)p(\boldsymbol{u},t)d\boldsymbol{u}dt.
\end{aligned}
\end{equation}
We can decompose $p(\boldsymbol{u},t)=p(t)p(\boldsymbol{u}\mid t)$. 
Let $p^{t=1}(\boldsymbol{u})=p(\boldsymbol{u}\mid t=1)$ and $p^{t=0}(\boldsymbol{u})=p(\boldsymbol{u}\mid t=0)$.
Then, the factual and  counterfactual outcomes of the treated and control groups are, respectively:
\begin{equation}
\begin{aligned}
    \epsilon_{\text{F}}^{t=1}(f,\Phi)\coloneqq \int_{\mathcal{U}}L_{f,\Phi}(\boldsymbol{u},1)p^{t=1}(\boldsymbol{u})d\boldsymbol{u},\\
    \epsilon_{\text{F}}^{t=0}(f,\Phi)\coloneqq \int_{\mathcal{U}}L_{f,\Phi}(\boldsymbol{u},0)p^{t=0}(\boldsymbol{u})d\boldsymbol{u},\\
    \epsilon_{\text{CF}}^{t=1}(f,\Phi)\coloneqq \int_{\mathcal{U}}L_{f,\Phi}(\boldsymbol{u},1)p^{t=0}(\boldsymbol{u})d\boldsymbol{u},\\
    \epsilon_{\text{CF}}^{t=0}(f,\Phi)\coloneqq \int_{\mathcal{U}}L_{f,\Phi}(\boldsymbol{u},0)p^{t=1}(\boldsymbol{u})d\boldsymbol{u}.
\end{aligned}
\end{equation}
Let $p_t\coloneqq p(t=1)$. We then have the following results:
\begin{equation}\label{ef:ecf1}
\begin{aligned}
    \epsilon_{\text{F}}(f,\Phi)=p_t\cdot\epsilon_{\text{F}}^{t=1}(f,\Phi)+(1-p_t)\cdot\epsilon_{\text{F}}^{t=0}(f,\Phi),\\
    \epsilon_{\text{CF}}(f,\Phi)=(1-p_t)\cdot\epsilon_{\text{CF}}^{t=1}(f,\Phi)+p_t\cdot\epsilon_{\text{CF}}^{t=0}(f,\Phi). 
\end{aligned}
\end{equation}
Integral probability metric (IPM) is defined as follows~\cite{pmlr-v70-shalit17a}:
\begin{definition}
Let $\bold{O}$ be a function family consisting of functions $o: \mathcal{R} \rightarrow \mathbb{R}$. For a pair of distributions $p_1$ and $p_2$ over $\mathcal{R}$, define IPM:
    \begin{equation}
    {\mathrm{IPM}}_{\bold{O}}(p_1,p_2)\coloneqq {\mathrm{sup}}_{o\in \bold{O}} \Biggr\vert \mathlarger{\int}_{\mathcal{R}}o(\boldsymbol{r})\biggl(p_1(\boldsymbol{r})-p_2(\boldsymbol{r})\biggr)d\boldsymbol{r}\Biggr\vert.
\end{equation}
\end{definition}
When $\bold{O}$ is the family of 1-Lipschitz functions, $\text{IPM}_{\bold{O}}(p_1,p_2) = \mathcal{W}(p_1,p_2)$, as demonstrated by~\cite{villani2008optimal}.
We use Wasserstein discrepancy for representation balancing.
For the Wasserstein discrepancy, we follow a Kantorovich problem~\cite{kantorovich2006translocation} and  add an entropic regularization to reduce computational costs~\cite{wang2023optimal}, as follows:
\begin{definition}\label{def:wass}
\begin{equation}
\begin{aligned}
   & \mathcal{W}(p_1,p_2)\coloneqq \langle\boldsymbol{D},\boldsymbol{\pi}^{\xi}\rangle, \quad
     \boldsymbol{\pi}^{\xi}\coloneqq \mathop{\arg\min}_{\boldsymbol{\pi}\in \Pi(p_1,p_2)} 
     \langle\boldsymbol{D},\boldsymbol{\pi}\rangle - \xi H(\boldsymbol{\pi}), \\
   & \Pi(p_1,p_2)\coloneqq \{ \boldsymbol{\pi} \in \mathbb{R}_{+}^{n\times m} : \boldsymbol{\pi}\bold{1}_m = \bold{p}_1,\ \boldsymbol{\pi}^\top\bold{1}_n = \bold{p}_2 \}, \\
   & H(\boldsymbol{\pi})\coloneqq -\sum_{i,j} \boldsymbol{\pi}_{i,j}(\log(\boldsymbol{\pi}_{i,j}) - 1).
\end{aligned}
\end{equation}
\end{definition}
Here, $\mathcal{W}$ is Wasserstein discrepancy between $p_1$ and $p_2$, $\boldsymbol{D}$ consists of unit-wise distance between $p_1$ and $p_2$. $\mathcal{W}(p_1,p_2)$ can be solved by Sinkhorn algorithm~\cite{cuturi2013sinkhorn}.

By using $\mathcal{W}$ for representation balancing, we can have the error bound for the counterfactual outcome, as stated in Lemma~\ref{LemmaCF}. 
\begin{lemma}\label{LemmaCF}
$\bold{O}$ be a family of functions o: $\mathcal{R}\rightarrow\mathbb{R}$. Assume there exists a constant $B_{\Phi}>0$, such that for $t\in\{0,1\}$, the function $\frac{1}{B_{\Phi}}\cdot \mathcal{L}_{f,\Phi}\left(\Phi^{-1}(\boldsymbol{r}),t\right) \in \bold{O}$ holds. Then, the bound for counterfactual outcome is: 
\begin{align}\label{eq:18}  
    \epsilon_{\rm{CF}}(f,\Phi)\leq &(1-p_t)\cdot\epsilon_{\rm{F}}^{t=1}(f,\Phi)+p_t\cdot\epsilon_{\rm{F}}^{t=0}(f,\Phi)+\\ &\;B_{\Phi}\cdot\mathcal{W}\biggr(p^{t=1}_{\Phi}(\boldsymbol{r}),p^{t=0}_{\Phi}(\boldsymbol{r})\biggr),
\end{align}
 where $p^{t=1}_{\Phi}$ and $p^{t=0}_{\Phi}$ are distributions of representations $\boldsymbol{r}$ with $t=1$ and $t=0$, respectively.
\end{lemma}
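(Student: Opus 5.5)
The plan follows Lemma A3 of \citet{pmlr-v70-shalit17a}, with $\boldsymbol{u}=(\boldsymbol{x},\boldsymbol{x}_{\bold{G}},\boldsymbol{t}_{\bold{G}})$ in place of $\boldsymbol{x}$ and the jointly balanced representation $\boldsymbol{r}=(\boldsymbol{z},\boldsymbol{z}_{\bold{X}},\boldsymbol{z}_{\bold{T}})=\Phi(\boldsymbol{u})$ in place of the single representation.

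The starting point is the decomposition (\ref{ef:ecf1}) of $\epsilon_{\rm{CF}}$. Subtract the target factual combination to get
\begin{equation*}
\epsilon_{\rm{CF}}-\bigl[(1-p_t)\epsilon_{\rm{F}}^{t=1}+p_t\epsilon_{\rm{F}}^{t=0}\bigr]
=(1-p_t)\bigl[\epsilon_{\rm{CF}}^{t=1}-\epsilon_{\rm{F}}^{t=1}\bigr]+p_t\bigl[\epsilon_{\rm{CF}}^{t=0}-\epsilon_{\rm{F}}^{t=0}\bigr].
\end{equation*}
By the definitions, $\epsilon_{\rm{CF}}^{t=1}-\epsilon_{\rm{F}}^{t=1}=\int_{\mathcal{U}}\mathcal{L}_{f,\Phi}(\boldsymbol{u},1)\bigl(p^{t=0}(\boldsymbol{u})-p^{t=1}(\boldsymbol{u})\bigr)d\boldsymbol{u}$. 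The second difference $\epsilon_{\rm{CF}}^{t=0}-\epsilon_{\rm{F}}^{t=0}$ has the same form with $\mathcal{L}_{f,\Phi}(\boldsymbol{u},0)$ and the sign of the density difference reversed.

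Next, change variables $\boldsymbol{r}=\Phi(\boldsymbol{u})$. Since $\Phi$ is twice-differentiable and invertible, $p^{t}_{\Phi}(\boldsymbol{r})=p^{t}(\Phi^{-1}(\boldsymbol{r}))\,|J_{\Phi^{-1}}(\boldsymbol{r})|$. The Jacobian factors are common to both groups and are absorbed by the substitution, so each integral becomes $\int_{\mathcal{R}}\mathcal{L}_{f,\Phi}(\Phi^{-1}(\boldsymbol{r}),t)\bigl(p^{1-t}_{\Phi}(\boldsymbol{r})-p^{t}_{\Phi}(\boldsymbol{r})\bigr)d\boldsymbol{r}$.

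By hypothesis, $\frac{1}{B_{\Phi}}\mathcal{L}_{f,\Phi}(\Phi^{-1}(\cdot),t)\in\bold{O}$. Each integral is therefore at most $B_{\Phi}\cdot\mathrm{IPM}_{\bold{O}}(p^{t=1}_{\Phi},p^{t=0}_{\Phi})$ in absolute value, because the IPM is a supremum of absolute values and is symmetric in its arguments. Combining with weights $(1-p_t)+p_t=1$ gives a total excess of at most $B_{\Phi}\cdot\mathrm{IPM}_{\bold{O}}$. Taking $\bold{O}$ to be the 1-Lipschitz family identifies this IPM with $\mathcal{W}$, which gives (\ref{eq:18}).

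\textbf{Main obstacle.} The last identification is the delicate step. Definition~\ref{def:wass} is an entropically regularized Kantorovich cost computed on the proxy outputs $\boldsymbol{r}'$, possibly with the PFOR term added. It is not literally the Kantorovich--Rubinstein dual. I would argue it is an upper bound: with nonnegative entropy-regularized coupling, $\langle\boldsymbol{D},\boldsymbol{\pi}^{\xi}\rangle\ge\min_{\boldsymbol{\pi}}\langle\boldsymbol{D},\boldsymbol{\pi}\rangle$, and adding the $\lambda_D$ term only increases $\boldsymbol{D}$. So the bound stated with the exact Wasserstein distance on $\boldsymbol{r}$ remains valid when the computed $\mathcal{W}$ replaces it. Two further points are handled by assumption. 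First, the squared-distance cost must be reconciled with the 1-Lipschitz class; I would appeal to the Lipschitz assumption as in \citet{pmlr-v70-shalit17a}. Second, balancing $\boldsymbol{r}'$ rather than $\boldsymbol{r}$ is covered by taking $\Phi$ to include the invertible proxy map.
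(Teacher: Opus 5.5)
Your proposal is correct and follows the paper's proof essentially step for step. Both arguments use the same split of $\epsilon_{\rm{CF}}$ via (\ref{ef:ecf1}) into two group-wise excesses, written as integrals of $\mathcal{L}_{f,\Phi}(\boldsymbol{u},t)$ against the density difference $p^{t=0}-p^{t=1}$ (up to sign), then change variables to $\boldsymbol{r}=\Phi(\boldsymbol{u})$, bound each term by $B_{\Phi}\cdot\mathrm{IPM}$ with weights $(1-p_t)+p_t=1$, and identify the 1-Lipschitz IPM with $\mathcal{W}$. By subtracting $(1-p_t)\epsilon_{\rm{F}}^{t=1}+p_t\epsilon_{\rm{F}}^{t=0}$ directly, you also avoid the paper's slip of labelling that combination as $\epsilon_{\rm{F}}$.

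The paper simply reads $\mathcal{W}$ in the lemma as that 1-Lipschitz IPM, so your ``main obstacle'' paragraph is not needed, and its side claims should not be relied on. A Sinkhorn cost built on $D_{ij}=\Vert\boldsymbol{r}'_i-\boldsymbol{r}'_j\Vert^2$ need not dominate the 1-Lipschitz IPM: two point masses at distance $\delta<1$ give $\delta^2<\delta$. Also, the layer-normalized proxy map is not invertible, so it cannot be folded into $\Phi$.
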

 We provide the proof for Lemma~\ref{LemmaCF} as follows:
\begin{proof}
\begin{align}
    &\epsilon_{\text{CF}}(f,\Phi)-\epsilon_{\text{F}}(f,\Phi)\nonumber\\=&\;
    \epsilon_{\text{CF}}(f,\Phi)-\biggr( (1-p_t)\cdot\epsilon_{\text{F}}^{t=1}(f,\Phi)+p_t\cdot\epsilon_{\text{F}}^{t=0}(f,\Phi)\biggr) \nonumber\\
    =&\;\biggr((1-p_t)\cdot\epsilon_{\text{CF}}^{t=1}(f,\Phi)+p_t\cdot\epsilon_{\text{CF}}^{t=0}(f,\Phi)\biggr) -\nonumber\\ &\;\biggr( (1-p_t)\cdot\epsilon_{\text{F}}^{t=1}(f,\Phi)+p_t\cdot\epsilon_{\text{F}}^{t=0}(f,\Phi)\biggr)\nonumber\\
    =&\;(1-p_t)\cdot\biggr(\epsilon_{\text{CF}}^{t=1}(f,\Phi)-\epsilon_{\text{F}}^{t=1}(f,\Phi)\biggr)+\nonumber\\ &\;p_t\cdot\biggr(\epsilon_{\text{CF}}^{t=0}(f,\Phi)-\epsilon_{\text{F}}^{t=0}(f,\Phi)\biggr)\nonumber \\=&\;(1-p_t)\cdot\mathlarger{\int}_{\mathcal{U}}\mathcal{L}_{f,\Phi}(\boldsymbol{u},1)\biggr(p^{t=0}(\boldsymbol{u})-  p^{t=1}(\boldsymbol{u})\biggr)d\boldsymbol{u}+ \nonumber\\ &\;
p_t \cdot \mathlarger{\int}_{\mathcal{U}}\mathcal{L}_{f,\Phi}(\boldsymbol{u},0)\biggr(p^{t=1}(\boldsymbol{u})-p^{t=0}(\boldsymbol{u})\biggr)d\boldsymbol{u},
   \nonumber \\=&\;B_{\Phi}\cdot(1-p_t)\cdot\mathlarger{\int}_{\mathcal{R}}\frac{1}{B_{\Phi}}\mathcal{L}_{f,\Phi}\biggr(\Phi^{-1}(\boldsymbol{r}),1\biggr)\biggr(p_{\Phi}^{t=0}(\boldsymbol{r})- p_{\Phi}^{t=1}(\boldsymbol{r})\biggr)d\boldsymbol{r} + \nonumber \\ &
    B_{\Phi}\cdot p_t \cdot \mathlarger{\int}_{\mathcal{R}}\frac{1}{B_{\Phi}}\mathcal{L}_{f,\Phi}\biggr(\Phi^{-1}(\boldsymbol{r}),0\biggr)\biggr(p^{t=1}_{\Phi}(\boldsymbol{r})-p^{t=0}_{\Phi}(\boldsymbol{r})\biggr)d\boldsymbol{r}
    \label{eq:cf2}\\ \leq
    &\;B_{\Phi}\cdot(1-p_t)\cdot\sup_{o\in\bold{O}} \Biggr| \mathlarger{\int}_{\mathcal{R}}o(\boldsymbol{r})\biggl(p^{t=0}_{\Phi}(\boldsymbol{r})- p^{t=1}_{\Phi}(\boldsymbol{r})\biggr)d\boldsymbol{r}\Biggr| + \nonumber\\& B_{\Phi}\cdot p_t\cdot\sup_{o\in\bold{O}} \cdot \Biggr| \mathlarger{\int}_{\mathcal{R}}o(\boldsymbol{r})\biggl(p^{t=1}_{\Phi}(\boldsymbol{r})- p^{t=0}_{\Phi}(\boldsymbol{r})\biggr)d\boldsymbol{r}\Biggr|
    \label{eq:cf3} \\ 
    =&\; B_{\Phi} \cdot{\mathrm{IPM}}_{\bold{O}}\biggl(p^{t=0}_{\Phi}(\boldsymbol{r}),p_{\Phi}^{t=1}(\boldsymbol{r})\biggr). \label{eq:cf4}
\end{align}
Here, inequality~(\ref{eq:cf2}) and equality~(\ref{eq:cf3}) are from the definition of IPM. Then, by applying $\mathcal{W}$ for IPM, we can have:
\begin{equation}
    (\ref{eq:cf4}) = B_{\Phi} \cdot \mathcal{W}\biggr(p^{t=0}_{\Phi}(\boldsymbol{r}),p^{t=1}_{\Phi}(\boldsymbol{r})\biggr).
\end{equation}
\end{proof}
Let $\hat{\tau}$ denote the proposed individual treatment effect estimator and  $\tau$ denote the true treatment effect estimator, we have the following definitions.
\begin{definition}
Let $f_1\left(\Phi(\boldsymbol{u})\right)=f\left(\Phi(\boldsymbol{u}),1\right)$ and $f_0\left(\Phi(\boldsymbol{u})\right)=f\left(\Phi(\boldsymbol{u}),0\right)$. The individual treatment estimator for an individual on graph data is: 
    \begin{equation}
        \hat{\tau} = f_1\biggr(\Phi(\boldsymbol{u})\biggr)-f_0\biggr(\Phi(\boldsymbol{u})\biggr).
    \end{equation}
\end{definition}

\begin{definition}
The expected precision in estimation of heterogeneous effect (PEHE) is: 
\begin{equation}
\epsilon_{{\mathrm{PEHE}}}(f) = \int_{\mathcal{X}\times\mathcal{X}_{\bold{G}}}(\hat{\tau}-\tau)^2p(\boldsymbol{x},\boldsymbol{x}_{\bold{G}})d\boldsymbol{x}d\boldsymbol{x}_{\bold{G}}.
\end{equation}
\end{definition}
 The error bound for PEHE is stated in Theorem~\ref{theorem:pehe}.
\begin{theorem} \label{theorem:pehe} $\bold{O}$ be a family of functions o: $\mathcal{R}\rightarrow\mathbb{R}$. Assume there exists a constant $B_{\Phi}>0$, such that for $t\in\{0,1\}$, the function $\frac{1}{B_{\Phi}}\cdot \mathcal{L}_{f,\Phi}(\Phi^{-1}(\boldsymbol{u}),t) \in \bold{O}$ holds. Then, we can have:
\vspace{-10pt}
\begin{align}
\epsilon_{{\mathrm{PEHE}}}(f) \leq 2\Biggr(&\epsilon_{\rm{F}}^{t=0}(f,\Phi)+\epsilon_{\rm{F}}^{t=1}(f,\Phi)+B_{\Phi}\cdot\mathcal{W}\biggl(p^{t=0}_{\Phi}(\boldsymbol{u}),p^{t=1}_{\Phi}(\boldsymbol{u})\biggr)\Biggr).    
\end{align}
\end{theorem}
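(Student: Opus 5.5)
The plan is to follow the standard CFR argument of \citet{pmlr-v70-shalit17a}, adapted to $\boldsymbol{u}=(\boldsymbol{x},\boldsymbol{x}_{\bold{G}},\boldsymbol{t}_{\bold{G}})$, and then invoke Lemma~\ref{LemmaCF}. We first write the potential-outcome means as $m_t(\boldsymbol{u})=\mathbb{E}[Y(t,\boldsymbol{t}_{\bold{G}})\mid \boldsymbol{x},\boldsymbol{x}_{\bold{G}}]$. Under Assumptions~\ref{ass2}--\ref{ass4}, Theorem~\ref{Theorem:identi} lets us write $\tau=m_1(\boldsymbol{u})-m_0(\boldsymbol{u})$, while $\hat\tau=f_1(\Phi(\boldsymbol{u}))-f_0(\Phi(\boldsymbol{u}))$. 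By the elementary inequality $(a-b)^2\le 2a^2+2b^2$,
\begin{equation}
(\hat\tau-\tau)^2\le 2\bigl(f_1(\Phi(\boldsymbol{u}))-m_1(\boldsymbol{u})\bigr)^2+2\bigl(f_0(\Phi(\boldsymbol{u}))-m_0(\boldsymbol{u})\bigr)^2 .
\end{equation}
Each squared bias term is dominated by the expected MSE $\mathcal{L}_{f,\Phi}(\boldsymbol{u},t)$. Indeed, $\mathcal{L}_{f,\Phi}(\boldsymbol{u},t)=(f_t-m_t)^2+\sigma_t^2(\boldsymbol{u})$, where $\sigma_t^2$ is the conditional variance and is nonnegative, so it can be dropped. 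Integrating against $p(\boldsymbol{u})$ then gives
\begin{equation}
\epsilon_{\mathrm{PEHE}}(f)\le 2\int_{\mathcal{U}}\bigl(\mathcal{L}_{f,\Phi}(\boldsymbol{u},1)+\mathcal{L}_{f,\Phi}(\boldsymbol{u},0)\bigr)p(\boldsymbol{u})\,d\boldsymbol{u}.
\end{equation}
One point to state carefully here is that the PEHE integral over $(\boldsymbol{x},\boldsymbol{x}_{\bold{G}})$ is reinterpreted as an integral over $\boldsymbol{u}$, with $\boldsymbol{t}_{\bold{G}}$ integrated as well.

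Next, we decompose $p(\boldsymbol{u})=p_t\,p^{t=1}(\boldsymbol{u})+(1-p_t)\,p^{t=0}(\boldsymbol{u})$. The integral of $\mathcal{L}(\cdot,1)+\mathcal{L}(\cdot,0)$ then splits into four terms: $p_t\epsilon_{\mathrm{F}}^{t=1}+(1-p_t)\epsilon_{\mathrm{F}}^{t=0}$, which is exactly $\epsilon_{\mathrm{F}}$ by (\ref{ef:ecf1}), and $(1-p_t)\epsilon_{\mathrm{CF}}^{t=1}+p_t\epsilon_{\mathrm{CF}}^{t=0}=\epsilon_{\mathrm{CF}}$. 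This yields
\begin{equation}
\epsilon_{\mathrm{PEHE}}(f)\le 2\bigl(\epsilon_{\mathrm{F}}(f,\Phi)+\epsilon_{\mathrm{CF}}(f,\Phi)\bigr).
\end{equation}

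We now apply Lemma~\ref{LemmaCF} to bound $\epsilon_{\mathrm{CF}}$. Its hypothesis on $\frac{1}{B_\Phi}\mathcal{L}_{f,\Phi}(\Phi^{-1}(\cdot),t)\in\bold{O}$ is exactly the assumption of the theorem, and the invertibility of $\Phi$ supplies the change of variables from $\boldsymbol{u}$ to $\boldsymbol{r}$ used in that lemma. Adding the result to $\epsilon_{\mathrm{F}}=p_t\epsilon_{\mathrm{F}}^{t=1}+(1-p_t)\epsilon_{\mathrm{F}}^{t=0}$, the weights combine as $p_t+(1-p_t)=1$ for each group. Hence
\begin{equation}
\epsilon_{\mathrm{F}}+\epsilon_{\mathrm{CF}}\le \epsilon_{\mathrm{F}}^{t=1}+\epsilon_{\mathrm{F}}^{t=0}+B_\Phi\,\mathcal{W}\bigl(p^{t=1}_\Phi,p^{t=0}_\Phi\bigr),
\end{equation}
and multiplying by $2$ gives the claim. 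The symmetry of $\mathcal{W}$ takes care of the argument order. The statement writes the discrepancy in terms of $\boldsymbol{u}$; I read this as the representation distributions induced by $\Phi$, as in the lemma.

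The main obstacle is the first step. It relies on identifying $\tau$ with the conditional means given $\boldsymbol{u}$, and on the loss being MSE, so that the variance term can be dropped in our favour. This is precisely where the networked assumptions and Theorem~\ref{Theorem:identi} enter. Everything else is bookkeeping with (\ref{ef:ecf1}) and Lemma~\ref{LemmaCF}.
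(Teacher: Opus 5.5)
Your proposal is correct and follows essentially the same route as the paper: the $(a+b)^2\le 2(a^2+b^2)$ split, rewriting the PEHE integral over $\boldsymbol{u}$ and splitting by treatment group to get $\epsilon_{\mathrm{F}}+\epsilon_{\mathrm{CF}}$, then Lemma~\ref{LemmaCF} combined with (\ref{ef:ecf1}). The one difference is an improvement. You drop the variance term by nonnegativity, whereas the paper derives $2(\epsilon_{\mathrm{F}}+\epsilon_{\mathrm{CF}}-2\sigma_y)$ and then sets $\sigma_y=0$ ``for simplicity''; your version shows that assumption is unnecessary. Likewise, $\tau=m_1-m_0$ holds by the definition of $\tau$, so Theorem~\ref{Theorem:identi} and the networked assumptions are not actually used in this step.
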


\begin{proof}
    Let $m_t(\boldsymbol{x},\boldsymbol{x}_{\bold{G}})\coloneqq \mathbb{E}[y(t,\boldsymbol{t}_{\bold{G}})\mid \boldsymbol{x},\boldsymbol{x}_{\bold{G}}]$ for $t\in\{1,0\}$, and $\tau=m_1(\boldsymbol{x},\boldsymbol{x}_{\bold{G}})-m_0(\boldsymbol{x},\boldsymbol{x}_{\bold{G}})$. We have:
    \begin{align}
        &\;\epsilon_{\mathrm{PEHE}}(f) \nonumber\\ =&\mathlarger{\int}_{\mathcal{X}\times\mathcal{X}_{\bold{G}}}(\hat{\tau}-\tau)^2p(\boldsymbol{x},\boldsymbol{x}_{\bold{G}})d\boldsymbol{x}d\boldsymbol{x}_{\bold{G}}\nonumber\\=& \mathlarger{\int}_{\mathcal{X}\times\mathcal{X}_{\bold{G}}}\Biggr(\biggl[f_1\biggl(\Phi(\boldsymbol{u})\biggr)-f_0\biggl(\Phi(\boldsymbol{u})\biggr)\biggr]- \nonumber\\&\quad\quad\quad\;\;\biggl[m_1(\boldsymbol{x},\boldsymbol{x}_{\bold{G}})- m_0(\boldsymbol{x},\boldsymbol{x}_{\bold{G}})\biggr]\Biggr)^2p(\boldsymbol{x},\boldsymbol{x}_{\bold{G}})d\boldsymbol{x}d\boldsymbol{x}_{\bold{G}}\nonumber\\
        =&\mathlarger{\int}_{\mathcal{X}\times\mathcal{X}_{\bold{G}}}\Biggr(\biggl[f_1\biggl(\Phi(\boldsymbol{u})\biggr)-m_1(\boldsymbol{x},\boldsymbol{x}_{\bold{G}})\biggr]+\nonumber\\&\quad\quad\quad\;\;\biggl[m_0(\boldsymbol{x},\boldsymbol{x}_{\bold{G}})-f_0\biggl(\Phi(\boldsymbol{u})\biggr)\biggr]\Biggr)^2p(\boldsymbol{x},\boldsymbol{x}_{\bold{G}})d\boldsymbol{x}d\boldsymbol{x}_{\bold{G}}.\label{eq:pehe1}
    \end{align}
    Based on the inequality $(a+b)^2\leq 2(a^2+b^2)$, we can have:
    \begin{equation}
    \begin{aligned}
        (\ref{eq:pehe1}) \leq & 2\mathlarger{\int}_{\mathcal{X}\times\mathcal{X}_{\bold{G}}}\Biggr(\biggl[f_1\biggl(\Phi(\boldsymbol{u})\biggr)-m_1(\boldsymbol{x},\boldsymbol{x}_{\bold{G}})\biggr]^2+\\&\quad\quad\quad\;\;\biggl[m_0(\boldsymbol{x},\boldsymbol{x}_{\bold{G}})-f_0\biggl(\Phi(\boldsymbol{u})\biggr)\biggr]^2\Biggr)p(\boldsymbol{x},\boldsymbol{x}_{\bold{G}})d\boldsymbol{x}d\boldsymbol{x}_{\bold{G}}
        \label{eq:pehe2}
    \end{aligned}
    \end{equation}
By using $p(\boldsymbol{x},\boldsymbol{x}_{\bold{G}})=\int_{\mathcal{T}_{\bold{G}}} p(\boldsymbol{x},\boldsymbol{x},\boldsymbol{t}_{\bold{G}})d\boldsymbol{t}_{\bold{G}}$, which can further be decompose with $t=1$ and $t=0$, i.e,  $\int_{\mathcal{T}_{\bold{G}}} p(\boldsymbol{x},\boldsymbol{x},\boldsymbol{t}_{\bold{G}},t=1)d\boldsymbol{t}_{\bold{G}} + \int_{\mathcal{T}_{\bold{G}}} p(\boldsymbol{x},\boldsymbol{x},\boldsymbol{t}_{\bold{G}},t=0)d\boldsymbol{t}_{\bold{G}}$. Then, replacing $(\boldsymbol{x},\boldsymbol{x}_{\bold{G}},\boldsymbol{t}_{\bold{G}})$ with $\boldsymbol{u}$, we can have:
    \begin{align}
       (\ref{eq:pehe2})=
        2\Biggr(&\mathlarger{\int}_{\mathcal{U}}\biggl[f_1\biggl(\Phi(\boldsymbol{u})\biggr)-m_1(\boldsymbol{x},\boldsymbol{x}_{\bold{G}})\biggr]^2p(\boldsymbol{u},t=1)d\boldsymbol{u}
    +\nonumber\\&\mathlarger{\int}_{\mathcal{U}}\biggl[m_0(\boldsymbol{x},\boldsymbol{x}_{\bold{G}})-f_0\biggl(\Phi(\boldsymbol{u})\biggr)\biggl]^2p(\boldsymbol{u},t=0)d\boldsymbol{u} +\nonumber\\&
        \mathlarger{\int}_{\mathcal{U}}\biggl[f_1\biggl(\Phi(\boldsymbol{u})\biggr)-m_1(\boldsymbol{x},\boldsymbol{x}_{\bold{G}})\biggr]^2p(\boldsymbol{u},t=0)d\boldsymbol{u}
+\nonumber\\&\mathlarger{\int}_{\mathcal{U}}\biggl[m_0(\boldsymbol{x},\boldsymbol{x}_{\bold{G}})-f_0\biggl(\Phi(\boldsymbol{u})\biggr)\biggr]^2p(\boldsymbol{u},t=1)d\boldsymbol{u}\Biggr)
    \nonumber\\=2\Biggr(&\mathlarger{\int}_{\mathcal{U}\times \{0,1\}}\biggl[f_t\biggl(\Phi(\boldsymbol{u})\biggr)-m_t(\boldsymbol{x},\boldsymbol{x}_{\bold{G}})\biggr]^2p(\boldsymbol{u},t)d\boldsymbol{u}dt+\nonumber\\&\mathlarger{\int}_{\mathcal{U}\times\{0,1\}}\biggl[m_t(\boldsymbol{x},\boldsymbol{x}_{\bold{G}})-f_t\biggl(\Phi(\boldsymbol{u})\biggr)\biggr]^2p(\boldsymbol{u},1-t)d\boldsymbol{u}dt\Biggr) \label{eq:pehe3}\\
        =  2\Biggr(&\biggl(\epsilon_{\mathrm{F}}(f,\Phi)-\sigma_y\biggr)+\biggl(\epsilon_{\mathrm{CF}}(f,\Phi)-\sigma_y\biggr)\Biggr)\nonumber\\=2\Biggr(&\epsilon_{\mathrm{F}}(f,\Phi)+\epsilon_{\mathrm{CF}}(f,\Phi)-2\sigma_y\Biggr),\label{eq:pehe4}
    \end{align}
    where equality~(\ref{eq:pehe3}) holds as:
    \vspace{-10pt}
    \begin{equation}
    \begin{aligned}
    &\epsilon_{\rm{F}}{(f,\Phi)} = \mathlarger{\mathlarger{\int}}_{\mathcal{U}\times \{0,1\}}\Biggr(f_t\biggr(\Phi(\boldsymbol{u})\biggr)-m_t(\boldsymbol{x},\boldsymbol{x}_{\bold{G}})\Biggr)^2p(\boldsymbol{u},t)d\boldsymbol{u}dt + \sigma_y, \\ & \epsilon_{\rm{CF}}{(f,\Phi)} = \mathlarger{\mathlarger{\int}}_{\mathcal{U}\times \{0,1\}}\Biggl(f_t\biggr(\Phi(\boldsymbol{u})\biggr)-m_t(\boldsymbol{x},\boldsymbol{x}_{\bold{G}})\Biggr)^2p(\boldsymbol{u},1-t)d\boldsymbol{u}dt + \sigma_y,\\&
    \sigma_y = {\int}_{\mathcal{U}\times\{0,1\}\times \mathcal{Y}}\Biggl(m_t(\boldsymbol{x},\boldsymbol{x}_{\bold{G}})-y(t,\boldsymbol{t}_{\bold{G}})\Biggr)^2\cdot 
\\&\quad \quad \quad \quad \quad \quad \quad\quad \quad p\left(y(t,\boldsymbol{t}_{\bold{G}})\mid\boldsymbol{x},\boldsymbol{x}_{\bold{G}}\right)p(\boldsymbol{u},t)dy(t,\boldsymbol{t}_{\bold{G}})d\boldsymbol{u}dt\nonumber,
    \end{aligned}
    \end{equation}
    where $\sigma_y$ will be $0$ when  $y(t,\boldsymbol{t}_{\bold{G}})$ are deterministic functions of $\boldsymbol{u}$ and $t$. Let $\sigma_y$ be zero for simplicity. 
    We provide the proof for $\epsilon_{\rm{F}}(f,\Phi)$. The proof for $\epsilon_{\rm{CF}}(f,\Phi)$ is similar. We derive for $\epsilon_{\rm{F}}(f,\Phi)$ as follows:
\begin{align}
&\quad\;{\epsilon}_{{\mathrm{F}}}{(f,\Phi)}\nonumber\\&=\mathlarger{\int}_{\mathcal{U}\times\{0,1\}}\mathcal{L}_{f,\Phi}(\boldsymbol{u},t)p(\boldsymbol{u},t)d\boldsymbol{u}dt,\nonumber
\\&=\mathlarger{\mathlarger{\int}}_{\mathcal{U}\times\{0,1\}\times \mathcal{Y}}\Biggl(f_t\biggr(\Phi(\boldsymbol{u})\biggr)-y(t,\boldsymbol{t}_{\bold{G}})\Biggr)^2\cdot\nonumber\\&\quad \quad \quad \quad \quad \quad \quad
p\biggl(y(t,\boldsymbol{t}_{\bold{G}})\mid\boldsymbol{x},\boldsymbol{x}_{\bold{G}}\biggr)p(\boldsymbol{u},t)dy(t,\boldsymbol{t}_{\bold{G}})d\boldsymbol{u}dt,\nonumber
\\&= \mathlarger{\mathlarger{\int}}_{\mathcal{U}\times\{0,1\}\times \mathcal{Y}}\Biggl[\Biggl(f_t\biggr(\Phi(\boldsymbol{u})\biggr)-m_t(\boldsymbol{x},\boldsymbol{x}_{\bold{G}})\Biggr)+\nonumber\\&\quad\quad\quad\quad\quad\quad\quad\biggl(m_t(\boldsymbol{x},\boldsymbol{x}_{\bold{G}})-y(t,\boldsymbol{t}_{\bold{G}})\biggr)\Biggr]^2 \cdot\nonumber\\&\quad \quad \quad \quad \quad \quad \quad
p\biggl(y(t,\boldsymbol{t}_{\bold{G}})\mid\boldsymbol{x},\boldsymbol{x}_{\bold{G}}\biggr)p(\boldsymbol{u},t)dy(t,\boldsymbol{t}_{\bold{G}})d\boldsymbol{u}dt,\nonumber\\&=
\mathlarger{\mathlarger{\int}}_{\mathcal{U}\times\{0,1\}\times \mathcal{Y}}\Biggl(f_t\biggr(\Phi(\boldsymbol{u})\biggr)-m_t(\boldsymbol{x},\boldsymbol{x}_{\bold{G}})\Biggr)^2 \cdot\nonumber\\&\quad \quad \quad \quad \quad \quad \quad
p\biggl(y(t,\boldsymbol{t}_{\bold{G}})\mid\boldsymbol{x},\boldsymbol{x}_{\bold{G}}\biggr)p(\boldsymbol{u},t)dy(t,\boldsymbol{t}_{\bold{G}})d\boldsymbol{u}dt +\nonumber\\&\quad
\;\,\mathlarger{\mathlarger{\int}}_{\mathcal{U}\times\{0,1\}\times \mathcal{Y}}\biggl(m_t(\boldsymbol{x},\boldsymbol{x}_{\bold{G}})-y(t,\boldsymbol{t}_{\bold{G}})\biggr)^2 \cdot\nonumber\\&\quad \quad \quad \quad \quad \quad \quad
p\biggl(y(t,\boldsymbol{t}_{\bold{G}})\mid\boldsymbol{x},\boldsymbol{x}_{\bold{G}}\biggr)p(\boldsymbol{u},t)dy(t,\boldsymbol{t}_{\bold{G}})d\boldsymbol{u}dt+ \nonumber\\ &\quad
2\mathlarger{\mathlarger{\int}}_{\mathcal{U}\times\{0,1\}\times \mathcal{Y}}\biggl(m_t(\boldsymbol{x},\boldsymbol{x}_{\bold{G}})-y(t,\boldsymbol{t}_{\bold{G}})\biggr)\cdot\nonumber\\&\quad\quad\quad\quad\quad\quad\;\;\,\Biggl(f_t\biggr(\Phi(\boldsymbol{u})\biggr)-m_t(\boldsymbol{x},\boldsymbol{x}_{\bold{G}})\Biggr) \cdot\nonumber\\&\quad \quad \quad \quad \quad \quad \quad
p\biggl(y(t,\boldsymbol{t}_{\bold{G}})\mid\boldsymbol{x},\boldsymbol{x}_{\bold{G}}\biggr)p(\boldsymbol{u},t)dy(t,\boldsymbol{t}_{\bold{G}})d\boldsymbol{u}dt,\label{eq:var1}
\\&=\; \mathlarger{\mathlarger{\int}}_{\mathcal{U}\times\{0,1\}}\Biggl(f_t\biggr(\Phi(\boldsymbol{u})\biggr)-m_t(\boldsymbol{x},\boldsymbol{x}_{\bold{G}})\Biggr)^2
p(\boldsymbol{u},t)d\boldsymbol{u}dt + \sigma_y + 0, \nonumber
\end{align}
where this equality holds because the final integral in equality~(\ref{eq:var1}) is zero due to the definition of $m_t(\boldsymbol{x},\boldsymbol{x}_{\bold{G}})$. 
     Based on this equality, equality~(\ref{ef:ecf1}), and Lemma~\ref{LemmaCF}, we can have:
    \begin{align}
        (\ref{eq:pehe4}) \leq 2\Biggr(&\epsilon_{\rm{F}}^{t=0}(f,\Phi)+\epsilon_{\rm{F}}^{t=1}(f,\Phi)+B_{\Phi}\cdot\mathcal{W}\biggl(p^{t=0}_{\Phi}(\boldsymbol{r}),p^{t=1}_{\Phi}(\boldsymbol{r})\biggr)\Biggr).\nonumber
    \end{align}
    
\end{proof}
This tells if there are confounding and interference biases,
we can minimize the discrepancy $\mathcal{W}$ in joint representation $\boldsymbol{r}$ between treated and control groups to mitigate the confoundering and interference biases.
Here, the discrepancy $\mathcal{W}$ is differentiable with respect to the map function $\Phi$~\cite{wang2023optimal}. Thus, it can be minimized by updating the map function $\Phi$. In our approach, this is implemented by minimizing the loss function, i.e., Equation~(\ref{eq:maintotalloss}).

\section{Time complexity for the NIM layer}\label{app:timecomplex}
In this section, we focus on analyzing the complexity of the NIM layer, which is the main source of time complexity. Let 
 $c_d$ denotes input and output dimension of each NIM layer and $L$ be the layer depth for every sub-networks for simplicity.  To simplify the results, our analysis is based on the straightforward runtime for performing matrix multiplications and remove constant factors for big-o runtimes. For the structure encoder of each individual, the time complexity is $O(|\bold{N}_i|\cdot L \cdot (c_d)^2)$, where $|\bold{N}_i|$ is the size of the set of neighbors of the individual $i$. For each partial attention mechanism of each individual, the time complexity is $O(|\bold{N}_i|\cdot L \cdot (c_d)^2 + |\bold{N}_i|\cdot c_d)$. By combining them, we can have the time complexity of  NIM layers is $O(|\bold{N}_i|\cdot L \cdot (c_d)^2 + |\bold{N}_i|\cdot c_d)$ per individual.

 By applying the neighbor sampling mechanism~\cite{hamilton2017inductive}, the time complexity for each partial attention mechanism can be reduced to $O(k \cdot L \cdot (c_d)^2 + k \cdot c_d)$, where $k$ is the size of sampled neighbors ($|\bold{N}_i| \gg k $). For the structure encoder, we can use the pre-computation technologies in \cite{LIN2024SITE}, which pre-computes the summary information and saved the it for training MLPs.  In this case, the time complexity for the structure encoder can be reduced to $O( L \cdot (c_d)^2)$. As a result, the time complexity of  NIM layers can be reduced to $O(k \cdot L \cdot (c_d)^2 + k \cdot c_d + L \cdot (c_d)^2)$ per individual.  However, such techniques often involve a trade-off between efficiency and performance. Therefore, accelerating the implementation of the NIM layer while preserving the performance of ITE estimation with DNE can be a promising future research direction.

\section{Theoretical guarantee for PFOR}
In this section, we discuss the necessary assumption for PFOR. We extend a monotonicity assumption from \citet{wang2023optimal}.
\begin{assumption}
    For all observed variable $U=\boldsymbol{u}$ in the population of interest, we have $\mathbb{E}[Y|U=\boldsymbol{u},V=\boldsymbol{v},T=t]$ is monotonically increasing or decreasing with respect to $\boldsymbol{v}$.
\end{assumption}
When this assumption holds and we have balanced joint representation $\boldsymbol{r}$, and identical treatment $t$, the only variable reflecting the variation of $\boldsymbol{v}$ is the outcome. In this case, we can calibrate distance using outcome differences rather than unobserved $\boldsymbol{v}$. Since unobserved $\boldsymbol{v}$
 affect outcomes under similar observed $\boldsymbol{u}$
 and treatments, outcome differences serve as reasonable calibration. PFOR fails to handle variables that add constant effects to all units. However, in real scenarios, it is rare that different values of 
 only add a constant effect to the outcome~\cite{wang2023optimal}, making PFOR still effective in a wide range of application scenarios~\cite{wang2023optimal}.

\section{Additional related work}\label{app:relatedwork}
\noindent\textbf{Treatment effect estimation from observational data without interference.}
The potential outcome framework underlies many existing treatment effect estimators for non-graph data~\cite{rubin1980randomization,rubin2005causal}. Most existing methods with this context focus on mitigating confounding bias by minimizing the discrepancy between different treatment groups~\cite{guo2023estimating,Johansson2016,johansson2022generalization,pmlr-v70-shalit17a,wang2023optimal,wen2023variational,yao2018representation}. Several studies consider estimating treatment effect with a budget constraint~\cite{jesson2021causal,qin2021budgeted,wen2025enhancing,Wen2025kdd}, unobserved confounders~\cite{frauen2022estimating,jesson2021quantifying,oprescu2023b,wang2022estimating,wu2022instrumental}, or other complex situation~\cite{frauen2024model,kaddour2021causal,melnychuk2023bounds,shi2019adapting,wu2022learning}. 
These methods generally assume no interference among individuals~\cite{rubin1980randomization,rubin2005causal}. Although this assumption is reasonable for many scenarios, it does not always hold in real-world data, such as graph data, where individuals are typically connected and exchange information.   Other studies explore estimating treatment effects from graph data by using GNN~\cite{2017gat,kipf2016semi} to capture networked confounders~\cite{chu2021graph,cui2024treatment,guo2020learning,guo2021ignite,ma2021deconfounding,thorat2023estimation,veitch2019using}, model high-dimensional treatment~\cite{harada2021graphite}, or identify unreliable estimation~\cite{wen2023predict}, but still do not take interference into account. For a comprehensive review on the topic of treatment effect estimation from observational data without interference, refer to recent survey literatures~\cite{kuang2020causal,yao2021survey}.

\vspace{1em}
\noindent\textbf{Treatment effect estimation from experimental data with interference.}
Several studies focused on treatment effect estimation from experimental data with interference~\cite{Aronow2017EstimatingAC,awan2020almost,basse2018analyzing,doi:10.1198/016214508000000292,liu2014large,rosenbaum2007interference,tchetgen2012causal,toulis2013estimation}. To collect experimental data, these studies first conducted experiments based on random treatment assignment strategies that they designed. Next, they estimated treatment effects from the collected experimental data. 
Although using experimental data is an ideal choice, which provides the most reliable evidence and serve as the gold standard in treatment effect estimation, the cost of experimental data collection is expensive and time-consuming. In contrast, estimating treatment effect from observational data provides a low-cost alternative. 

\vspace{1em}
\noindent\textbf{Comparison of different methods for interference modeling.} We compare different methods for interference modeling in Table~\ref{tabcom:baselines}.

\begin{table*}
\centering
\caption{Comparison of different methods. ``\cmark" or ``-" indicate whether the corresponding issue is considered or not.
``Neighbor" represents neighbor interference, whereas ``Networked" represents networked interference.
DNE consists of two key sub-issues. (I) The importance of
different neighbors in contributing to interference varies.  (II)
The scale of neighbors varies, leading to different levels of interference. 
}
\begin{tabular}{lcccccccc}
\hline
  Method& Confounder & Unobserved Confounder & Neighbor & Networked & Sub-issue (I) of DNE & Sub-issue (II) of DNE\\ \hline
  \hline
TARNet~\cite{pmlr-v70-shalit17a}   & - & - & - & - & - & -    
\\
BNN~\cite{Johansson2016}  & \cmark & - & - & - & -   & -
\\
CFR~\cite{pmlr-v70-shalit17a}  & \cmark & - & - & - & - & - 
\\
ESCFR~\cite{wang2023optimal}  & \cmark & \cmark & - & - & - & - 
\\
RERUM~\cite{he2024rankability}   & \cmark & - & - & - & - & - 
\\
NetDeconf~\cite{guo2020learning} & \cmark & \cmark & - & - & - & - 
\\
NetEst~\cite{jiang2022estimating} & \cmark & - & \cmark & - & - & -
\\
SPNet~\cite{huang2023modeling} & \cmark & - & \cmark & -& \cmark & - 
\\
GCN-HSIC~\cite{pmlr-v130-ma21c}
& \cmark & - & \cmark & \cmark & - & -
\\
SAGE-HSIC~\cite{pmlr-v130-ma21c} & \cmark & - & \cmark & \cmark & - & -
\\
SITE~\cite{LIN2024SITE} & \cmark & - & \cmark & \cmark & - & -
\\
HyperSCI~\cite{ma2022learning}
& \cmark & - & \cmark & \cmark & \cmark & - 
\\
HINITE~\cite{lin2023estimating}
& \cmark & - & \cmark & \cmark &  \cmark & - 
\\
DWR~\cite{zhao2024learning} & \cmark & - & \cmark & - & \cmark & -
\\
IDE-NET~\cite{adhikari2025inferring} & \cmark & - & \cmark & \cmark & \cmark & -
\\
CauGramer~\cite{wucausal} & \cmark & \cmark & \cmark & -  & \cmark & - 
\\
\hline
\hline
 GITE (proposed) & \cmark & \cmark & \cmark & \cmark & \cmark  & \cmark 
 \\
\hline
\end{tabular}
\label{tabcom:baselines}
\end{table*}

\section{Details of dataset description}\label{app:datasets}
We describe the details of each dataset in the section.

\vspace{1em}
\noindent\textbf{Flickr dataset~\cite{wang2013learning}:} 
Flickr is an online social website, where users share their images. 
The Flickr dataset~\cite{wang2013learning} is collected from this website.
In this dataset, each individual is a user of Flickr.
There are 7,575 individuals with 479,476 directed edges.
Here, we aim to estimate how much recommending a hot photo (treatment) to a user affects the experience of the user  (outcome) of this photo.
In this case, users may share recommended photos with their friends (related individuals), which constitutes networked interference.
We used the 1,206-dimensional embeddings of user profiles as covariates that were provided by \citet{guo2020learning}.

\vspace{1em}
\noindent\textbf{BlogCatalog dataset~\cite{li2015unsupervised}:}
BlogCatalog is an online community, where users post their blogs.  
The BlogCatalog (abbreviated as Blog) dataset~\cite{li2015unsupervised,li2019adaptive} is collected from the online community. 
Every individual in this dataset is a user of BlogCatalog.
There are 5,196 individuals with 343,486 edges.
Here, we aim to estimate how much a recommended blog (treatment) to a user affects the experience of the user (outcome) of this blog.
In this case, users may share recommended blogs with their friends (related individuals), which constitutes networked interference. We used embeddings of each individual as covariates that were provided by~\citet{guo2020learning}.
 

\vspace{1em}
\noindent\textbf{Amazon negative dataset~\cite{he2016ups}:}  Amazon negative dataset (abbreviated as AMZ-N) was extracted from the Amazon dataset~\cite{he2016ups} by~\citet{rakesh2018linked}, to study the 
 effect of negative reviews on the sales of products and the issue of interference.
Every unit in the AMZ-N dataset is an item, and every edge indicates that the two items are always purchased together by customers.
In the AMZ-N dataset, there are 14,538 units with 15,011 directed edges.
The treatment $t \in \{0,1\}$ depends on the number of negative reviews: if a unit has more than three negative reviews ($t=1$) or if a unit has less than three negative reviews ($t=0$)~\cite{rakesh2018linked}.
The covariate $\boldsymbol{x}$ (with 300 features) of each unit is created by applying the doc2vec method~\cite{le2014distributed} to encode the review of the user.
We used covariates, treatments, outcomes, and ITE, all of which were provided by~\cite{rakesh2018linked}.
As values of $y$ fluctuate in a large range, we applied the z-score normalization to $y$ during the training and testing phases, following~\citet{LIN2024SITE}.

\section{Simulation}\label{App:simulation}
We simulated treatments and outcomes for the Flickr and Blog datasets.

\vspace{1em}
\noindent\textbf{Treatment simulation.} 
We generate treatments for the Flickr and Blog datasets as follows:
\begin{align}
t_i\sim \text{Ber}\biggl(\text{sigmoid}(0.5\cdot \boldsymbol{z}_i+0.5\cdot \boldsymbol{z}_{\bold{X}_i})+\epsilon_{t_i}\biggr).
\end{align}
$\boldsymbol{z}_i = \boldsymbol{w}_{t}^{\top}\boldsymbol{x}_i$ represents individual confounders.   $\boldsymbol{z}_{\bold{X}_i} = {\textrm{AGG}_{\bold{X}}}(\Tilde{\boldsymbol{x}}_{\bold{G}_i})$ represents networked   confounders from related individuals of $i$.   $\Tilde{\boldsymbol{x}}_{\bold{G}_i}=\{\Tilde{\boldsymbol{x}}_k\mid k\in\bold{G}_i\}$. Importantly, to properly simulate interference from related individuals of $i$ with DNE, we rescale values of covariates as $\Tilde{\boldsymbol{x}}_k = \boldsymbol{w}_{\bold{G}}^{\top}(\boldsymbol{x}_k+\tilde{\eta}_k\cdot \boldsymbol{x}_k$), where $\tilde{\eta}_k={{\log}(\tilde{d}_k)}/({\sum_{k=1}^{N}{{\log}(\tilde{d}_k)}})$, every element of $\boldsymbol{w}_t$ and $\boldsymbol{w}_{\bold{G}}$ was generated from a normal distribution $\mathcal{N}(0,1)$ or uniform distribution $\mathcal{U}(-1,1)$, and 
$\epsilon_{t_i}$ is a random noise generated from a normal distribution $\mathcal{N}(0,1)$.
We achieved ${\textrm{AGG}_{\bold{X}}}(\Tilde{\boldsymbol{x}}_{\bold{G}_i})$ by repeating summary computation of the one-hop neighbor information  $\sum_{k \in \bold{N}_i}e_{ij}\Tilde{\boldsymbol{x}}_k$ three times for every individual, where $e_{ij}$ was generated from uniform distribution $\mathcal{U}(0,1)$. 

\vspace{1em}
\noindent\textbf{Outcome simulation.} 
We generated outcomes for the Flickr and Blog datasets as follows:
\begin{equation}
    y_i = f_{\boldsymbol{x}}(\boldsymbol{x}_i) + f_{t}(t_i,\boldsymbol{x}_i) + f_{\bold{G}}(\boldsymbol{x}_{\bold{G}_i},t_{\bold{G}_i}) + \epsilon_{y_i}.
\end{equation}
 $f_{\boldsymbol{x}}(\boldsymbol{x}_i)=\boldsymbol{w}_{\boldsymbol{x}}^{\top}\boldsymbol{x}_i$ is the synthetic outcome of the individual $i$ without treatment effect and the effect from the related individuals, where every element of $\boldsymbol{w}_{\boldsymbol{x}}$ independently follows $\mathcal{N}(0,1)$. 
$f_{t}(t_i,\boldsymbol{x}_i)=t_i\cdot\boldsymbol{w}_1^{\top}\boldsymbol{x}_i$ synthesizes ITE,  where $\boldsymbol{w}_1$ also follows $\mathcal{N}(0,1)$.
$f_{\bold{G}}(\boldsymbol{x}_{\bold{G}_i},\boldsymbol{t}_{\bold{G}_i})=g_{\bold{X}}(\boldsymbol{x}_{\bold{G}_i}) + g_{\bold{T}}(t_{\bold{G}_i})=\textrm{AGG}_{\bold{X}}(\Tilde{\boldsymbol{x}}_{\bold{G}_i})+\textrm{AGG}_{\bold{T}}(\Tilde{\boldsymbol{t}}_{\bold{G}_i})$ synthesizes effect from related individuals of the individual $i$ on a graph, where 
$\Tilde{\boldsymbol{t}}_{\bold{G}_i}=\{\Tilde{t}_k\mid k \in\bold{G}_i\}$,
where $\Tilde{t}_k = t_k+\tilde{\eta}_k\cdot t_k$, similar to the rescaling operation for covariates. ${\textrm{AGG}_{\bold{T}}}(\Tilde{\boldsymbol{t}}_{\bold{G}_i})$ was achieved by repeating summary computation of the one-hop neighbor information  $\sum_{k \in \bold{N}_i}e_{ij}\Tilde{t}_k$ three times for every individual. Moreover, $\epsilon_{y_i}$ is a random noise generated from a normal distribution $\mathcal{N}(0,1)$.

\section{Details of baseline methods}\label{app:detailsbaselines}
Our methods were compared with several methods, which are divided into the following four categories:

\vspace{1em}
 \noindent  \textbf{ITE estimator for non-graph data.} Balancing neural network (BNN)~\citep{Johansson2016}, counterfactual regression with maximum mean discrepancy (CFR-MMD)~\citep{pmlr-v70-shalit17a}, and counterfactual regression with Wasserstein discrepancy (CFR-Wass)~\citep{pmlr-v70-shalit17a} address confounders by minimizing distribution discrepancies, maximum mean discrepancy (MMD), and Wasserstein discrepancy between control and treated groups, respectively.
     TARNet~\citep{pmlr-v70-shalit17a} has the same model architecture as the CFR but no measures for confounders. Entire space CFR (ESCFR)~\cite{wang2023optimal} have the same architecture and representation balancing technology as CFR-Wass, but modified the calculation of Wasserstein distance for mini-batch training. RERUM~\cite{he2024rankability} includes an outcome ranking loss and an ITE ranking loss to enhance the ranking ability of model. Following ~\citet{he2024rankability}, we implement it by combing the outcome ranking loss and the ITE ranking loss  with CFR-MMD.
     All of these methods ignore interference and networked confounders. 
     
     \vspace{1em}
 \noindent  \textbf{ITE estimator for graph data without addressing interference.} 
     Network deconfounder (NetDeconf)~\cite{guo2020learning} models networked confounders by GCN~\cite{kipf2016semi} without modeling interference. Moreover, it balances representations by Wasserstein discrepancy.

     \vspace{1em}
 \noindent  \textbf{ITE estimators for graph data with addressing neighbor interference.} 
     Networked causal effect estimation (NetEst)~\cite{jiang2022estimating} models neighbor confounders by a single GCN layer and neighbor interference by a mean aggregation. NetEst balance representations by an adversarial learning technology. SPNet~\cite{huang2023modeling} models networked confounders by GCN and neighbor interference by GAT~\cite{2017gat}. SPNet balances representations by minimizing the Wasserstein discrepancy between different treatment groups. DWR~\cite{zhao2024learning} models neighbor confounders and neighbor interference by GAT~\cite{2017gat}. CauGramer~\cite{wucausal} use GCN to model interference, while combining Transformer to  discover interfernce from unknown neighbors.

     \vspace{1em}
 \noindent  \textbf{ITE estimators for graph data with addressing networked interference.} GCN-HSIC~\cite{pmlr-v130-ma21c} models networked interference by GCN and balances representations by Hibelt-Schmidt independence criterion (HSIC)~\cite{10.1007/11564089_7}. SAGE-HSIC~\cite{pmlr-v130-ma21c} uses the same representation balancing technology as GCN-HSIC but replaces the GCN of GCN-HSIC with the mean aggregator of GraphSAGE~\cite{hamilton2017inductive}. Scalable individual treatment effect estimator~\cite{LIN2024SITE} (SITE) reduces computation in the aggregation of GCN-HSIC by a pre-aggregation technology for related individuals and balances representations by MMD regularization.  IDENet~\cite{adhikari2025inferring} combines ego MLP, GCN, and a similarity score to model networked interference.  
HyperSCI~\cite{ma2022learning} and HINITE~\cite{lin2023estimating} are designed for more complex graphs: hypergraph and heterogeneous graphs, respectively. HyperSCI and HINITE can be extended to estimate treatment effect from ordinary graphs. In this case, they use GAT~\cite{2017gat} to model networked interference. HyperSCI balances representations by applying Wasserstein discrepancy and HINITE balances representations by applying HSIC regularization.

\section{Compute resources}\label{app:Computeresources}
The compute resources used in our experiments are as follows:
\begin{itemize}
    \item  \textbf{Operating system}: Ubuntu 22.04 LTS.
    \item  \textbf{GPU}: H100 with 80GB GPU memory.
    \item  \textbf{CPU}: AMD EPYC 7343 (16C/32T, 3.2GHz, 128M Cache).
\end{itemize}
\section{Implementation details}\label{app:Implementation}

For the representation learning module, $\boldsymbol{z}^{(0)}_{\bold{S}_i}$ is initialized as a one-dimensional vector with an element value of 1, 
$\boldsymbol{z}^{(0)}_{\bold{X}_i}$ is initialized as $\boldsymbol{x}_i$, and 
$\boldsymbol{z}^{(0)}_{\bold{T}_i}$ is initialized as $t_i$. The importance estimation mechanism $a$ of IPAtt and SPAtt in Equation~(\ref{eq:importance}) is implemented by GAT~\cite{2017gat}  by default for our experiments.  We conducted experiments with the attention mechanism of Transformer~\cite{vaswani2017attention,NEURIPS2021_f1c15925} in Appendix~\ref{appexp:diffatt}.
Following~\citet{2017gat} and \citet{vaswani2017attention}, we use the softmax operation for the normalization operation in Equation~(\ref{eq:importance}). We apply the layer normalization~\cite{ba2016layer} for the proxy module.
We computed the weight $\mathring{\pi}_{\bullet}^{(l)}=\exp(\mathring{\pi}_{\bullet}^{(l)})/(\exp(\pi_{\bullet}^{(l)})+\exp(1-\pi_{\bullet}^{(l)}))$ for $\bullet\in\{\bold{X},\bold{T}\}$. 
For simplicity, we set $\pi_{\eta}=1$ of the message amplifier of GITE. We conducted experiments with a different strategy for $\pi_{\eta}$, i.e., setting $\pi_{\eta}$ as a learnable parameter in Appendix~\ref{appexp:diffMA}. 

Following \citet{kipf2016semi}, \citet{guo2020learning}, \citet{pmlr-v130-ma21c}, \citet{jiang2022estimating}, and \citet{LIN2024SITE}, we consider a transductive setting, i.e., graph structure $\boldsymbol{A}$, covariates $\boldsymbol{X}$, and treatments $\boldsymbol{T}$ were given during the training, validation, and testing phases; Whereas only observed outcomes of individuals in the training dataset were provided for all graph-based methods during training.  Importantly, the proposed methods are not limited to the transductive setting. If researchers want to consider an inductive setting, they can mask covariates of individuals and edges connecting to individuals that are in the validation and test sets on graph during the training phase.

We use the Adam optimizer~\cite{kingma2014adam} with a learning rate of 0.001 and a decay rate of 0.001 to train proposed methods and set the maximum training iterations to 2,000. 
We set the number of layers for the proxy component to 1 for GITE and 3 for GITE$_{\rm{v}}$, for all other components to 3.
We set the dimension of the proxy component to 100 or 300 for GITE and set the dimensions for the proxy component to $(100,200,300)$ for GITE$\rm{v}$. We set the dimension for layers of other components to 100.
We adopted ReLU for the activation function.
We searched for hyperparameters by checking the results on the validation set. Specifically, we searched $\lambda$ from the range of $\{0.001,0.01,0.1,0.2,0.5,1.0\}$, $\beta$ from the range of $\{0.001,0.01,0.1,0.2,0.5,1.0\}$, $\lambda_{D}$ from $\{0.1,0.5,1.0,5.0,10.0\}$. We searched  $\lambda_{P}$ from the range of $\{0.1,0.5,1.0,5.0,10.0\}$ for GITE$_{\rm{v}}$.  
Early stopping and dropout were applied to the proposed methods to avoid overfitting.  

We used default hyperparameters or searched hyperparameters from the ranges suggested in the literature to implement the baseline methods. We also applied early stopping and dropout to the baselines for all datasets to avoid overfitting.

\section{Additional experiments}

\subsection{Additional sensitivity experiments}\label{app:expsen}

We conducted sensitivity experiments with different values of $\lambda_D$ in the range $\{0.1,0.5,1.0,5.0,10.0\}$ for GITE and GITE$_{\rm{v}}$. Results are shown in Figure~\ref{app:addsens}. Results reveal that there is no significant performance change with different values of $\lambda_D$. 

We conducted sensitivity experiments with different values of $\lambda_P$ in the range $\{0.1,0.5,1.0,5.0,10.0\}$ for GITE$_{\rm{v}}$. Results are shown in Figure~\ref{app:addsens}. Results reveal that there is a performance degradation for GITE$_{\rm{v}}$ when setting a large value ($>5$) of $\lambda_P$, which suggests that searching the value of $\lambda_P$ in the range of $(0,5]$.

We conducted sensitivity experiments with different values of dimension for different layers of proxy module in the range $\{100, 300\}$ for GITE. Results are shown in Table~\ref{table:diffdimension}. Results show that both dimensions are reasonable for GITE.

\begin{figure*}[htbp]
	\centering
 \hspace{-10pt}   \subfloat[ AMZ-N,  $\lambda_D$,  $\sqrt{\epsilon_{\textrm{MSE}}}$.]{\includegraphics[width=.24\textwidth]{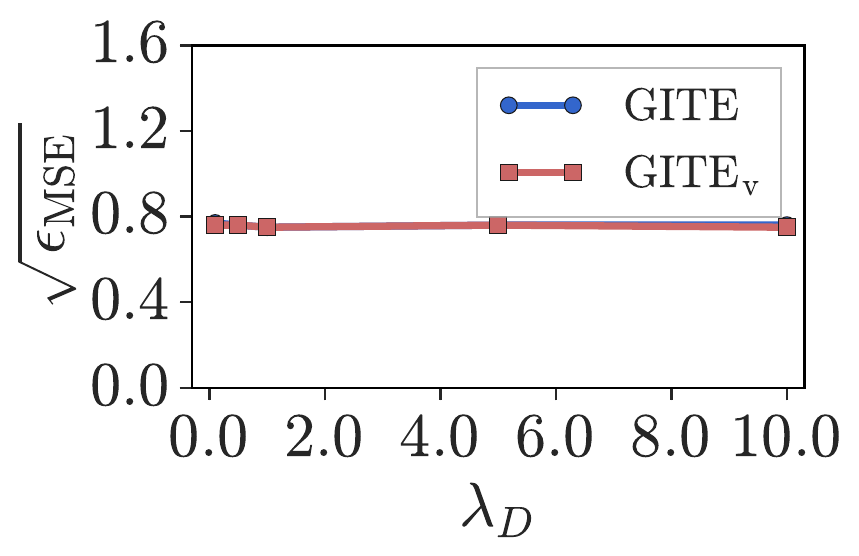}} \hspace{3pt}
	\subfloat[ AMZ-N, $\lambda_D$, $\sqrt{\epsilon_{\textrm{PEHE}}}$.]
 {\includegraphics[width=.24\textwidth]{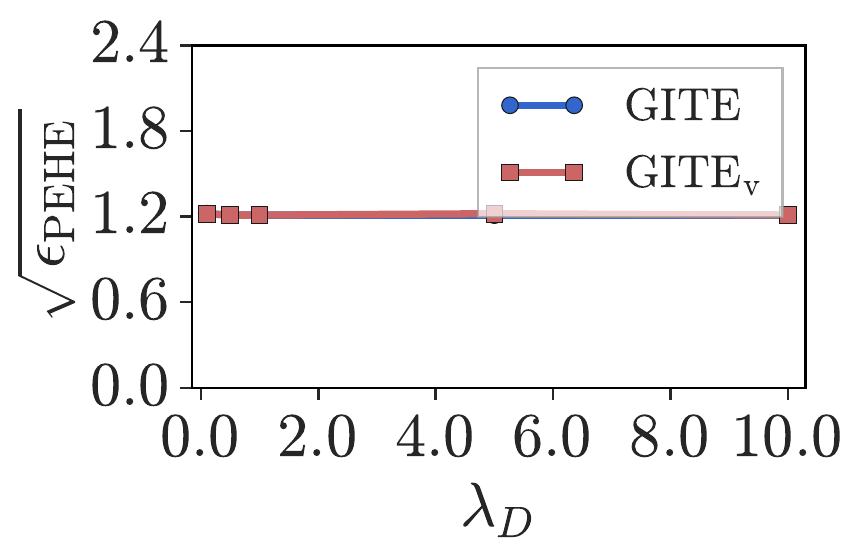}} \hspace{3pt}
 \subfloat[AMZ-N, $\lambda_P$, $\sqrt{\epsilon_{\textrm{MSE}}}$.]{\includegraphics[width=.24\textwidth]{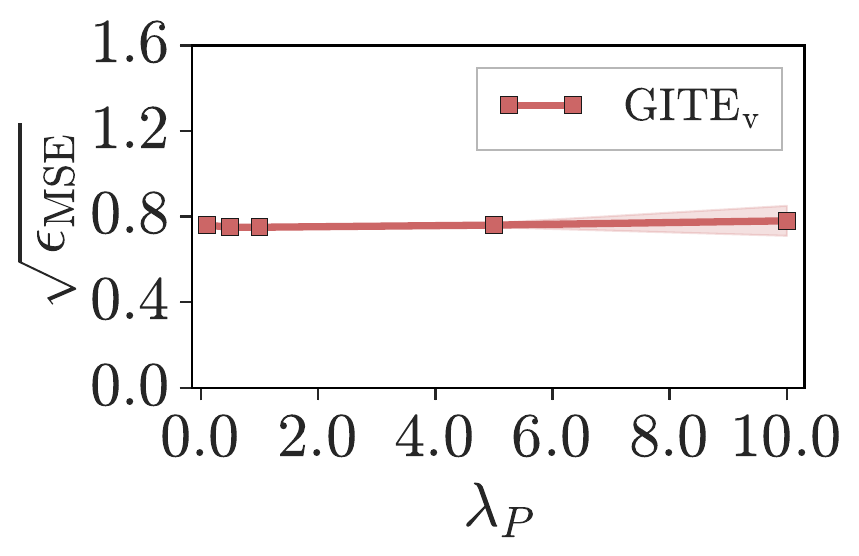}} \hspace{0pt}
	\subfloat[AMZ-N, $\lambda_P$, $\sqrt{\epsilon_{\textrm{PEHE}}}$.]
 {\includegraphics[width=.24\textwidth]{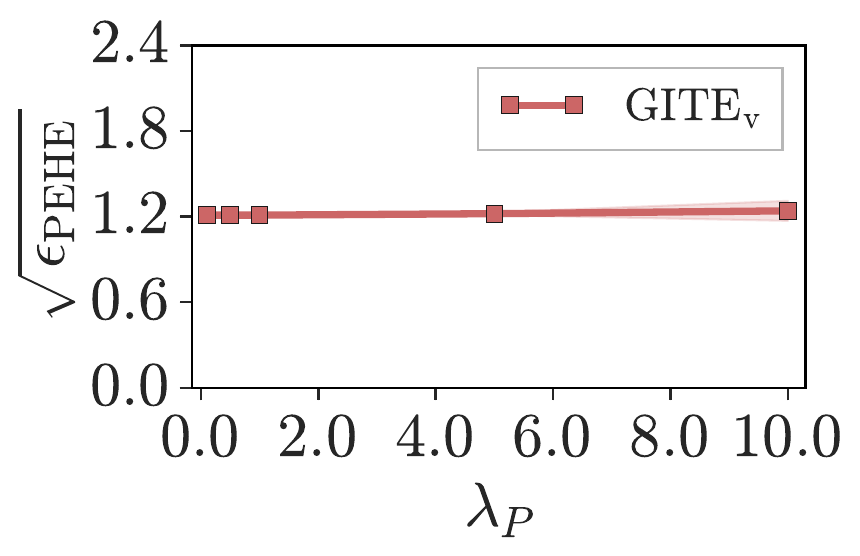}}\\
 \hspace{-8pt}\subfloat[ Flickr,  $\lambda_D$,  $\sqrt{\epsilon_{\textrm{MSE}}}$.]{\includegraphics[width=.24\textwidth]{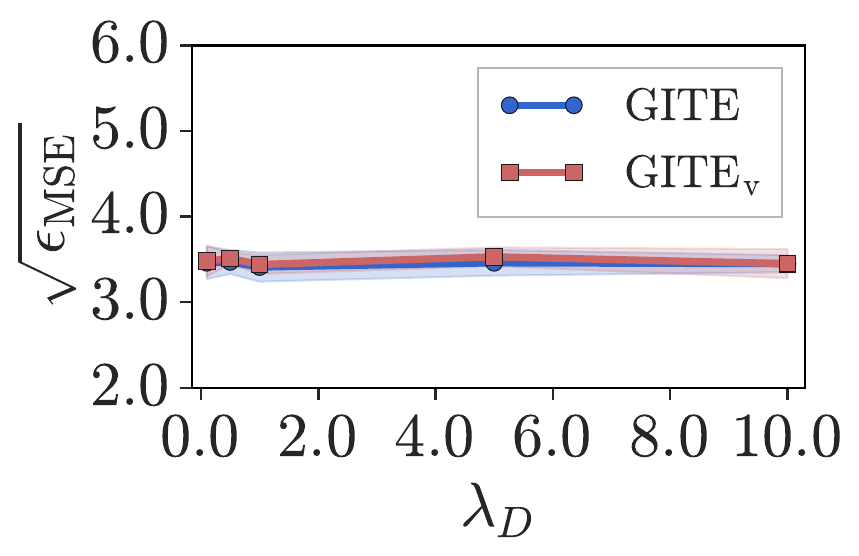}}\hspace{3pt}
	\subfloat[ Flickr, $\lambda_D$, $\sqrt{\epsilon_{\textrm{PEHE}}}$.]
 {\includegraphics[width=.24\textwidth]{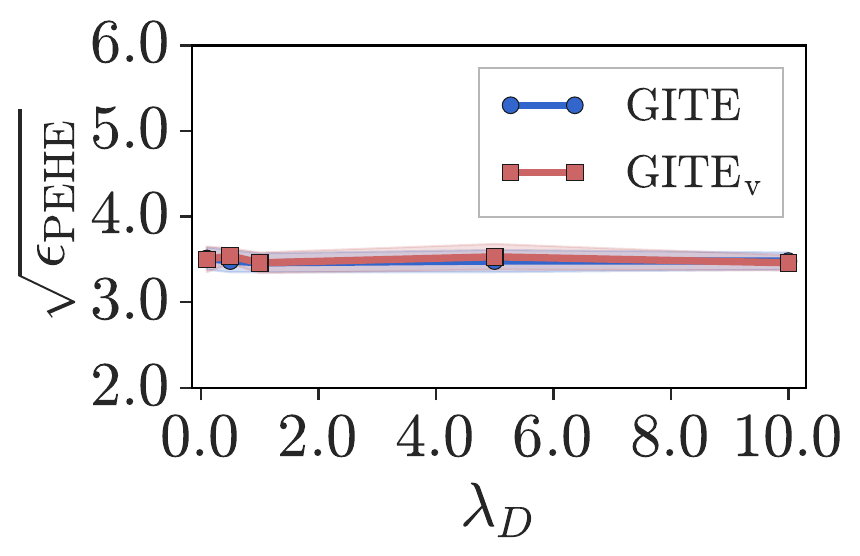}}\hspace{3pt}
 \subfloat[  Flickr, $\lambda_P$, $\sqrt{\epsilon_{\textrm{MSE}}}$.]{\includegraphics[width=.24\textwidth]{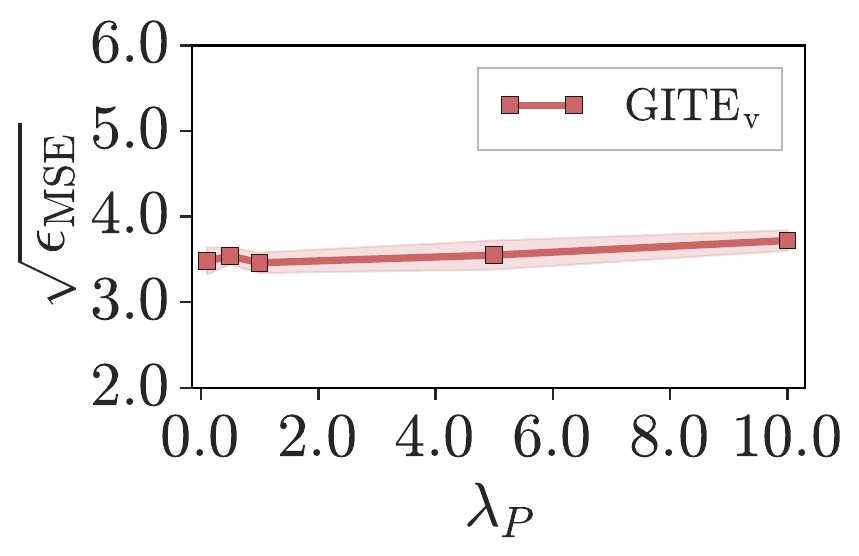}} \hspace{0pt} \subfloat[ Flickr,  $\lambda_P$,  $\sqrt{\epsilon_{\textrm{PEHE}}}$.]{\includegraphics[width=.24\textwidth]{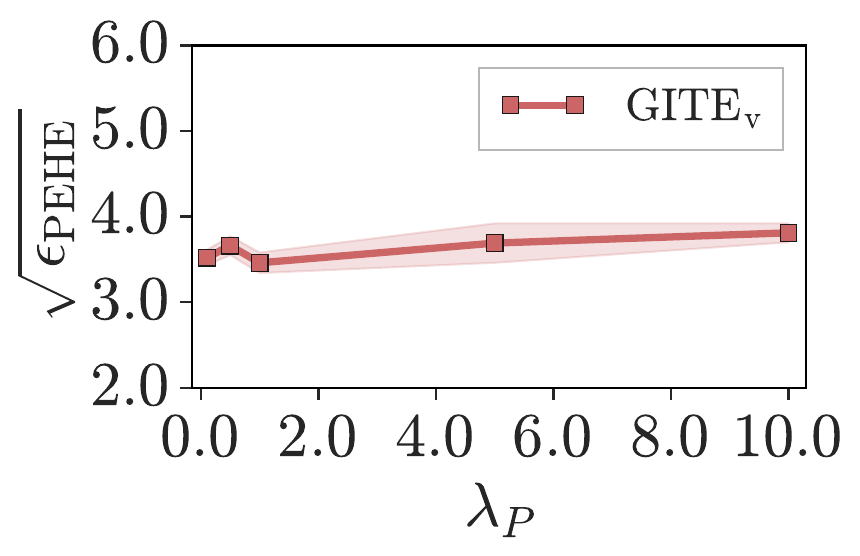}} \\
\hspace{-18pt} \subfloat[Blog,  $\lambda_D$,  $\sqrt{\epsilon_{\textrm{MSE}}}$.]{\includegraphics[width=.25\textwidth]{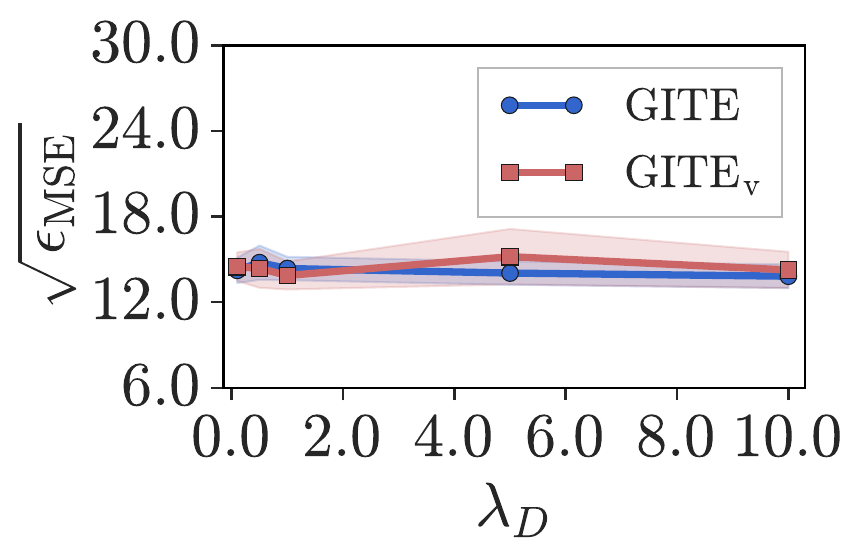}}\hspace{-2pt}
	\subfloat[Blog,  $\lambda_D$,  $\sqrt{\epsilon_{\textrm{PEHE}}}$.]
 {\includegraphics[width=.25\textwidth]{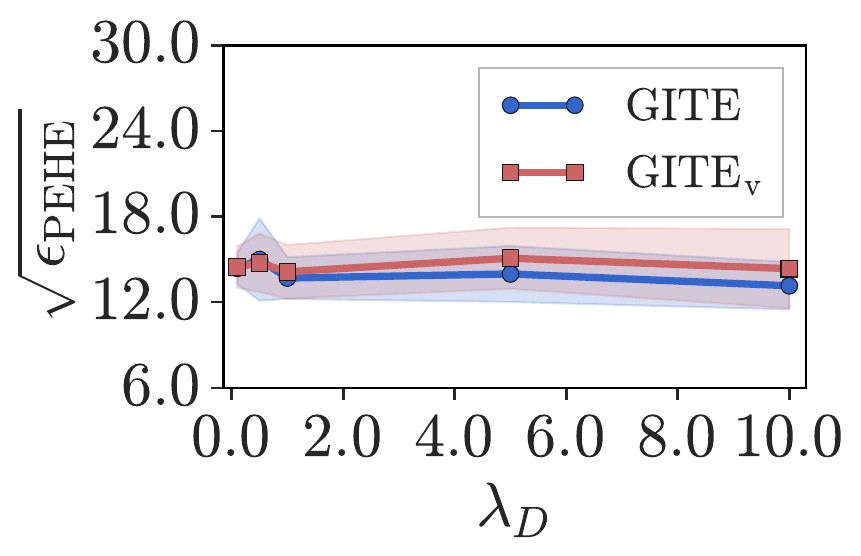}}
 \subfloat[Blog, $\lambda_P$, $\sqrt{\epsilon_{\textrm{MSE}}}$.]{\includegraphics[width=.25\textwidth]{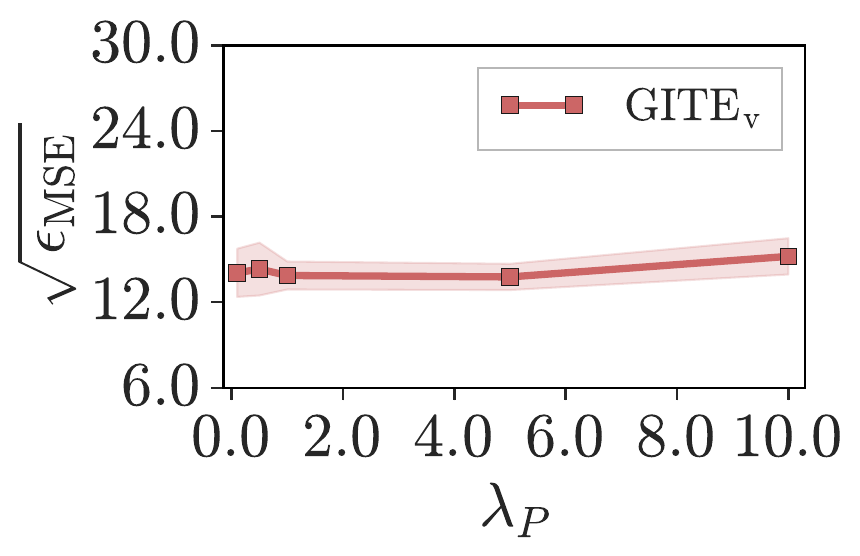}} 
	\subfloat[Blog,  $\lambda_P$, $\sqrt{\epsilon_{\textrm{PEHE}}}$.]
{\includegraphics[width=.25\textwidth]{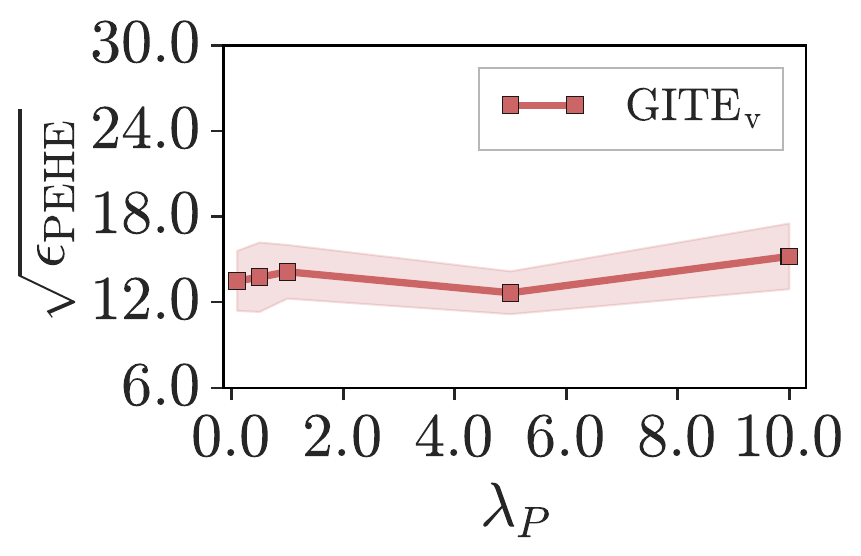}}

\caption{Results (mean and standard errors) of additional sensitivity experiments for hyperparameters $\lambda_D$ and $\lambda_P$. Results are averaged over ten executions.}\label{app:addsens}
\end{figure*}

\begin{table*}
\centering
\caption{Results (mean and standard errors) of experiments with different dimensions of different layers on the test sets.  
Results are averaged over ten executions. 
}
\vspace{0.0cm}
\begin{tabular}{c|ccccccccccc}
\hline
  & \multicolumn{2}{c}{AMZ-N}       
  & \multicolumn{2}{c}{Flickr}& \multicolumn{2}{c}{Blog}\\
Dimension & $\sqrt{\epsilon_{\textrm{MSE}}}$ & $\sqrt{\epsilon_{\textrm{PEHE}}}$    & 
$\sqrt{\epsilon_{\textrm{MSE}}}$ & $\sqrt{\epsilon_{\textrm{PEHE}}}$    & 
$\sqrt{\epsilon_{\textrm{MSE}}}$ & $\sqrt{\epsilon_{\textrm{PEHE}}}$     
\\ \hline \hline
100 & \textbf{0.75 ± 0.01}   & \textbf{1.21 ± 0.01} 
&
3.45 ± 0.12& 3.49 ± 0.20
& 13.99 ± 1.09& \textbf{13.33 ± 1.46}
\\
300  & 0.76 ± 0.01   & \textbf{1.21 ± 0.01} 
& 3.41 ± 0.17& \textbf{3.46 ± 0.11}
& 14.36 ± 0.82& 13.69 ± 1.46
\\\hline
\end{tabular}
\label{table:diffdimension}
\end{table*}

\subsection{RQ~4: is the training time of the proposed methods reasonable}\label{app:trainingtime}
This section investigates the answer to RQ~4. Experimental results are presented in Table~\ref{table:trainingtime}. Results show that the training time for the proposed methods is reasonable. 

Training efficiency might be further improved by applying acceleration technologies for GNN and partial attention mechanisms, such as neighbor sampling~\cite {hamilton2017inductive}. We discuss the time complexity of the NIM layer in Appendix~\ref{app:timecomplex}.  However, such techniques often involve a trade-off between efficiency and performance. Therefore, accelerating the implementation of the NIM layer while preserving the performance of ITE estimation with DNE can be a promising future research direction.
\begin{table}
\centering
\caption{Training time (in minutes) of GITE and GITE$_{\text{v}}$  for one execution.  
}\vspace{0.0cm}\begin{tabular}{l|ccccccccccc}\hline  Method& \multicolumn{1}{c}{AMZ-N}       & \multicolumn{1}{c}{Flickr}& \multicolumn{1}{c}{Blog} \\\hline \hline GITE  & 7.7  & 1.5 & 0.7 
\\GITE$_{\text{v}}$  & 7.9  & 1.5 & 0.8 \\\hline\end{tabular}\label{table:trainingtime}
\vspace{-5pt}
\end{table}

\subsection{RQ~5: can the IPAtt and SPAtt mechanisms be implemented with a different technology other than GAT}\label{appexp:diffatt}
This section investigates the answer to RQ~5.
In Table~\ref{table5:diffatt}, we compared performance for GITE with two different attention mechanisms: GAT~\cite{2017gat} and the attention mechanism of Transformer based on query and key vectors (QK-based AT)~\cite{vaswani2017attention,NEURIPS2021_f1c15925}. Results indicate that the proposed IPAtt and SPAtt mechanisms can achieve comparable performance with different attention mechanisms. This reveals that the IPAtt and SPAtt mechanisms can be implemented using an attention mechanism other than GAT, and do not rely on a specific attention mechanism.
\begin{table*}
\centering
\caption{Results (mean and standard errors) of experiments with different attention mechanisms on the test sets.  
Results are averaged over ten executions.
GITE (GAT) represents that GITE applies GAT for the IPAtt and SPAtt mechanisms, whereas GITE (QK-AT) represents that GITE applies QK-based attention mechanism for the IPAtt and SPAtt mechanisms.
}
\vspace{0.0cm}
\begin{tabular}{l|ccccccccccc}
\hline
  & \multicolumn{2}{c}{AMZ-N}       
  & \multicolumn{2}{c}{Flickr}& \multicolumn{2}{c}{Blog}\\
Method & $\sqrt{\epsilon_{\textrm{MSE}}}$ & $\sqrt{\epsilon_{\textrm{PEHE}}}$    & 
$\sqrt{\epsilon_{\textrm{MSE}}}$ & $\sqrt{\epsilon_{\textrm{PEHE}}}$    & 
$\sqrt{\epsilon_{\textrm{MSE}}}$ & $\sqrt{\epsilon_{\textrm{PEHE}}}$      
\\ \hline \hline
GITE (GAT) & \textbf{0.75 ± 0.01}   & \textbf{1.21 ± 0.01} 
&
3.41 ± 0.17& \textbf{3.46 ± 0.11}
& 14.36 ± 0.82& \textbf{13.69 ± 1.46}
\\
GITE (QK-AT)  & 0.76 ± 0.01   & \textbf{1.21 ± 0.01} 
&
\textbf{3.38 ± 0.18}& 3.51 ± 0.05& 
\textbf{13.58 ± 0.77}& 13.92 ± 1.79
\\\hline
\end{tabular}
\label{table5:diffatt}
\end{table*}
\begin{table*}
\centering
\caption{Results (mean and standard errors) of experiments with different strategies for the hyperparameter $\pi_{\eta}$ on the test sets.  
Results are averaged over ten executions.
$\pi_{\eta} = 1$ represents that GITE sets $\pi_{\eta}$ as a hyperparameter with a fixed value of 1, whereas Auto represents that GITE sets $\pi_{\eta}$ as a learnable parameter with an initial value of 1.
}
\vspace{0.0cm}
\begin{tabular}{l|ccccccccccc}
\hline
  & \multicolumn{2}{c}{AMZ-N}       
  & \multicolumn{2}{c}{Flickr}& \multicolumn{2}{c}{Blog}\\
Method & $\sqrt{\epsilon_{\textrm{MSE}}}$ & $\sqrt{\epsilon_{\textrm{PEHE}}}$    & 
$\sqrt{\epsilon_{\textrm{MSE}}}$ & $\sqrt{\epsilon_{\textrm{PEHE}}}$    & 
$\sqrt{\epsilon_{\textrm{MSE}}}$ & $\sqrt{\epsilon_{\textrm{PEHE}}}$     
\\ \hline \hline
$\pi_{\eta} = 1$ & \textbf{0.75 ± 0.01}   & \textbf{1.21 ± 0.01} 
&
3.41 ± 0.17& 3.46 ± 0.11
& 14.36 ± 0.82& 13.69 ± 1.46
\\
\text{Auto}  & \textbf{0.75 ± 0.01}   & \textbf{1.21 ± 0.01} 
&
\textbf{3.40 ± 0.14}& \textbf{3.43 ± 0.08}& 
\textbf{14.20 ± 1.77}& \textbf{13.32 ± 1.96}
\\\hline
\end{tabular}
\label{table6:diffMA}
\end{table*}
\subsection{RQ~6: can the message amplifier be implemented using different strategies}\label{appexp:diffMA}
This section investigates the answer to RQ~6.
As introduced in Equation~(\ref{eq:MAP}), we have two strategies for $\pi_{\eta}$ of the message amplifier: (I) setting it as a hyperparameter and (II) setting it as an adaptive learnable parameter.  In Table~\ref{table6:diffMA}, we conducted experiments to compare the performance between different strategies for $\pi_{\eta}$. Results indicate that different strategies can achieve comparable performance in outcome prediction and ITE estimation, but applying the strategy of learnable $\pi_{\eta}$ can sightly improve the performance of GITE.

\section{Impact}\label{app:Impact}
This paper proposes GITE, a method for estimating treatment effects from observational graph data. Although randomized controlled trials (RCTs) remain the gold standard for estimating individual treatment effects, they are often costly and time-consuming. In contrast, observational graph data offers a low-cost alternative. As such, the proposed approach holds potential for applications in various domains, including decision-making in commerce and medicine.

\section{Future work}\label{app:futurework}
We introduce four research directions that are promising for future exploration based on this study. First, extending ITE estimation from more convoluted graphs in the presence of DNE, such as hypergraphs~\cite {ma2022learning} and heterogeneous graphs~\cite{lin2023estimating}. In such convoluted graphs, DNE is more complex compared to ordinary graphs. Second, applying the proposed GITE to various areas for making reasonable decisions, such as  medicine~\cite{chang2023estimating,ma2022assessing,schnitzer2022estimands} and commerce~\cite{nabi2022causal}, as well as task-specific applications, such as recommendation systems~\cite {li2023tdrcl,li2023stabledr}.
Third, as discussed in Section~\ref{app:trainingtime}, although GITE exhibits reasonable training time, developing more efficient implementations of the NIM layer without performance degradation is a valuable direction. Finally, with  severe violations of the overlap assumption, our methods face limitations, which is a  constraint and a promising future direction.

\end{document}